
\documentclass{article}

\usepackage{microtype}
\usepackage{graphicx}
\usepackage{subcaption}
\usepackage{booktabs} 

\usepackage{hyperref}




\usepackage[accepted]{icml2026}

\usepackage{amsmath}
\usepackage{amssymb}
\usepackage{mathtools}
\usepackage{amsthm}

\usepackage[capitalize,noabbrev]{cleveref}

\theoremstyle{plain}
\newtheorem{theorem}{Theorem}[section]

\newtheorem{lemma}[theorem]{Lemma}

\theoremstyle{definition}

\newtheorem{assumption}[theorem]{Assumption}
\theoremstyle{remark}


\usepackage{subcaption}
\usepackage{booktabs}
\usepackage{amssymb}
\usepackage{bbding}
\usepackage{amsmath}
\usepackage{multirow}
\usepackage{xspace}
\newcommand{\name}{\texttt{FOAM}\xspace}
\newcommand{\namec}{\texttt{FOAM-Mini}\xspace}

\usepackage{color, colortbl}
\definecolor{blue1}{rgb}{0.6, 0.6, 0.6}
\definecolor{blue2}{rgb}{0.75, 0.8, 0.99}
\definecolor{blue3}{rgb}{0.84,0.89,0.99}
\definecolor{blue4}{rgb}{0.93,0.95,0.99}

\usepackage[textsize=tiny]{todonotes}

\icmltitlerunning{FOAM: Blocked State Folding for Memory-Efficient LLM Training}

\begin{document}

\twocolumn[
  \icmltitle{FOAM: Blocked State Folding for Memory-Efficient LLM Training}



  \icmlsetsymbol{corr}{*}

  \begin{icmlauthorlist}
    \icmlauthor{Ziqing Wen}{nudt}
    \icmlauthor{Jiahuan Wang}{nudt}
    \icmlauthor{Ping Luo}{nudt}
    \icmlauthor{Dongsheng Li}{nudt}
    \icmlauthor{Tao Sun}{nudt,corr}
  \end{icmlauthorlist}

  \icmlaffiliation{nudt}{National Key Laboratory of Parallel and Distributed Computing, College of Computer Science and Technology, National University of Defense Technology, Changsha, Hunan, China}
  
  \icmlcorrespondingauthor{Tao Sun}{suntao.saltfish@outlook.com}

  \icmlkeywords{Machine Learning, ICML}

  \vskip 0.3in
]



\printAffiliationsAndNotice{}  

\begin{abstract}
Large language models (LLMs) have demonstrated remarkable performance due to their large parameter counts and extensive training data. However, their scale leads to significant memory bottlenecks during training, especially when using memory-intensive optimizers like Adam. Existing memory-efficient approaches often rely on techniques such as singular value decomposition (SVD), projections, or weight freezing, which can introduce substantial computational overhead, require additional memory for projections, or degrade model performance. In this paper, we propose Folded Optimizer with Approximate Moment (\name), a method that compresses optimizer states by computing block-wise gradient means and incorporates a residual correction to recover lost information. Theoretically, \name achieves convergence rates equivalent to vanilla Adam under standard non-convex optimization settings. Empirically, \name eliminates up to 90\% of the memory overhead of optimizer states and accelerates convergence. Furthermore, \name is compatible with other memory-efficient optimizers, delivering performance and throughput that match or surpass both full-rank and existing memory-efficient baselines. Code is available at \url{https://github.com/zqOuO/FOAM}.

\end{abstract}

\section{Introduction}

Large language models (LLMs) have advanced rapidly in recent years and achieved impressive results across a wide range of tasks \cite{Touvron2023Llama2O}. This success is primarily driven by vast training datasets and large model sizes~\cite{zhu2024apollosgdlikememoryadamwlevel_apollo}. As a result, Adam(W)~\cite{Kingma2014AdamAM, loshchilov2017adamw} has become the de facto optimizer for LLM training due to its training efficiency. However, Adam introduces significant memory overhead, consuming twice the model size for storing optimizer states, which makes LLM pre-training and fine-tuning not only compute-intensive but also memory-bound. For instance, even with extremely small training batch sizes, pre-training a 7B model in BF16 still requires at least 58GB of memory \cite{zhao2024galore}. For larger models like GPT-3~\cite{Brown2020LanguageMAgpt3}, the requirement may exceed 700GB.

\begin{figure}[!t]
    \centering
    \includegraphics[width=0.9\linewidth]{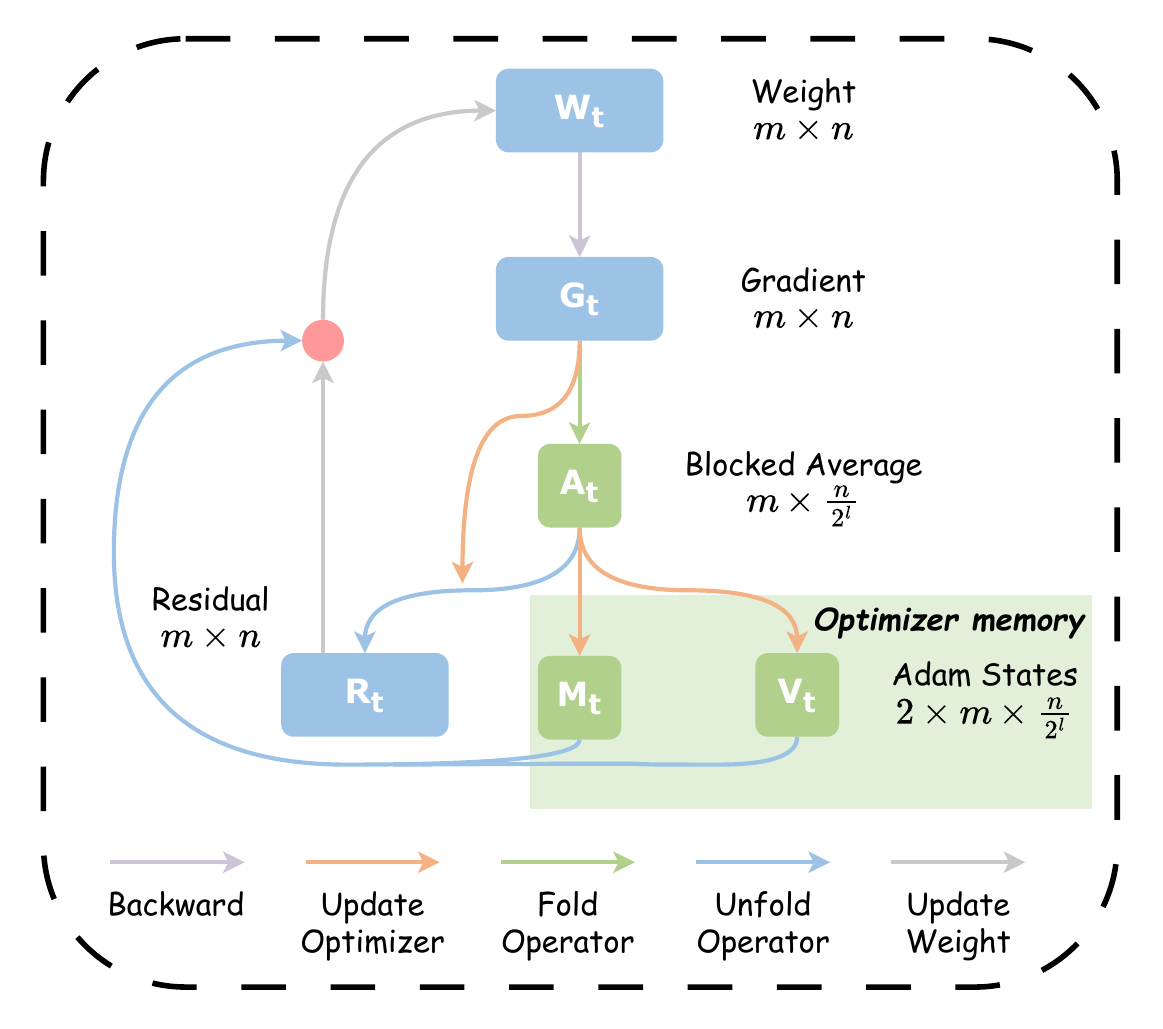}
    \caption{Overview of \name optimizer with a fold level of $l$.}
    \label{fig:placeholder}
\end{figure}

To address Adam’s memory bottleneck, recent work has focused on algorithmic innovations that optimize the optimizer itself rather than relying solely on hardware or system-level improvements. Unlike system-level approaches—such as checkpointing or quantization~\cite{chen2016training,dettmers20218bit,zhang2024qgalore}—which trade off performance or speed for memory savings, algorithmic methods aim to reduce optimizer overhead while preserving full-parameter updates and maintaining Adam’s convergence properties. These approaches can be broadly categorized into three classes: (i) reducing the number of trainable parameters by freezing weights, (ii) applying low-rank projections to compress Adam’s optimizer states, and (iii) sharing learning rates across blocks to eliminate the need for storing per-parameter second-moment estimates. Representative methods include LoRA~\cite{hu2021lora}, GaLore~\cite{zhao2024galore}, and Adam-Mini~\cite{zhang2024adammini}.

\begin{figure*}[!th]
    \centering
    \begin{subfigure}[b]{0.33\linewidth}
        \centering
        \includegraphics[width=\linewidth]{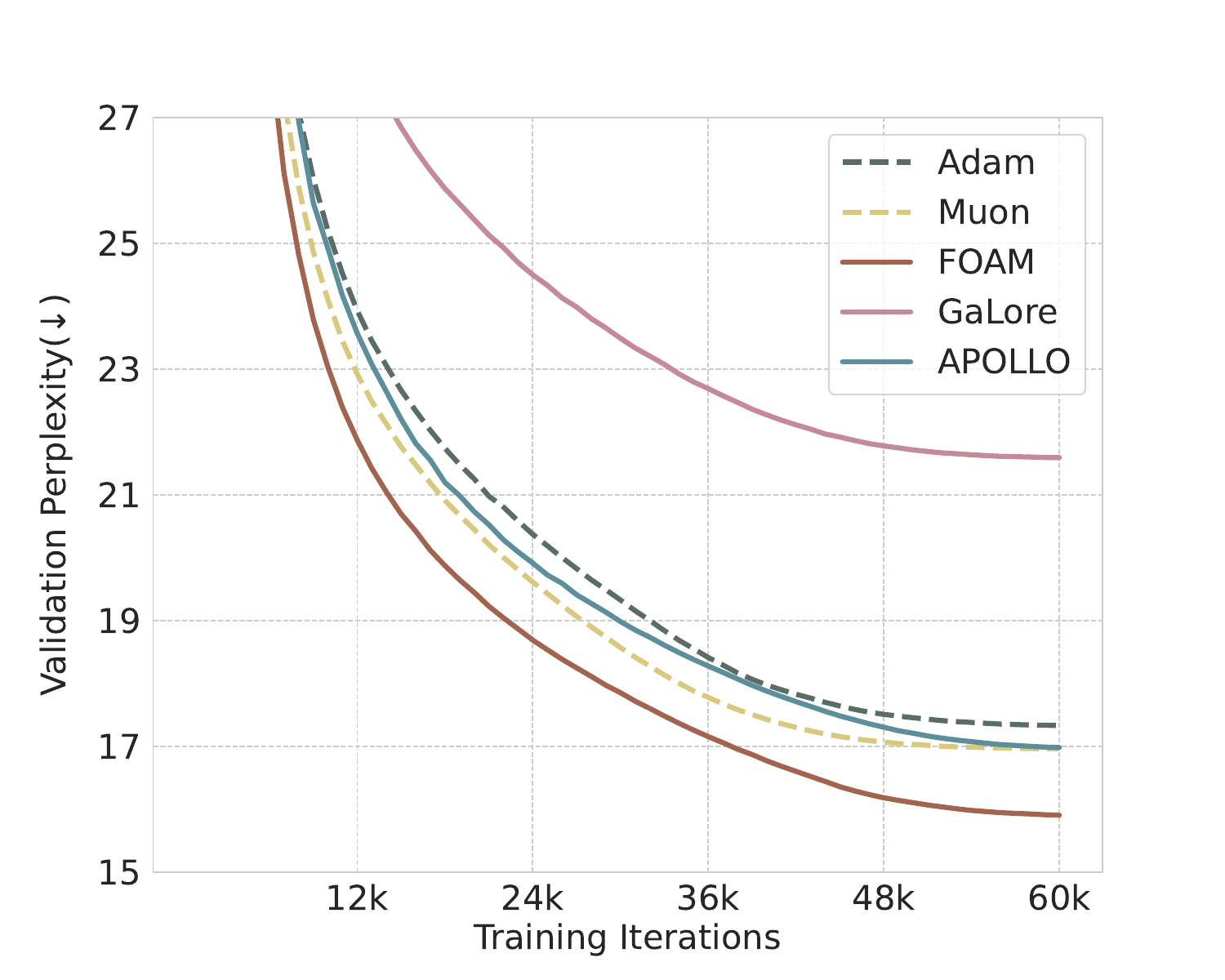}
        \caption{350M model pre-training}
        \label{fig:llama_350m_ppl}
    \end{subfigure}
    \hfill
    \begin{subfigure}[b]{0.33\linewidth}
        \centering
        \includegraphics[width=\linewidth]{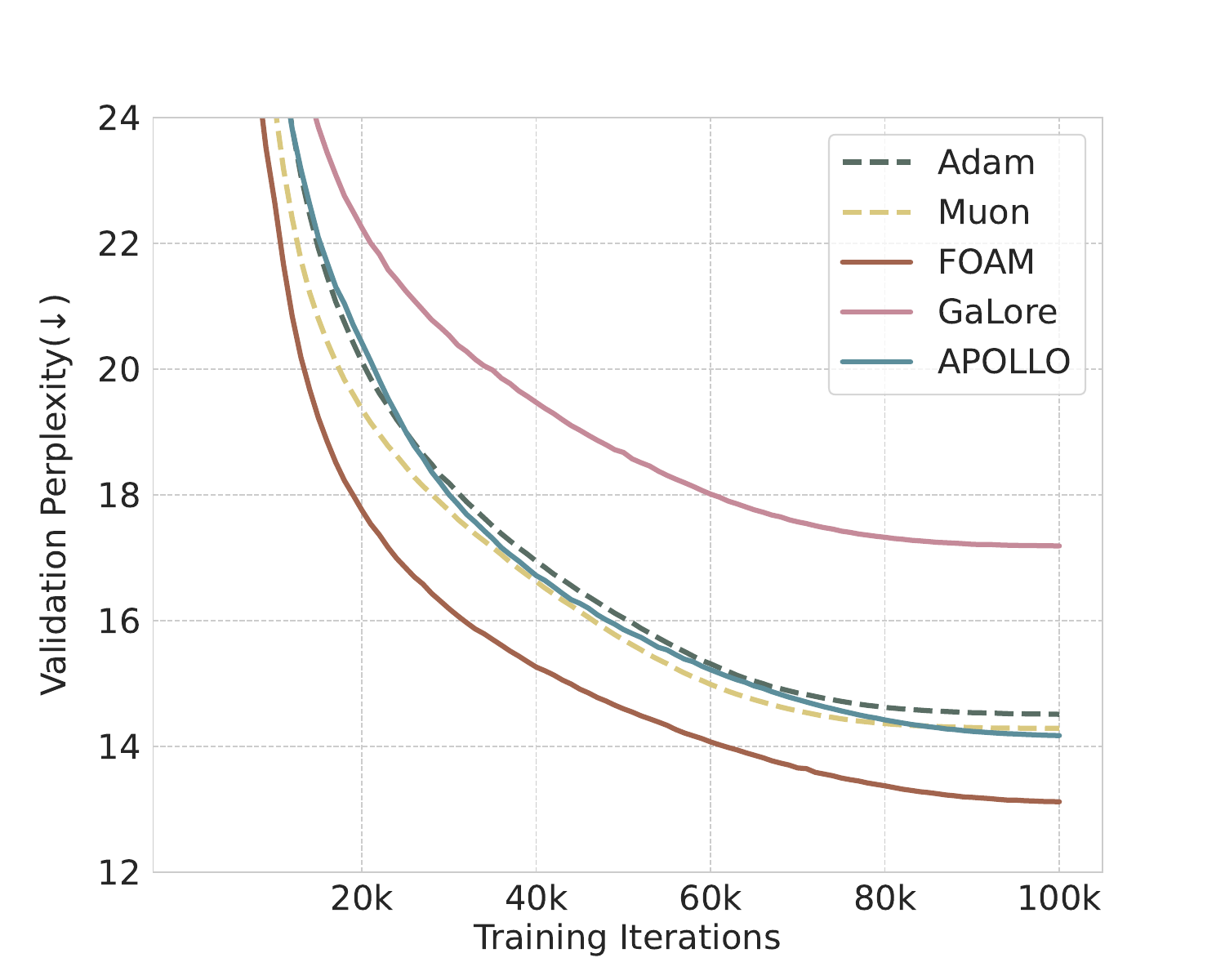}
        \caption{1.3B model pre-training}
        \label{fig:llama_1b_ppl}
    \end{subfigure}
    \hfill
    \begin{subfigure}[b]{0.33\linewidth}
        \centering
        \includegraphics[width=\linewidth, height=0.70\textwidth,trim=0 24 0 20, clip]{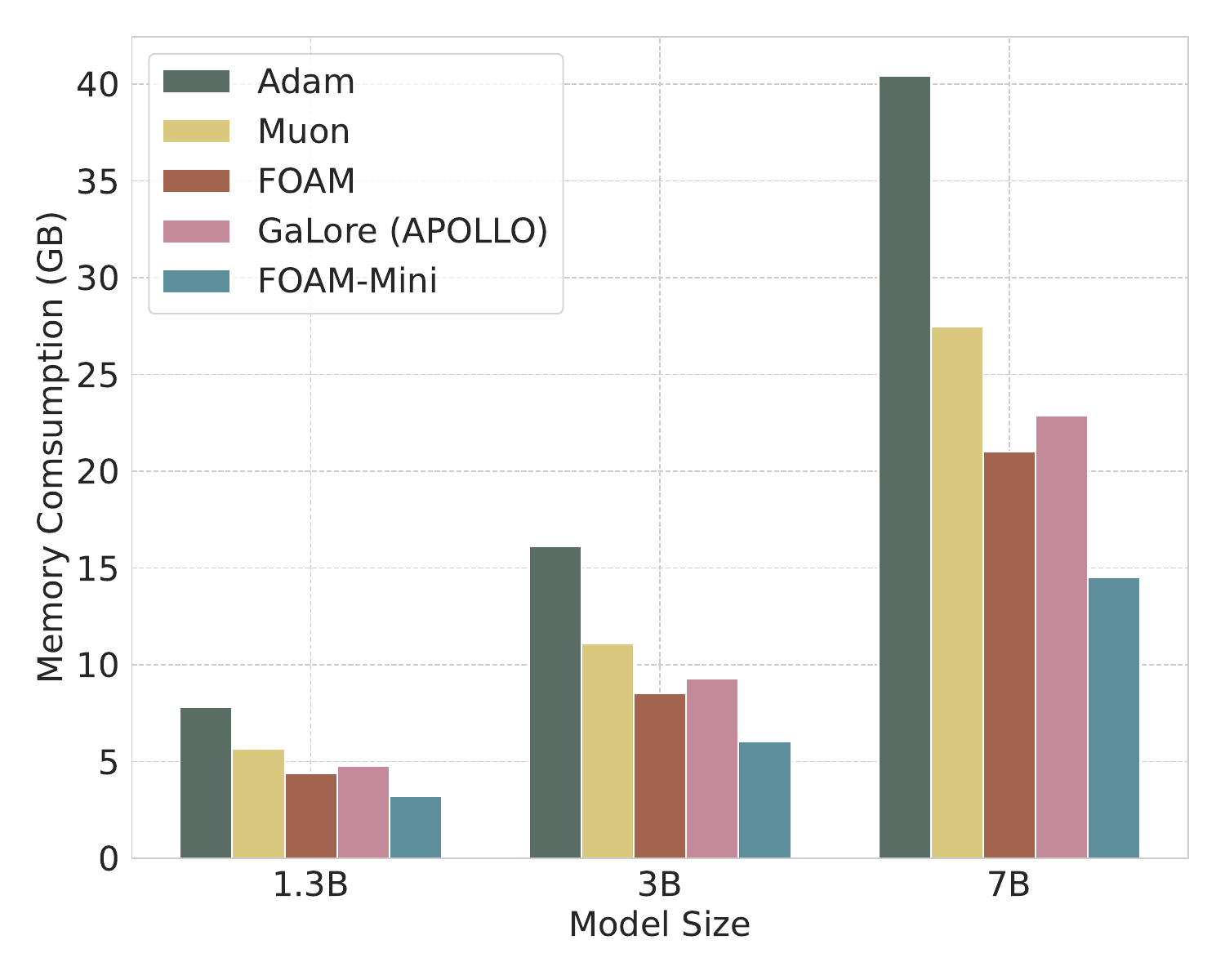}
        \caption{End-to-end memory footprint (BF16)}
        \label{fig:llama_350m_1b_7b_memory}
    \end{subfigure}
    \caption{\textbf{\name performance preview on LLM pre-training.} Figure~\textit{(a)} and ~\textit{(b)}: Perplexity learning curves for pre-training LLaMA-350M and 1.3B on C4. \name demonstrates superior validation perplexity compared with other baselines. Figure~\textit{(c)}: Memory footprint (BF16, model, gradient, and optimizer) for pre-training LLaMA models. \name achieves an approximate 50\% reduction in memory consumption, and \namec further pushes the limit by almost eliminating the memory overhead associated with optimizer states.}
    \label{fig:main}
\end{figure*}

LoRA reduces memory consumption by freezing the original pre-trained weights and learning low-rank decompositions of their updates. This design makes it particularly appealing for fine-tuning scenarios with constrained memory resources. However, by restricting parameter updates to a low-rank subspace, LoRA can limit model expressiveness and occasionally degrade performance on certain tasks~\cite{zhang2023lorafa, xia2024chainlora}.  

GaLore alleviates this limitation by employing singular value decomposition (SVD) to construct projection matrices that compress Adam’s first- and second-moment estimates. While effective, the repeated computation of SVD introduces significant computational overhead, reducing training efficiency. APOLLO~\cite{zhu2024apollosgdlikememoryadamwlevel_apollo} mitigates this by replacing SVD with random projections. GWT~\cite{wen2025breakingmemorylimitsgradient} employs wavelet transforms to compress gradients, bypassing the need for SVD. However, under stringent memory constraints, it still causes a substantial reduction in training throughput and a marked performance decline. Adam-Mini~\cite{zhang2024adammini} reduces memory usage by sharing second-moment estimates across blocks, eliminating the need for per-parameter second-moment storage. This design caps Adam-mini’s achievable optimizer memory efficiency at roughly 50\% but limits further reductions in memory overhead for the first-moment estimates.

To overcome these limitations, we propose \textbf{\name (Folded Optimizer with Approximate Moment)}, a structured state-folding framework that achieves efficient memory compression without sacrificing optimization fidelity. \name compresses both moments of Adam by \emph{blocked average}, which preserves structural information at a coarser level, and incorporates a lightweight \emph{residual correction} to recover the information lost during compression, ensuring that updates remain differentiated across parameters. Unlike previous approaches that rely on weight freezing, low-rank projections, or block-shared learning rates, \name supports full-parameter optimization, eliminates the need for projection matrices, simultaneously compresses Adam’s first- and second-moment estimates, and is highly effective even under stringent memory constraints. Our main contributions are listed as follows:

\begin{itemize}
    \item \textbf{Innovative Optimizer Design}: We propose \name, an efficient memory compression method based on blocked averaging and residual correction. Compared with Full-Adam, \name reduces the end-to-end memory footprint up to 50\% and the memory overhead of the optimizer states by 90\%. Moreover, \name can be applied to other memory-intensive optimizers such as Muon \cite{jordan2024muon} and Adam-mini \cite{zhang2024adammini}.
    \item \textbf{Theoretical Guarantees}: Within the conventional non-convex optimization framework, we prove that \name possesses the same convergence properties as Adam, theoretically ensuring its effectiveness in modern deep learning training tasks. 
    \item \textbf{Extensive Evaluation}: We evaluate \name on LLM pre-training (LLaMA, Qwen, GPT, DeBERTa models) and fine-tuning (GLUE, MMLU benchmarks) across model sizes (60M–7B) and sequence lengths (256–2048). \name outperforms memory-efficient baselines in perplexity and convergence speed without incurring a large computational budget. This makes \name\ a practical, highly efficient, and optimizer-agnostic solution for alleviating the memory bottlenecks in large-scale language model training.
\end{itemize}

\section{Related Works}

\begin{table*}[!th]
    \centering
    \caption{\textbf{Comparison of memory and computational complexity across methods.} Let $W \in \mathbb{R}^{m\times n}$ with $m \leq n$. GaLore and APOLLO use a rank $r$, whereas \name uses a level $l$.}
    \setlength{\tabcolsep}{10pt}
    \resizebox{\linewidth}{!}{
    {\begin{tabular}{l|ccccccc}
    \toprule
      & Adam   &Muon & Adam-Mini & GaLore & APOLLO & \cellcolor{blue4}\name & \cellcolor{blue2}\namec \\
      \midrule
      Optimizer Memory  & $2mn$ & $mn$ & $mn$ & $mr+2nr$ & $mr+2nr$ & \cellcolor{blue4}$mn/2^{l-1}$ & \cellcolor{blue2}$2m$\\   
      Complexity & $\mathcal{O}(mn)$ & $\mathcal{O}(m^2n)$ & $\mathcal{O}(mn)$ & $\mathcal{O}(m^2n)$ & $\mathcal{O}(mnr)$ & \cellcolor{blue4}$\mathcal{O}(mn)$ & \cellcolor{blue2}$\mathcal{O}(mn)$\\
    \bottomrule
    \end{tabular}}
    }
    \label{tab:memory_comp}
    
\end{table*}

Recent research has explored various algorithmic strategies to reduce the memory overhead of LLM training. Our work follows this line of inquiry, aiming to enhance memory efficiency through optimizer and state-level innovations.  

A prominent direction focuses on reducing the number of trainable parameters. LoRA~\cite{hu2021lora} achieves this by freezing the pre-trained model weights and learning low-rank decompositions of their updates, substantially lowering memory usage during fine-tuning. The approach has inspired numerous extensions and refinements, including Batched LoRA~\cite{Wen2023BatchedLorA}, ReLoRA~\cite{Lialin2023ReLoRAHT}, FLORA~\cite{hao2024flora}, DoRA~\cite{liu2024dora}, and BlockedLLM~\cite{ramesh2024blockllm}.  

Another major line of work targets the substantial memory footprint of optimizer states, particularly in Adam. These methods project gradients and perform updates within a low-dimensional subspace, thereby reducing the need to store high-dimensional moment estimates. Representative examples include Adafactor~\cite{Shazeer2018AdafactorAL}, GaLore~\cite{zhao2024galore}, Fira~\cite{Chen2024FiraCW}, GWT~\cite{wen2025breakingmemorylimitsgradient}, APOLLO~\cite{zhu2024apollosgdlikememoryadamwlevel_apollo}, LDAdam~\cite{robert2025ldadam}, and SubTrack++~\cite{rajabi2025subtrack++}. Both Fira, GWT, APOLLO, LDAdam, and SubTrack++ have demonstrated competitive or superior performance compared to full-rank Adam, underscoring the effectiveness of subspace-based optimization.  

Other approaches, such as Adam-Mini~\cite{zhang2024adammini} and SGDSal~\cite{xu2024adamlearningratescalingsgdsal}, reduce memory consumption by employing learning rate sharing schemes. AdamS~\cite{zhang2025adams} leverages momentum to introduce a normalizer. Meanwhile, Muon~\cite{jordan2024muon, liu2025muon} and Scion~\cite{pethick2025trainingscion} incorporate orthogonalized gradients, enabling SGD-style updates.


\section{Motivation and Algorithm}
In this section, we first present the detailed procedure for the Adam(W) \cite{Kingma2014AdamAM, loshchilov2017adamw}, followed by introducing our \textbf{Folded Optimizer with Approximate Moments (\name)}.

\subsection{Full-Adam} 
At each time step $t$, the Adam optimizer updates the weight matrix $W \in \mathbb{R}^{m \times n}$ by leveraging the first-order moment and the second-order moment by  
\begin{equation} \label{eq:adam_updates} 
W_{t+1} = W_t - \eta_t \cdot \frac{M_{t}}{\sqrt{V_t + \epsilon}} , 
\end{equation} 
where $\eta_{t} > 0$ denotes the step size (i.e., the learning rate) at time step $t$, and $\cdot$ denotes element-wise multiplication. Here, $M_t,V_t \in \mathbb{R}^{m \times n},$ denote the first and second-order moment, and $\epsilon > 0$ for numerical stability. The moments $M_t$ and $V_t$ are updated by
\begin{equation} \label{eq:adam_moment_update} 
\begin{aligned}
    &M_t = \beta_1 \cdot M_{t-1} + (1 - \beta_1) \cdot G_t, \\ &V_t = \beta_2 \cdot V_{t-1} + (1 - \beta_2) \cdot G_t^{2},
\end{aligned} 
\end{equation} 
where $G_t\in \mathbb{R}^{m\times n}$ denotes the stochastic gradient at step $t$, and $\beta_1, \beta_2 \in [0, 1)$ are the decay rates. Adam stores both $M_{t}$ and $V_{t}$ to implement its adaptive update scheme. This results in an additional memory overhead of $2 \times m \times n$ elements for the optimizer states.

\subsection{\name}
The strong performance of the Adam optimizer stems from
the first- and second-moment estimates, but this comes at the cost of twice the memory overhead. To address this challenge, we propose \name, which compresses both moments by partitioning adjacent gradient entries into blocks and sharing optimizer states within each block. In addition, \name introduces a residual correction mechanism to recover information lost due to compression. Specifically, given a gradient $G_{t}\in \mathbb{R}^{m\times n}$ at time step $t$, and fold-level $l$, \name replaces each group of $2^l$ consecutive elements with their mean value for storage (we assume $n\bmod 2^l=0$, which can be ensured via padding). Specifically, suppose the fold operator matrix $A^{(l)} \in \mathbb{R}^{n\times \frac{n}{2^l}}$ satisfies:
\begin{equation}\label{eq:define_A}
    {A^{(l)}_{i,j}}= \begin{cases}
        \frac{1}{2^{l}}, & \text{if}\ (j-1) \cdot 2^{l}+1\leq i \leq j \cdot 2^{l} \\
        0, & \text{others}
    \end{cases}
\end{equation}
we can obtain the compressed gradient $\tilde{G}_{t}=G_{t}A^{(l)} \in \mathbb{R}^{m\times \frac{n}{2^l}}$, and track the first- and second- moment by
\begin{equation}
    \begin{aligned}
        \label{eq:m_v_update_res}
        \tilde{M}_{t} &= \beta_1 \cdot \tilde{M}_{t-1} + (1-\beta_1)\cdot \tilde{G}_{t} \\
        \tilde{V}_{t} &= \beta_2 \cdot \tilde{V}_{t-1} + (1-\beta_2) \cdot \tilde{G}_{t}^2
    \end{aligned}
\end{equation}

To apply the compressed update to the full parameter, we expand it to the original dimension by an \emph{unfold} operator. Specifically, we define the unfold matrix $E^{(l)} \in \mathbb{R}^{\frac{n}{2^{l}} \times n}$ as
\begin{equation}\label{eq:define_E}
    E^{(l)}_{i,j} =
    \begin{cases}
        1, & \text{if } (i-1) \cdot 2^{l} + 1 \le j \le i \cdot 2^{l},\\[4pt]
        0, & \text{otherwise}.
    \end{cases}
\end{equation}
This operator replicates each compressed entry across its corresponding block, restoring the update to the full parameter dimension. However, directly replicating the update across dimensions causes all elements within the same block to share identical update values, resulting in weight differences that rely solely on initialization. To overcome this, we introduce the residual $R_{t}$, which is defined as the fold–unfold operator error at time step $t$
\begin{equation}
    \nonumber
    R_{t} = G_{t} - \tilde{G}_{t}E^{(l)} = G_{t} - G_{t} A^{(l)}E^{(l)}
\end{equation}
The residual depends solely on the gradient at time step $t$; once computed, it can be released and does not need to be maintained in memory. We introduce the residual term into the expanded estimates of the first-order moment as follows
\begin{equation}
    \label{eq:foam_m_v_res}
    M_{t} = \tilde{M}_{t} E^{(l)} + R_{t},\quad V_{t} = \tilde{V_{t}}E^{(l)} + R_{t}^{2},
\end{equation}
and update the parameters by Eq.~\eqref{eq:adam_updates}. The injection of residual to $M_{t}$ prevents identical parameter updates within the same block due to broadcasting operations, and ensures that the first-moment estimate $M_{t}$ includes the complete gradient information $G_{t}$ of the current step. Similar to Adam-mini, \name allows parameters within the same block to share the same second-order moment estimate. The key difference is that \name uses a finer-grained block structure and adds an additional residual term to the unfolded second-order moment. We will reveal in Appendix Figure~\ref{fig:ablation_3b_res} that adding residuals to the second-order moment stabilizes training.

Evidently, \name tracks only ${1}/{2^{l-1}}$ optimizer state of Full-Adam. We present the pseudocode for \name in Algorithm~\ref{algo:fold_adam_algo}. We apply \name to the MLP and Attention modules, and Adam to other modules such as Embedding. A scaling coefficient $\alpha$ is introduced to control the ratio of learning rates between \name modules and Adam modules. This design is commonly adopted in memory-efficient
optimizers~\cite{zhao2024galore,jordan2024muon,zhu2024apollosgdlikememoryadamwlevel_apollo}.
\begin{algorithm}[!th]
   \caption{\name optimizer}\label{algo:fold_adam_algo}
    \begin{algorithmic}
    \def\arraystretch{1.0} 
   \STATE {\bfseries Input:} 2-D Weight matrix $ {W}$, step size $\{\eta\}_{t\ge1}$, batch size $b$, decay rates $\beta_{1}, \beta_{2}$, iteration $T$, $\epsilon$ for numerical stability, scaling coefficient  $\alpha$, fold level $l$, fold and unfold operation $A^{(l)},E^{(l)}$, and $t=1$.
   \REPEAT
   \STATE $ {G}_{t} \leftarrow 
   \frac{1}{b}\sum_{i=1}^{b}\nabla_{ {W}}f_{i}( {W}_{t}) $ \hfill \COMMENT{Batch gradient}
   \STATE $ \tilde{G}_{t}\leftarrow  {G}_{t} A^{(l)}$ \hfill \COMMENT{Compress the gradient}
   \STATE $R_{t} \gets {G}_{t} - \tilde{G}_{t}E^{(l)}$ \hfill \COMMENT{Compute the residual}
    \IF{$t=1$}
    \STATE Initialize $ \tilde{M}_{0}, \tilde{V}_{0} \gets 0$
    \ENDIF
    \\
    \hrulefill
   \STATE \textbf{Adam states update }
    \STATE \hspace{2.5mm} $ \tilde{M}_{t} \leftarrow \beta_{1} \cdot  \tilde{M}_{t-1} + (1 - \beta_{1}) \cdot  \tilde{G}_{t}$ 
    \STATE \hspace{2.5mm} $ \tilde{V}_{t} \gets \beta_2 \cdot  \tilde{V}_{t-1} + (1 - \beta_2) \cdot  \tilde{G}_{t}^{2}$ 
    \STATE \hspace{2.5mm} ${M_{t}} \gets \tilde{M}_{t}E^{(l)} + R_{t}$
    \STATE \hspace{2.5mm} ${V_{t}} \gets \tilde{V}_{t}E^{(l)} + R_{t}^{2}$ \hfill \COMMENT{Unfold the states}
    \\
   \hrulefill
   \STATE  $ {W}_{t} \gets  {W}_{t-1} -  \eta_t \cdot \alpha \cdot \frac{M_{t}}{\sqrt{V_{t}}+\epsilon}$ \hfill \COMMENT{Update weights}
   \STATE $t\gets t+1$
   \UNTIL{$t=T$}
   \STATE \textbf{return} $ {W}_{t}$
\end{algorithmic}

\end{algorithm}

\begin{table*}[!t]
    \centering
    \caption{\textbf{Final validation perplexity (lower is better, 3 independent runs for 60M to 350M models) and estimated memory usage (BF16, model, gradient, and optimizer states) for pre-training LLaMA models on the C4 dataset.} 
    \name consistently outperforms baselines by achieving comparable or lower validation PPL while requiring significantly less memory.}

    \label{tab:validation_llama_60m-1b}
    \resizebox{\linewidth}{!}{
    \begin{tabular}{l|cc|cc|cc|cc}
    \toprule
        \multirow{2}{*}{Methods} & \multicolumn{2}{c|}{\textbf{60M}} & \multicolumn{2}{c|}{\textbf{130M}} & \multicolumn{2}{c|}{\textbf{350M}} & \multicolumn{2}{c}{\textbf{1.3B}} \\
        & Perplexity & Memory & Perplexity & Memory & Perplexity & Memory & Perplexity & Memory \\
        \midrule
        Full-Adam & $29.57_{\pm 0.02}$ & 0.34G & $22.86_{\pm 0.00}$ & 0.80G & $17.33_{\pm 0.00}$ & 2.20G & $14.51_{\pm 0.00}$ & 8.03G\\
        Muon & $28.93_{\pm 0.03}$ & 0.30G & $22.75_{\pm 0.01}$ & 0.63G & $16.96_{\pm 0.01}$ & 1.60G & $14.28_{\pm 0.00}$ & 5.61G \\
        Adam-Mini & $29.63_{\pm 0.05}$ & 0.22G & $23.73_{\pm 0.01}$ & 0.53G & $17.83_{\pm 0.01}$ & 1.46G & $15.10_{\pm 0.00}$ & 5.35G \\
        LDAdamW & $29.27_{\pm 0.07}$ & 0.28G & $22.86_{\pm 0.02}$ & 0.57G & $17.53_{\pm 0.01}$& 1.38G & $14.88_{\pm 0.00}$ & 4.76G \\
        \midrule
        GaLore-1/4 & $34.38_{\pm 0.03}$ & 0.28G & $26.47_{\pm 0.01}$ & 0.57G & $19.36_{\pm 0.02}$ & 1.38G & $15.66_{\pm 0.00}$ & 4.76G\\
        APOLLO-1/4 &$ 31.18_{\pm 0.09}$ & 0.28G & $23.35_{\pm 0.21}$ & 0.57G & $16.73_{\pm 0.05}$ & 1.38G & $14.20_{\pm 0.00}$ & 4.76G\\
        \rowcolor{blue4}
         \name-2 &  $\textbf{28.53}_{\pm 0.01}$ &  0.27G &   $\textbf{22.51}_{\pm 0.00}$ &  0.54G & $ \textbf{15.87}_{\pm 0.00}$ &  1.30G &  $\textbf{13.13}_{\pm 0.00}$ &  4.45G \\
        \midrule
        GaLore-1/8 & $39.94_{\pm 1.24}$ & 0.26G & $30.02_{\pm 0.39}$ & 0.52G & $21.59_{\pm 0.23}$ & 1.23G & $17.52_{\pm 0.00}$ & 4.15G \\
        APOLLO-1/8 & $31.53_{\pm 0.06}$ & 0.26G & $23.74_{\pm 0.17}$ & 0.52G & $16.98_{\pm 0.10}$ & 1.23G & $14.32_{\pm 0.00}$ & 4.15G\\
        \rowcolor{blue4}
        \name-3 & $\textbf{28.79}_{\pm 0.05}$ & 0.25G & $\textbf{22.58}_{\pm 0.03}$ & 0.50G & $\textbf{15.94}_{\pm 0.02}$ & 1.14G & $\textbf{13.19}_{\pm 0.00}$ & 3.97G \\
        \midrule
        APOLLO-Mini & $31.58_{\pm 0.09}$ & 0.24G & $23.83_{\pm 0.07}$ & 0.46G & $17.17_{\pm 0.06}$  & 1.00G & $14.18_{\pm 0.00}$ & 3.20G\\
        GWT-Mini & $32.94_{\pm 0.06}$ & 0.24G & $23.84_{\pm 0.05}$ & 0.46G & $18.12_{\pm 0.03}$ & 1.00G & $14.99_{\pm 0.00}$ & 3.20G \\
        \rowcolor{blue2}
        \namec & $\textbf{29.71}_{\pm 0.04}$ & 0.24G & $\textbf{23.10}_{\pm 0.05}$ & 0.46G & $\textbf{16.53}_{\pm 0.03}$ & 1.00G & $\textbf{13.43}_{\pm 0.00}$ & 3.20G\\
        \midrule
        Training Tokens & \multicolumn{2}{|c}{1.3B} & \multicolumn{2}{|c}{2.6B} & \multicolumn{2}{|c}{7.8B} & \multicolumn{2}{|c}{13.1B} \\
         \bottomrule
    \end{tabular}
    }

\end{table*} 

\section{Theoretical Results}
In this section, we provide the theoretical convergence of \name, demonstrating that under the traditional non-convex optimization assumptions, \name, even with compressed optimizer states, achieves the same convergence rate as Full-Adam. Before beginning our theoretical analysis, we first collect the following assumptions:
\begin{assumption}[$L$-smoothness]\label{ass:lipschitz}
    Assume the gradient of the loss function $f$ is $L$-smoothness, i.e., 
    \begin{equation}
        \nonumber
        \|\nabla f(W) - \nabla f(W')\| \leq L\|W-W'\|, \forall\ W,W' \in \mathbb{R}^{m \times n}.
    \end{equation}
\end{assumption}

\begin{assumption}[Bounded variance of the stochastic gradient]\label{ass:unbiased}
    Assume that the gradient follows the following noise decomposition
    \begin{equation}
        \nonumber
        G_{t} = \nabla f(W_{t}) + \xi_{t},\quad \mathbb{E}\left[\xi_{t}\mid W_{t}\right] = 0,\quad \mathbb{E}\left[\|\xi_{t}\|^{2}\right] \leq \sigma^2.
    \end{equation}
\end{assumption}

\begin{assumption}[Bounded gradient]\label{ass:bound_gradient}
    There exists a constant $C \geq 0$ such that
    $$
    \|\nabla f(W_{t})\| \leq C, \quad \forall\ t.
    $$
\end{assumption}

These assumptions are standard in stochastic optimization and provide the foundation for convergence analysis.

\begin{theorem}[Convergence of \name]\label{theo:convergence}
    Let $\{W_{t}\}_{t\geq 1}$ be generated by Algorithm~\ref{algo:fold_adam_algo}, under the assumptions of Assumptions~\ref{ass:lipschitz} - \ref{ass:bound_gradient}, and with $\eta_{t} = \eta_0 / \sqrt{t}$. Then, we have
    \small{
    \begin{equation}\nonumber
        \min_{1\leq t\leq T}\mathbb{E}\left[\|\nabla f(W_{t})\|^{2}\right] = \mathcal{O}\left( \frac{\log T + \delta_l^2}{\sqrt{T}} \right) + \mathcal{O}\left( \frac{\sigma^2 \log T}{\sqrt{T}} \right).
    \end{equation}}
\end{theorem}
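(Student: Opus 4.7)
The plan is to adapt the standard non-convex Adam convergence template to FOAM's folded states, absorbing all compression-related constants into the single term $\delta_l$. First, applying the $L$-smoothness of $f$ (Assumption~\ref{ass:lipschitz}) yields the descent inequality
\begin{equation}
\nonumber
f(W_{t+1}) \le f(W_t) + \langle \nabla f(W_t),\,W_{t+1}-W_t\rangle + \tfrac{L}{2}\|W_{t+1}-W_t\|^2,
\end{equation}
and substituting the update $W_{t+1}-W_t = -\eta_t\alpha\,M_t/(\sqrt{V_t}+\epsilon)$ splits the analysis into (i) the second-order step term and (ii) the inner-product term, which must be related back to $\|\nabla f(W_t)\|^2$.

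The key structural observation is that the recursions \eqref{eq:m_v_update_res}--\eqref{eq:foam_m_v_res} can be unrolled in closed form. Let $P := A^{(l)} E^{(l)}$ denote the symmetric, idempotent block-averaging projector. A direct computation gives
\begin{equation}
\nonumber
M_t = \beta_1 M_{t-1} + G_t - \beta_1\bigl[G_t P + G_{t-1}(I-P)\bigr],
\end{equation}
together with $V_t = (1-\beta_2)\sum_{s\le t}\beta_2^{t-s}(G_s P)^2 + (G_t(I-P))^2$, where all squares are elementwise. Assumption~\ref{ass:bound_gradient} then yields uniform bounds $\|M_t\|_\infty = \mathcal{O}(C)$, $\|V_t\|_\infty = \mathcal{O}(C^2)$, and hence $\|M_t/(\sqrt{V_t}+\epsilon)\|_\infty = \mathcal{O}(C/\epsilon)$, so the second-order step term satisfies $\tfrac{L\alpha^2}{2}\sum_t \eta_t^2 \|M_t/(\sqrt{V_t}+\epsilon)\|^2 = \mathcal{O}(\log T)$ under $\eta_t = \eta_0/\sqrt{t}$.

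For the inner-product term I would follow the standard route for adaptive-gradient non-convex analysis: split $G_t = \nabla f(W_t)+\xi_t$ (Assumption~\ref{ass:unbiased}), replace the denominator $\sqrt{V_t}+\epsilon$ by the one-step-shifted proxy $\sqrt{V_{t-1}}+\epsilon$ so that $\mathbb{E}[\xi_t\mid \mathcal{F}_{t-1}]=0$ kills the stochastic cross term, bound the incurred bias via $|\sqrt{V_t}-\sqrt{V_{t-1}}|$ (which telescopes using $\|G_t\|\le C$), and absorb the squared-noise contribution into the $\mathcal{O}(\sigma^2)$ floor. Telescoping the descent inequality from $t=1$ to $T$ and dividing by $\sum_{t=1}^T \eta_t = \Theta(\sqrt{T})$ then yields the claimed $\mathcal{O}((\log T)/\sqrt{T})+\mathcal{O}(\sigma^2)$ rate. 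The compression-specific terms, namely the extra $-\beta_1[G_t P + G_{t-1}(I-P)]$ in the $M_t$ recursion and the $(G_s P)^2$ versus $G_s^2$ mismatch in $V_t$, contribute only time-independent, $\mathcal{O}(C^2)$ constants, which are collected into $\delta_l$.

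The main obstacle is the elementwise division $M_t/(\sqrt{V_t}+\epsilon)$: although $M_t$ and $V_t$ each decompose orthogonally into a block-averaged EMA and a fresh-residual part $G_t(I-P)$, the projector $P$ commutes with neither elementwise squaring nor elementwise reciprocal, so the \emph{ratio} does not inherit this decomposition. Consequently one cannot analyse FOAM subspace-by-subspace (for instance as ``Adam on block averages, signSGD on residuals''), and the cross-coupling between the block-averaged numerator and the residual denominator (and vice versa) must be controlled by Cauchy--Schwarz together with the uniform gradient bound. Making this step clean, and in particular pinning down the explicit $l$-dependence hidden inside $\delta_l$, is where I expect the bulk of the technical work.
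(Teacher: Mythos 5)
Your outline follows the same template as the paper's proof: the $L$-smoothness descent inequality, the observation that $P=A^{(l)}E^{(l)}$ is a symmetric idempotent projector with $\|P\|_2=\|I-P\|_2=1$, uniform bounds $\|M_t\|,\|V_t\|=\mathcal{O}(C),\mathcal{O}(C^2)$ from Assumption~\ref{ass:bound_gradient}, and the $\sum_t\eta_t^2=\mathcal{O}(\log T)$ / $\sum_t\eta_t=\Theta(\sqrt{T})$ bookkeeping. Your closed-form recursion $M_t=\beta_1M_{t-1}+G_t-\beta_1[G_tP+G_{t-1}(I-P)]$ is correct and equivalent to the paper's identity $M_t-G_t=\tilde M_tE^{(l)}-G_tP^{(l)}$, which is the structural heart of the argument (the residual injection makes $M_t$ contain the full current gradient, so the deviation lives entirely in the range of $P$).

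There is, however, one genuine gap. You assert that the compression/momentum terms ``contribute only time-independent $\mathcal{O}(C^2)$ constants,'' but after unrolling, $M_t-G_t$ is $(1-\beta_1)\sum_{j\ge1}\beta_1^j(G_{t-j}-G_t)P^{(l)}$ plus a $\beta_1^tG_tP^{(l)}$ tail, and the first sum involves gradients at \emph{past iterates}. Controlling it requires combining $L$-smoothness with the per-step displacement bound $\|W_{t-j}-W_t\|\le\frac{(1+\delta_l)C}{\epsilon}\sum_{u=t-j}^{t-1}\eta_u$ and a truncation at $k=\lfloor t/2\rfloor$ to extract the $\mathcal{O}(1/t)+\mathcal{O}(\beta_1^{t/2})$ decay; this is the bulk of the paper's Lemma~\ref{lemma:delta_G_norm} and cannot be waved away as a constant — without it you do not recover the $1/\sqrt{T}$ rate. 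Separately, the ``main obstacle'' you identify (the elementwise ratio not respecting the $P$ versus $I-P$ decomposition) is not actually an obstacle here: under Assumption~\ref{ass:bound_gradient} the paper simply sandwiches $\epsilon\le\sqrt{V_t}+\epsilon\le 2C+\epsilon$ and applies Young's inequality to $\langle\nabla f(W_t),\Delta_t/(\sqrt{V_t}+\epsilon)\rangle$, never decomposing the ratio subspace-by-subspace. Likewise the $l$-dependence is not hidden: $\delta_l=\max_t\|R_t\|/\|G_t\|$ is defined in the statement, bounded by $1$ via $\|I-P\|_2=1$, and enters only through factors $(1+\delta_l)$ in the constants.
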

where $\delta_l = \max_{1 \leq t \leq T} \frac{\|R_{t}\|}{\|G_t\|}$ denotes the energy ratio of the residual and satisfies $\delta_l \leq 1,\forall\ l$. 

This result demonstrates that \name achieves the same convergence rate of $\mathcal{O}(\frac{\log T}{\sqrt{T}})$ as Adam~\cite{reddi2019convergence_adam, zhou2024convergence_adam}. Furthermore, $\delta_l \leq 1$ for all $l$ does not affect the primary convergence rate, thus theoretically validating the effectiveness of \name. In the Appendix Figure~\ref{fig:residual_energy}, we provide the variation of $\frac{\|R_{t}\|}{\|G_t\|}$ throughout training for different of $l$, where a higher $l$ corresponds to a larger $\delta_{l}$.

\section{Experimental Results}
In this section, we experimentally validate the effectiveness of our proposed method, focusing on both pre-training and fine-tuning tasks. For pre-training, we train LLaMA~\cite{Touvron2023LLaMAOA} models of various sizes on the English portion of the C4~\cite{2019c4} dataset. For fine-tuning, we evaluate a range of open-source models on standard downstream tasks. Detailed descriptions of experimental settings and computational environments can be found in the Appendix. Throughout our experiments, we use \name-2 to denote the Adam optimizer with $l=2$. We also define \namec as a variant of \name that applies $l=\lfloor\log_2{\text{dim}_{\text{hidden}}}\rfloor$, resulting in memory consumption approximately equivalent to that of rank-1 methods.

\subsection{Memory-Efficient Pre-Training} 

We demonstrate that our proposed method, \name, is highly effective for the pre-training of LLMs, achieving superior performance in terms of validation perplexity (PPL), training throughput, memory efficiency, and convergence speed across a range of LLaMA model sizes. Notably, \name-c—a variant with an extremely low memory footprint for optimizer states—performs on par with both full-rank baselines and existing memory-efficient methods.

\textbf{Training Configuration.}
All pre-training experiments are conducted using the BF16 format to reduce memory consumption. We adopt the LLaMA model configurations provided by~\citet{zhao2024galore}. Following prior work, we use a maximum sequence length of 256 and a batch size of 512, yielding a total of 131K tokens per batch. The learning rate is linearly warmed up during the first 10\% of training steps and decayed thereafter using a cosine schedule.

\begin{table*}[!t]
    \centering
    \caption{\textbf{Pre-training LLaMA-3B and LLaMA-7B models on the C4 dataset.} 
    We report the validation PPL across training steps, wall-clock training time, and real memory consumption with gradient checkpointing during training. We train LLaMA-3B on 16 NVIDIA RTX 3090 GPUs (24GB each), and LLaMA-7B on 4 NVIDIA H100 GPUs (80GB each).}
    \label{tab:pre_train_llama_3b}
    \resizebox{0.9\linewidth}{!}{
    \begin{tabular}{l|l|c|ccccc|cc}
        \toprule
        Models & Methods &\textbf{Memory}  & \textbf{30k} & \textbf{60k} & \textbf{90k} & \textbf{120k} & \textbf{150k} & \textbf{Time (h)} & \textbf{Tokens/s}\\
        \midrule
        \multirow{5}{*}{LLaMA-3B} & 8-bit Adam & 5.30G & 18.63 & 15.69 & 14.36 & 14.19 & - & 130.1 & 34.5K \\
        & Muon & 5.47G & 18.14 & 15.40 & 14.13 & 14.02 & -  & 297.3 & 16.3K \\
        & GaLore-1/4 & 3.91G & 18.44 & 15.98 & 14.90 & 14.73 & -& 143.6 & 35.7K \\
        & APOLLO-1/4 & 3.91G & 19.49 & 15.40 & 14.09 & 13.75 & -& 125.6 & 35.9K \\
        & \cellcolor{blue4}\name-2 & \cellcolor{blue4}3.20G&  \cellcolor{blue4}15.71 & \cellcolor{blue4}13.46 & \cellcolor{blue4}12.29 & \cellcolor{blue4}\textbf{11.98} &\cellcolor{blue4}-& \cellcolor{blue4}126.2 & \cellcolor{blue4}35.7K\\
        \midrule
        \multirow{5}{*}{LLaMA-7B} & 8-bit Adam & 13.5G & 18.73 & 16.13 & 14.56 & 13.40 & 13.23  & 285.5 & 19.9K \\
        & Muon & 14.0G & 18.16 & 15.71 & 13.87 & 13.23 & 13.19 & 350.0 & 16.0K \\
        & GaLore-1/4 & 9.40G & 18.35 & 15.63 & 14.33 & 13.77 & 13.69 & 341.8 & 20.9K \\
        & APOLLO-1/4 & 9.40G & 18.49 & 15.17 & 13.54 & 12.80 & 12.63 & 270.1 & 21.1K \\
        & \cellcolor{blue4}\name-2 &\cellcolor{blue4}7.52G & \cellcolor{blue4}15.78 & \cellcolor{blue4}13.33 & \cellcolor{blue4}12.01 & \cellcolor{blue4}11.39 & \cellcolor{blue4}\textbf{11.13} & \cellcolor{blue4}270.2 & \cellcolor{blue4}21.0K \\
        \bottomrule
    \end{tabular}
    }
\end{table*}

\textbf{Baselines.}
To ensure a comprehensive comparison, we reproduce results for several representative optimizers. These include: \textbf{Full-Adam}~\cite{Kingma2014AdamAM}, the standard optimizer for training large models.
\textbf{Muon}~\cite{liu2025muon}, an SGD-momentum variant with Newton-Schulz (N-S) iterations;
\textbf{Adam-Mini}~\cite{zhang2024adammini}, Adam with block-shared second-moment estimate; 
\textbf{GaLore}~\cite{zhao2024galore}, a memory-efficient Adam with low-rank gradient projections; 
\textbf{LDAdamW}~\cite{robert2025ldadam}, a GaLore variant with continually rotating and persistent error feedback; 
\textbf{APOLLO}~\cite{zhu2024apollosgdlikememoryadamwlevel_apollo}, which reduces memory usage via random projections. \textbf{GWT}~\cite{wen2025breakingmemorylimitsgradient}, which reduces memory by wavelet transforms.

For Full-Adam, Muon, and Adam-Mini optimizers, we report the best results from a learning rate sweep over $\{1\mathrm{e}{-4}, 2.5\mathrm{e}{-4},\ 5\mathrm{e}{-4},\ 1\mathrm{e}{-3},\ 2.5\mathrm{e}{-3},\ 5\mathrm{e}{-3},\ 1\mathrm{e}{-2}\}$. For GaLore, APOLLO, GWT, \name and \namec, we report the best results from a learning rate sweep over $\{1\mathrm{e}{-3},\ 2.5\mathrm{e}{-3},\ 5\mathrm{e}{-3},\ 1\mathrm{e}{-2}\}$ and choose the $\alpha$ in $\{0.25, 0.5, 1.0\}$. In particular, GaLore-1/8 refers to the configuration where the projection rank is one-eighth of the hidden dimension—corresponding to the memory footprint of \name with 3-level gradient folding.

\textbf{Main Reults.}
We evaluate the performance of \name and \namec by comparing their memory overhead (calculated layer-by-layer, with details in the Appendix) and final validation PPL against various baseline optimizers. The final PPL results for LLaMA models ranging from 60M to 1.3B parameters are summarized in Table~\ref{tab:validation_llama_60m-1b}. Results for pre-training LLaMA-3B and LLaMA-7B with gradient checkpointing~\cite{chen2016training} are presented in Table~\ref{tab:pre_train_llama_3b}. Additionally, we include the PPL learning curves for LLaMA-350M and 1.3B in Figures~\ref{fig:llama_350m_ppl} and~\ref{fig:llama_1b_ppl}, for LLaMA-60M and 130M in Figure~\ref{fig:ppl_curve_60m_130m}, and the real training memory consumption for LLaMA-1.3B in Appendix Table~\ref{tab:throught_1B}. These results lead to several key observations:

\textbf{\name Demonstrates Superior Effectiveness in Pre-training:}
Under varying memory constraints (i.e., $ d_{\text{model}}/4$, $d_{\text{model}}/8$, and rank 1), \name consistently outperforms all other memory-efficient optimizers in final validation PPL, even surpassing Full-Adam and Muon with tuned learning rates, while maintaining significantly lower memory usage. Specifically, for the LLaMA-1.3B model, \name-3 consumes approximately 3.97 GB of memory, reducing optimizer memory overhead by 79\% and total memory overhead by 50\% compared to Adam. It also achieves a 20\% reduction in memory usage relative to GaLore and APOLLO, both configured at $1/4, d_{\text{model}}$ in the 7B model. Additionally, \namec decreases optimizer memory overhead by 90\%, while maintaining performance comparable to Adam. \textbf{Faster Convergence and Higher Throughput:}
As shown in Figures~\ref{fig:llama_350m_ppl} and~\ref{fig:llama_1b_ppl}, \name achieves a 2× speedup in convergence steps, outperforming or matching other memory-efficient optimizers in terms of PPL. For larger models, such as LLaMA-3B and LLaMA-7B (as presented in Table~\ref{tab:pre_train_llama_3b}), \name continues to outperform the tested baselines in both convergence speed and final validation PPL, while matching the training throughput of SVD-free APOLLO.

\subsection{Memory-Efficient Fine-Tuning}

In this section, we further evaluate \name in LLM fine-tuning, a widely adopted practice in both academia and industry. Our experiments include fine-tuning RoBERTa-Large~\cite{liu2019roberta} on the GLUE benchmark~\cite{Wang2018GLUEAM} and adapting several open-source models for evaluation on the MMLU benchmark~\cite{hendrycks2020measuringmmlu}.

\textbf{Training Configuration.}
For GLUE fine-tuning, we fine-tune each task for 3 epochs. For MMLU tasks, we employ 3 open-source models: Gemma3-1B~\cite{team2025gemma3}, LLaMA3.2-3B~\cite{grattafiori2024llama3}, and Qwen2.5-7B~\cite{yang2024qwen2.5} with a sequence length of 2048. Detailed experimental settings are provided in the Appendix.

\begin{table}[!ht]
    \centering
    \caption{\textbf{Evaluating \name for memory-efficient fine-tuning on the MMLU benchmark} (1 A100 40GB GPU). 
    We report the best average accuracy obtained by sweeping the learning rate over the range $\{1\mathrm{e}{-5},\ 2.5\mathrm{e}{-5},\ 5\mathrm{e}{-5},\ 1\mathrm{e}{-4},\ 1.5\mathrm{e}{-4},\ 2\mathrm{e}{-4}\}$.
    }
    \label{tab:fine_tune_mmlu}
    \resizebox{0.98\linewidth}{!}{
    \begin{tabular}{l|l|cccc|c}
        \toprule
        Models & Methods & \textbf{STEM} & \textbf{Soc.} & \textbf{Hum.} & \textbf{Other} & \textbf{Avg.} \\
        \midrule
        \multirow{5}{*}{Gemma3-1B} & Full-Adam & 27.63 & 26.84 & 25.12 & 25.14 & 26.04 \\ 
        & LoRA & 26.61 & 25.93 & 24.78 & 25.02 & 25.48\\ 
        & GaLore & 27.40 & 26.71 & 24.80 & 25.69 & 25.99 \\ 
        & APOLLO & 27.47 & 27.14 & 25.61 & 25.76 & \textbf{26.38} \\ 
        &\cellcolor{blue4}\name & \cellcolor{blue4}26.01 & \cellcolor{blue4}26.39 & \cellcolor{blue4}25.44 & \cellcolor{blue4}26.40 & \cellcolor{blue4}25.99 \\ 
        \midrule
        \multirow{5}{*}{LLaMA3.2-3B} & Full-Adam & 46.55 & 65.81 & 49.16 & 63.17 & 55.48 \\
        & LoRA & 47.22 & 63.41 & 49.56 & 61.57 & 54.86 \\ 
        & GaLore & 46.62 & 65.58 & 48.37 & 63.33 & 55.22 \\ 
        & APOLLO & 46.16 & 65.91 & 49.03 & 62.80 & 55.29 \\ 
        &\cellcolor{blue4}\name & \cellcolor{blue4}46.72 & \cellcolor{blue4}65.19 & \cellcolor{blue4}49.29 & \cellcolor{blue4}62.77 & \cellcolor{blue4}\textbf{55.33}\\
        \midrule
        \multirow{5}{*}{Qwen2.5-7B} & Full-Adam & \multicolumn{4}{c|}{OOM} & -\\
        & LoRA & 67.69 & 82.91 & 66.29 & 75.72 & 72.41 \\ 
        & GaLore & 68.32 & 82.87 & 66.08 & 76.10 & 72.55 \\ 
        & APOLLO & 68.49 & 82.97 & 65.80 & 76.65 & 72.65 \\ 
        &\cellcolor{blue4}\name & \cellcolor{blue4}69.12 & \cellcolor{blue4}83.49 & \cellcolor{blue4}66.48 & \cellcolor{blue4}76.28 & \cellcolor{blue4}\textbf{73.04}\\ 
        \bottomrule
    \end{tabular}
    }

\end{table}

\begin{table*}[!th]
    \centering
    \caption{\textbf{Evaluating \name for memory-efficient fine-tuning on the GLUE benchmark (higher is better)}, using a pre-trained RoBERTa-Large model. We report overall (matched and mismatched) accuracy for MNLI, Matthew’s correlation coefficient for CoLA, Pearson correlation for STS-B, and classification accuracy for all other tasks.}
    \label{tab:fine_tuning_glue}
    \resizebox{0.9\linewidth}{!}{
    \begin{tabular}{l|c|cccccccc|c}
    \toprule
       Methods & Memory & \textbf{CoLA} & \textbf{STS-B} & \textbf{MRPC} & \textbf{RTE} & \textbf{SST2} & \textbf{MNLI} & \textbf{QNLI} & \textbf{QQP} & \textbf{Avg.} \\
       \midrule
       Full-Adam & 2.13G & 64.85 & 91.60 & 92.79 & 78.81 & 96.44 & 90.51 & 94.43 & 91.90 & 87.66 \\
       \midrule
       LoRA & 0.73G & \textbf{64.32} & 90.68 & 91.39 & 77.72 & 95.98 & \textbf{90.57} & \textbf{94.78} & \textbf{90.93} & 87.04 \\
       GaLore & 0.72G & 62.52 & 91.18 & 90.94 & 77.11 & \textbf{96.10} & 90.27 & 94.21 & 89.99 & 86.54 \\
       APOLLO & 0.72G & 61.13 & 91.66 & 92.14 & 80.14 & 95.06 & 89.65 & 93.61 & 89.27 & 86.58 \\
       \rowcolor{blue4}
       \name & 0.71G & 64.30 & \textbf{92.44} & 
       \textbf{92.33} & \textbf{83.03} & 95.75 & 89.74 & 93.88 & 89.65 & \textbf{87.64} \\
       \bottomrule
    \end{tabular}
    }

\end{table*}

\textbf{Baselines.}
Consistent with the pre-training experiments, we compare \name against Full-Adam, GaLore, and APOLLO. Additionally, we include \textbf{LoRA}~\cite{hu2021lora} as a baseline. All methods adopt the same hyperparameter strategy to ensure fair comparisons. \name’s level $l$ is set to $8$ for both GLUE and MMLU tasks, except for Gemma3-1B where $l=7$. For the other baselines, the rank is set to 4 on GLUE and 8 on MMLU, ensuring comparable memory usage. Since the level $l$ is already small, we don't run $\namec$.

\textbf{Main Results.}
As reported in Tables~\ref{tab:fine_tune_mmlu} and ~\ref{tab:fine_tuning_glue}, \name achieves comparable performance relative to baselines across a range of downstream LLM tasks. These results highlight \name’s effectiveness beyond pre-training and suggest it can serve as a unified, memory-efficient optimization method applicable throughout the LLM training.

\section{Extra Investigation and Ablation Study}
In this section, we present additional empirical studies to further evaluate the robustness and generalization of \name. Our analysis includes: (i) Pre-training Qwen2.5 \cite{yang2024qwen2.5} models at multiple scales; (ii) Assessing \name under long-sequence training regimes and evaluating its performance with large total training-token budgets; (iii) Ablation study of the hyperparameter $l$; (iv) an ablation study of the residual term in Eq.~\eqref{eq:foam_m_v_res}; (v) Evaluation \name under 8-bit quantization. Beyond these core experiments, the Appendix includes additional studies on GPT~\cite{radford2019gpt2} and DeBERTa~\cite{he2021debertadecodingenhancedbertdisentangled} models, a further ablation of the scaling coefficient $\alpha$, and a comparison of \name with Adam using a similar module-wise learning rate design. These results demonstrate that \name is robust to the hyperparameter $\alpha$ and achieves performance on par with Adam when using the same modul learning rate design.

\paragraph{Pre-training Qwen on C4.} 

\begin{table}[!ht]
    \centering
    \setlength{\tabcolsep}{12pt}
    \renewcommand{\arraystretch}{0.9}
    \caption{Final validation PPL for pre-training Qwen models on C4.}
    \label{tab:validation_qwen_60m_350m}
    \begin{tabular}{l|ccc}
    \toprule
        Methods & {\textbf{60M}} & {\textbf{130M}} & {\textbf{350M}}  \\
        \midrule
        Full-Adam & 29.71 & 22.82 & 16.97 \\
        Adam-mini & 30.04 & 24.12 & 17.55 \\
        Muon & 28.93 & 22.16 & 16.78 \\
        \midrule
        GaLore-1/4 & 33.22 & 25.63 & 19.58 \\
        APOLLO-1/4 & 30.00 & 23.43 & 16.83 \\
        \cellcolor{blue4}\name-2 & \cellcolor{blue4}\textbf{28.57} & \cellcolor{blue4}\textbf{21.25} &  \cellcolor{blue4}\textbf{15.80} \\
        \midrule
        APOLLO-Mini & 32.05 & 23.34 & 16.97 \\
        GWT-Mini & 31.88 & 23.31 & 17.60 \\
        \cellcolor{blue2}\namec & \cellcolor{blue2}\textbf{27.98} & \cellcolor{blue2}\textbf{21.75} & \cellcolor{blue2}\textbf{15.99} \\
         \bottomrule
    \end{tabular}
\end{table}

To assess the generalization capability of \name across different model families, we also pre-train Qwen2.5 \cite{yang2024qwen2.5} models on the C4 dataset, following the same experimental setup in LLaMA experiments. Model sizes range from 60M to 350M parameters, and the final validation PPL results are summarized in Table~\ref{tab:validation_qwen_60m_350m}. Because these model sizes align with those in Table~\ref{tab:validation_llama_60m-1b}, their estimated memory footprints are similar. As shown in the results, both \name and \namec consistently outperform all baseline optimizers, demonstrating that \name generalizes effectively beyond LLaMA architectures.

\begin{figure*}[!th]
    \centering
    \begin{subfigure}[b]{0.33\linewidth}
        \centering
        \includegraphics[width=\linewidth, height=0.69\textwidth]{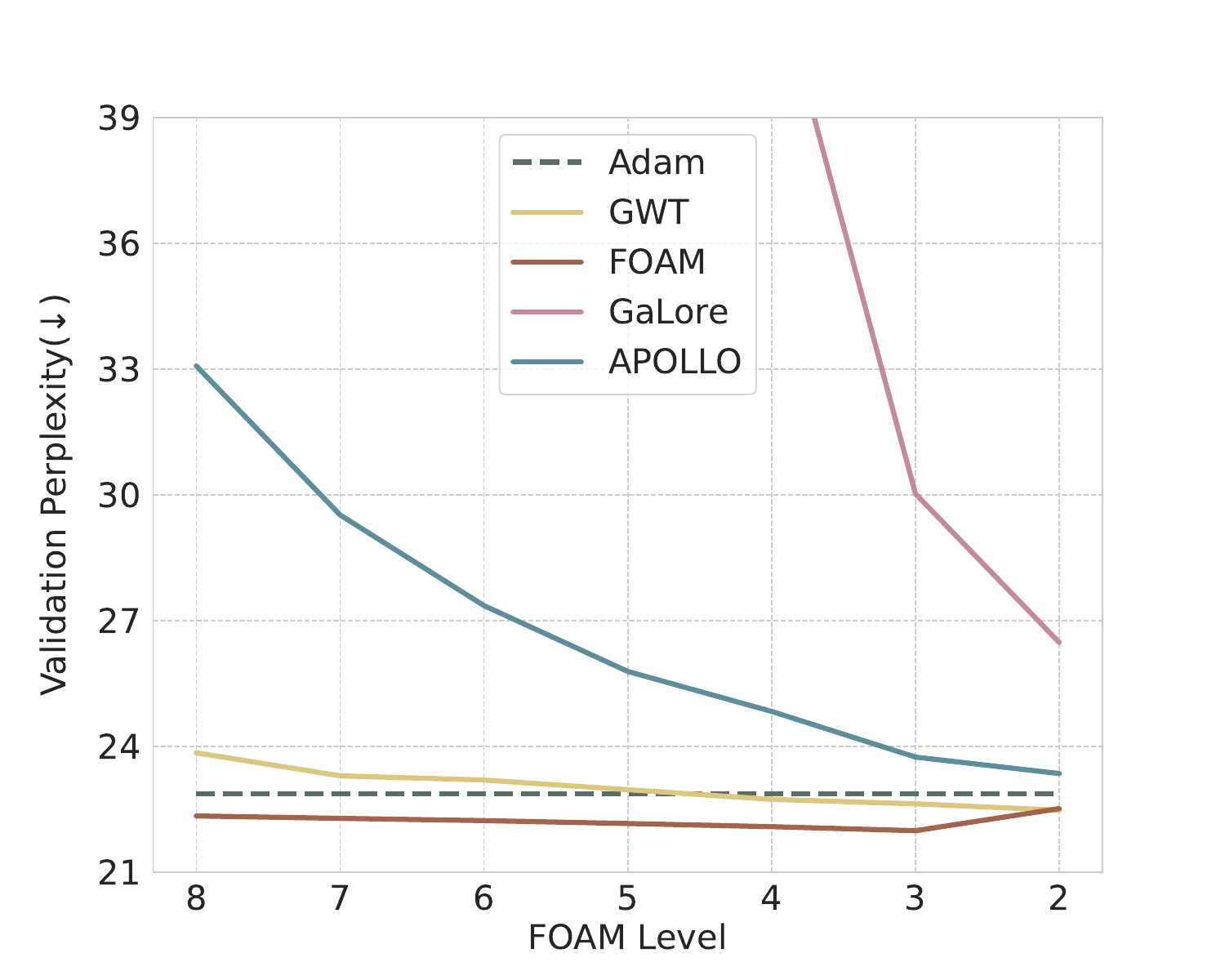}
        \caption{\name with varing level $l$.}
        \label{fig:llama_130m_fold_level}
    \end{subfigure}
    \hfill
    \begin{subfigure}[b]{0.33\linewidth}
        \centering
        \includegraphics[width=\linewidth, height=0.69\textwidth]{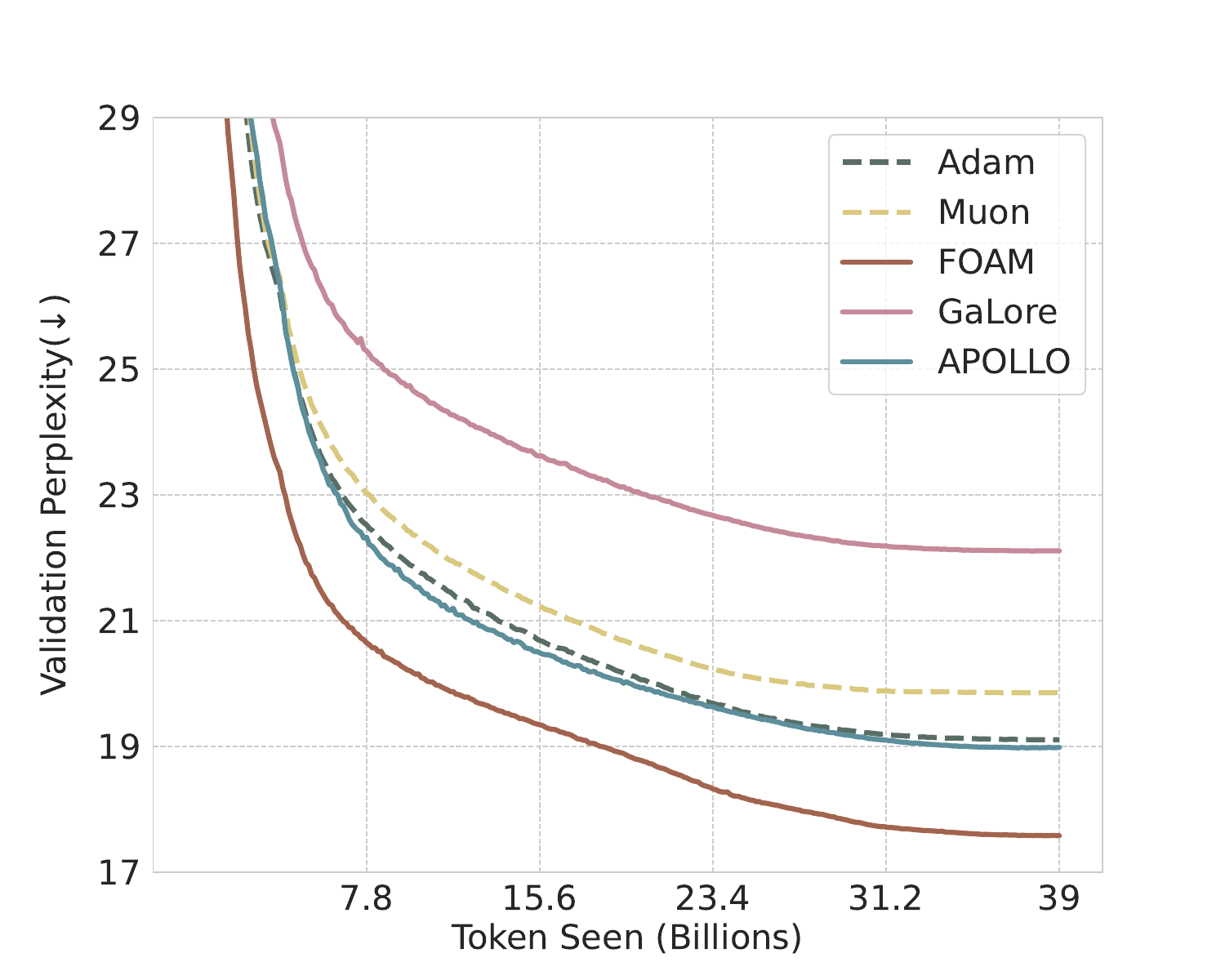}
        \caption{Overtraining LLaMA-130M.}
        \label{fig:llama_130m_over_train}
    \end{subfigure}
    \hfill
    \begin{subfigure}[b]{0.33\linewidth}
        \centering
        \includegraphics[width=\linewidth, height=0.69\textwidth]{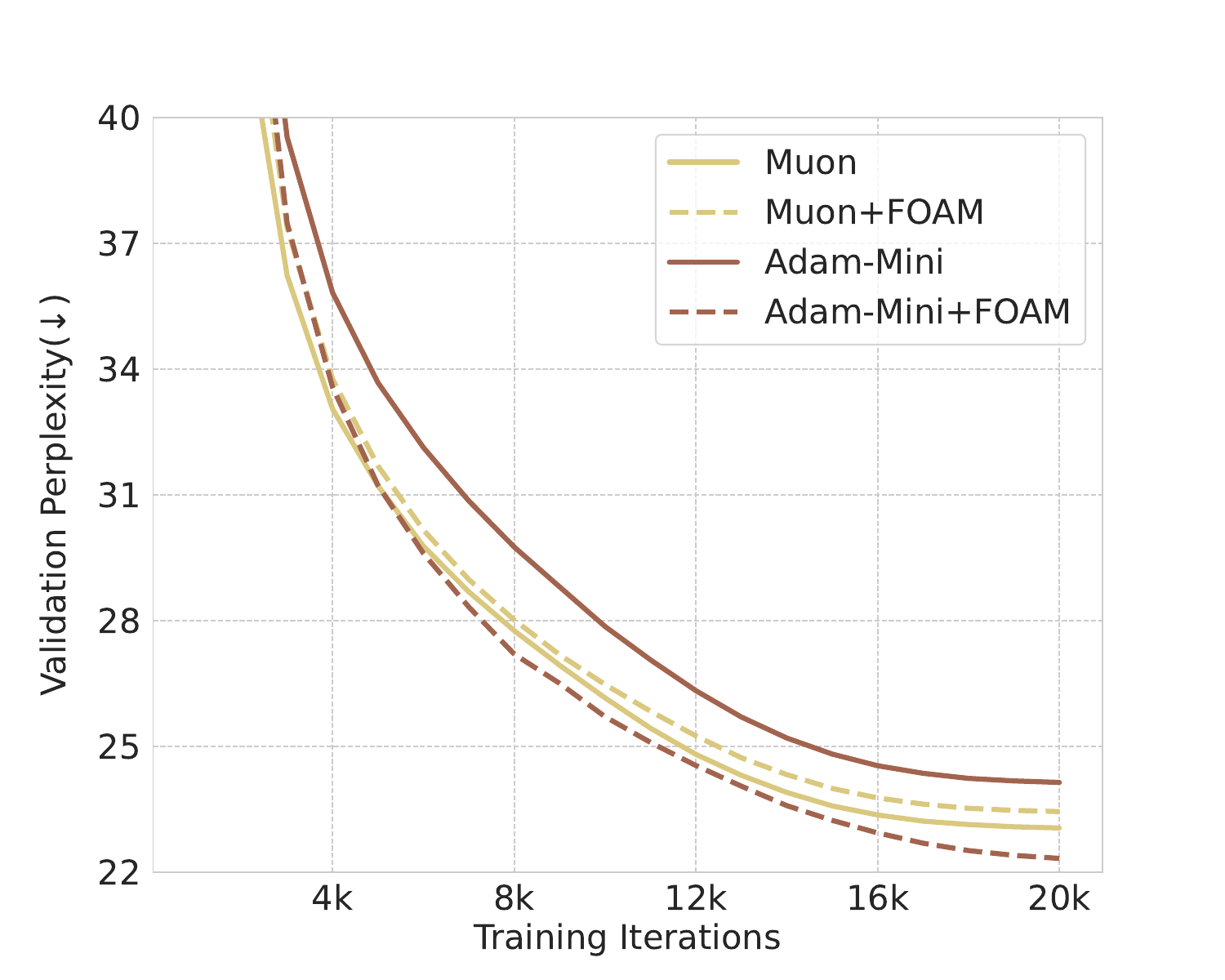}
        \caption{\name with Adam-Mini and Muon.}
        \label{fig:llama_130m_fold_optimizer}
    \end{subfigure}
    \caption{\textbf{Additional Investigation of \name.} \textit{(a)} Impact of the \name level $l$: \name exhibits strong robustness across varying memory constraints. \textit{(b)} Extended training of LLaMA-130M on 39B tokens. \textit{(c)} Integration of \name with Adam-Mini and Muon.}
    \label{fig:main1}
\end{figure*}

\paragraph{Long-context Training.}  

\begin{table}[!ht]
    \centering
    \setlength{\tabcolsep}{6pt}
    \renewcommand{\arraystretch}{0.95}
    \caption{Performance comparison under different sequence lengths.}
    \small
    \label{tab:validation_llama_60m_350m_seq_length}
    \begin{tabular}{c|l|ccc}
    \toprule
        Sequence length & Methods &{\textbf{60M}} & {\textbf{130M}} & {\textbf{350M}}  \\
        \midrule
        \multirow{6}{*}{512} & Full-Adam  &31.54 & 23.77 & 17.98\\
        & Muon  & 30.36 & 23.48 & 17.62 \\
        & Adam-mini & 31.91 & 24.77 & 18.91 \\
        & GaLore-1/4 & 35.25 & 27.19 & 19.92 \\
        & APOLLO-1/4 & 32.02 & 24.04 & 17.26 \\
        & \cellcolor{blue4}\name-2 & \cellcolor{blue4}\textbf{29.26} & \cellcolor{blue4}\textbf{22.47} & \cellcolor{blue4}\textbf{16.34} \\
        \midrule
        \multirow{6}{*}{1024} & Full-Adam & 35.07 & 26.34 & 20.72 \\ 
        & Muon & 33.21 & 26.11 & 19.60 \\
        & Adam-mini & 36.30 & 28.41 & 21.47 \\
        & GaLore-1/4 & 38.09 & 29.51 & 21.73 \\
        & APOLLO-1/4 & 34.04 & 25.93 & 18.77 \\
        & \cellcolor{blue4}\name-2 & \cellcolor{blue4}\textbf{31.69} & \cellcolor{blue4}\textbf{24.38} & \cellcolor{blue4}\textbf{17.75} \\
         \bottomrule
    \end{tabular}
\end{table}

Long-context training is essential for improving the contextual reasoning capabilities of LLMs. To assess \name's generalization and efficiency under extended context windows, we pre-train LLaMA models on the C4 dataset using a range of sequence lengths. In all settings, the total number of tokens per batch is kept fixed to ensure a consistent computational budget and comparable token throughput. As shown in Table~\ref{tab:validation_llama_60m_350m_seq_length}, although all optimizers experience some degradation in PPL as the sequence length increases, \name consistently achieves the best performance. These results demonstrate \name's robustness and stability when training with long input sequences.

\paragraph{Impact of the \name Level $l$.}
We perform an ablation study to assess how the \name level $l$ influences performance. For comparison, we include GaLore, APOLLO, and GWT as baselines, configuring their projection ranks and compression levels to roughly match \name's memory usage under each setting. As illustrated in Figure~\ref{fig:llama_130m_fold_level}, GaLore, APOLLO, and GWT exhibit notable degradation in final validation PPL when operating at lower memory consumptions (corresponding to higher $l$ values). In contrast, \name remains remarkably stable: its performance is largely insensitive to variations in $l$, consistently outperforming Adam even under aggressive memory budgets. These results highlight \name's robustness and its ability to maintain high optimization quality across a wide range of fold levels.

\paragraph{Overtraining LLaMA on C4.}
Because \name, GaLore, and APOLLO generally use larger default learning rates than Full-Adam, they may be more susceptible to instability during prolonged training. To assess \name's resilience in such overtraining regimes, we extend the training of LLaMA-130M to 39B tokens-approximately 300 tokens per parameter, roughly 15× the Chinchilla compute allocation \cite{hoffmann2022scalinglaw}. As shown in Figure~\ref{fig:llama_130m_over_train}, \name remains stable and continues to improve, demonstrating strong robustness even under extreme training durations. 

\paragraph{Integration of \name with Adam-Mini and Muon.}
\begin{figure}[!ht]
    \centering
    \includegraphics[width=0.98\linewidth,height=0.32\textwidth]{./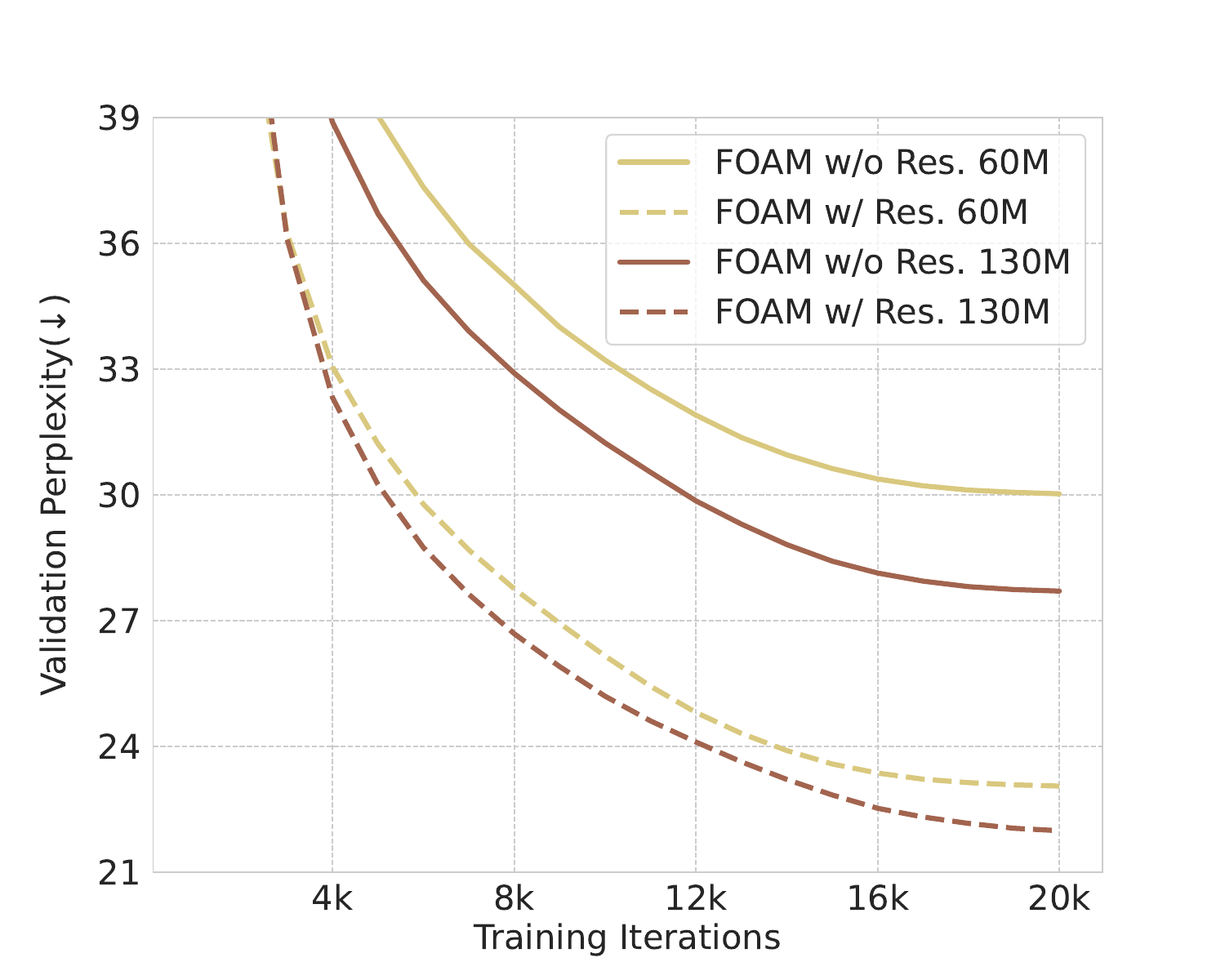}
    \caption{Validation PPL with or without residual.}
    \label{fig:llama_res_60m_130m}
\end{figure}

We further show that \name can be seamlessly combined with other optimizers and memory-saving techniques. To illustrate this flexibility, we integrate \name into Adam-Mini \cite{zhang2024adammini} and Muon \cite{liu2025muon}, and pre-train a LLaMA-130M model. The resulting PPL learning curves, shown in Figure~\ref{fig:llama_130m_fold_optimizer}, indicate that \name consistently matches or exceeds the performance of the original optimizers. These results highlight \name's versatility and effectiveness as a plug-and-play, memory-efficient optimization solution applicable across diverse optimization strategies.

\paragraph{Impact of the Residual.}
We conduct an ablation study on the residual term in Eq.~\eqref{eq:foam_m_v_res} by pre-training the LLaMA-60M and 130M models with and without $R_{t}$. In addition, we measure the cosine similarity between the update magnitudes produced by \name and those of Adam, comparing versions with and without the residual term (Appendix Figure~\ref{fig:cosine_similarity}). We present the PPL learning curves in Figure~\ref{fig:llama_res_60m_130m}. When the residual term is omitted, all parameters within the same block receive identical updates, with any intra-block differences attributable solely to initialization. Our results show that including the residual term significantly accelerates convergence and yields lower final validation PPL, underscoring its critical role in ensuring effective optimization with \name.

\begin{table}[!ht]
    \centering
    \caption{Evaluating \name under 8-bit state quantization.}
    \label{tab:validation_llama_60m-350m_int8}
    \resizebox{0.92\linewidth}{!}{\begin{tabular}{l|cc|cc|cc}
    \toprule
        \multirow{2}{*}{Methods} & \multicolumn{2}{c}{\textbf{60M}} & \multicolumn{2}{c}{\textbf{130M}} & \multicolumn{2}{c}{\textbf{350M}} \\
        & PPL & Mem. & PPL & Mem. & PPL & Mem. \\
        \midrule
        8-bit-Adam & 30.78 & 0.22G & 23.30 & 0.53G & 17.99 & 1.46G\\
        \midrule
        8-bit-GaLore & 34.88 & 0.18G & 25.53 & 0.38G & 19.79 & 0.92G\\
        \cellcolor{blue4}8-bit-\name-2 & \cellcolor{blue4}\textbf{28.86} & \cellcolor{blue4}0.18G &  \cellcolor{blue4}\textbf{22.66} & \cellcolor{blue4}0.36G & \cellcolor{blue4}\textbf{16.02} & \cellcolor{blue4}0.86G \\
        \midrule
        Training Tokens & \multicolumn{2}{|c|}{1.3B} & \multicolumn{2}{|c|}{2.6B} & \multicolumn{2}{|c}{7.8B} \\
         \bottomrule
    \end{tabular}}
\end{table}

\paragraph{\name under Int8 Quantization.}
Low-bit quantization has become a widely adopted strategy for reducing memory usage in modern LLM training. To assess the robustness of \name's compression mechanism under low-precision settings, we incorporate int8-quantized optimizer states \cite{dettmers2022llmint8} and pre-train LLaMA models ranging from 60M to 350M. We compare against 8-bit Adam and 8-bit GaLore. Since the low-precision version of APOLLO relies on additional quantization techniques \cite{zhang2024qgalore}, it is excluded from this evaluation. As shown in Table~\ref{tab:validation_llama_60m-350m_int8}, \name maintains superior performance under int8 quantization, demonstrating the resilience and effectiveness of its compression in low-precision regimes.

\section{Conclusion}
In this paper, we propose \name, a memory-efficient training strategy suitable for both pre-training and fine-tuning. \name preserves the structural information in the gradient matrix through blocked averaging and reconstructs lost information by a residual correction at each step. This enables high-performance LLM training with reduced memory consumption. Our theoretical analysis shows that \name retains the convergence rate of adaptive optimizers. Extensive comparisons with existing methods demonstrate that \name effectively reduces memory overhead, accelerates convergence, and improves training speed. Additionally, we show that \name is compatible with optimizers beyond Adam. Overall, our approach offers an effective solution to the optimizer-state memory bottleneck in LLM training, providing a complementary alternative to low-rank projection techniques for memory-efficient optimization.

\section*{Impact Statement}
This paper presents work whose goal is to advance the field of Machine Learning. There are many potential societal consequences of our work, none of which we feel must be specifically highlighted here.

\section*{Acknowledgements}
Tao Sun is supported in part by the  National Natural Science Foundation of China (Grant Nos. 62522610,  62376278), and NUDT Foundational Research Funding (JS25-02).

\bibliography{reference}
\bibliographystyle{icml2026}
\newpage
\appendix
\onecolumn
\noindent\rule{\textwidth}{3pt}
\begin{center}
	{\LARGE \bf Appendix for \vspace{1.2ex}\\
	\fontsize{11.5pt}{\baselineskip}\selectfont FOAM: Blocked State Folding for Memory-Efficient LLM Training}
\end{center}
\noindent\rule{\textwidth}{1.5pt}

\section{Lemmas and Proofs}
In this section, we present the proofs of the theorems discussed in the main text. Before beginning our proof, we first establish some useful lemmas to facilitate the process.

\subsection{Additional Lemmas}
\begin{lemma}\label{lemma:l2_norm_P}
    Denote $P^{(l)} = A^{(l)}E^{(l)}$, as $A^{(l)}, E^{(l)}$ defined in Eq.~\eqref{eq:define_A} and Eq.~\eqref{eq:define_E}, we have
    \begin{equation}
        \nonumber
        \left\|P^{(l)}\right\|_2 =  \rho(P^{(l)}) = 1,\text{and } \left\|I_{n\times n}-P^{(l)}\right\|_2= \rho(I_{n\times n}-P^{(l)}) = 1,
    \end{equation}
    and
    \begin{equation}
        \nonumber 
        \delta_{l} = \max_{1 \leq t \leq T} \frac{\left\|G_{t}\left(I_{n\times n} - P^{(l)}\right)\right\|}{\|G_{t}\|} =\max_{1 \leq t \leq T} \frac{\|R_{t}\|}{\|G_t\|}\leq 1
    \end{equation}
    where $I_{n\times n} \in \mathbb{R}^{n\times n}$ denotes the Identity matrix, and $\rho(\cdot)$ represents the spectral radius.
\end{lemma}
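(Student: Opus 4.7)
The plan is to identify $P^{(l)}=A^{(l)}E^{(l)}$ as an orthogonal projection matrix with an explicit block structure, after which every claim in the lemma follows from standard facts about projections and submultiplicativity of norms.

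First I would carry out the matrix product entrywise. Given the supports of $A^{(l)}$ and $E^{(l)}$ from Eqs.~\eqref{eq:define_A} and \eqref{eq:define_E}, for each row index $i$ there is a unique column index $j=\lceil i/2^{l}\rceil$ with $A^{(l)}_{i,j}\neq 0$, and that entry equals $1/2^{l}$. The corresponding row of $E^{(l)}$ has a $1$ in the columns $k$ that lie in the same block of size $2^{l}$ as $i$. Hence
\begin{equation}
\nonumber
P^{(l)}_{i,k}=
\begin{cases}
1/2^{l}, & \text{if }\lceil i/2^{l}\rceil=\lceil k/2^{l}\rceil,\\
0, & \text{otherwise},
\end{cases}
\end{equation}
i.e., $P^{(l)}$ is block diagonal with $n/2^{l}$ blocks, each equal to $\tfrac{1}{2^{l}}\mathbf{1}\mathbf{1}^{\top}=\mathbf{1}\mathbf{1}^{\top}/\|\mathbf{1}\|^{2}$, the rank-one orthogonal projector onto the all-ones vector in $\mathbb{R}^{2^{l}}$.

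Next I would use this block structure to read off the spectrum. Each diagonal block is an orthogonal projection, so $P^{(l)}$ itself is an orthogonal projection: $(P^{(l)})^{\top}=P^{(l)}$ and $(P^{(l)})^{2}=P^{(l)}$. Its eigenvalues are therefore $\{0,1\}$, with eigenvalue $1$ of multiplicity $n/2^{l}\ge 1$, which gives $\rho(P^{(l)})=1$, and since $P^{(l)}$ is symmetric, $\|P^{(l)}\|_{2}=\rho(P^{(l)})=1$. For the complementary projector $I_{n\times n}-P^{(l)}$, the same reasoning applies: it is an orthogonal projection whose eigenvalues are $\{0,1\}$, with $1$ of multiplicity $n-n/2^{l}\ge 1$ (assuming $l\ge 1$), yielding $\rho(I_{n\times n}-P^{(l)})=\|I_{n\times n}-P^{(l)}\|_{2}=1$.

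Finally I would bound $\delta_{l}$. By definition $R_{t}=G_{t}-G_{t}P^{(l)}=G_{t}(I_{n\times n}-P^{(l)})$, so using submultiplicativity of the Frobenius norm with respect to the spectral norm,
\begin{equation}
\nonumber
\|R_{t}\|=\|G_{t}(I_{n\times n}-P^{(l)})\|_{F}\le \|G_{t}\|_{F}\cdot\|I_{n\times n}-P^{(l)}\|_{2}=\|G_{t}\|,
\end{equation}
from which $\delta_{l}=\max_{1\le t\le T}\|R_{t}\|/\|G_{t}\|\le 1$ is immediate. There is no real obstacle here; the only step that requires care is verifying the index arithmetic in the block identification, since all subsequent claims are standard properties of orthogonal projections.
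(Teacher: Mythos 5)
Your proposal is correct and follows essentially the same route as the paper: both arguments reduce to showing that $P^{(l)}$ (and hence $I_{n\times n}-P^{(l)}$) is a symmetric idempotent matrix, so its eigenvalues lie in $\{0,1\}$, and then conclude via $\|R_t\|\le\|G_t\|\,\|I_{n\times n}-P^{(l)}\|_2$. The only cosmetic difference is that you verify the projection property by an explicit entrywise computation exhibiting the block-diagonal structure $\tfrac{1}{2^l}\mathbf{1}\mathbf{1}^{\top}$, whereas the paper derives it from the identities $E^{(l)}=2^{l}(A^{(l)})^{T}$ and $E^{(l)}A^{(l)}=I_{n'\times n'}$; both are sound.
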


\begin{proof}
    By the definition of $E^{(l)}$ and $A^{(l)}$, we have
    \begin{equation}
        \nonumber
        E^{(l)}=2^{l} \cdot \left(A^{(l)}\right)^{T}, \quad \ E^{(l)}A^{(l)} = I_{n'\times n'}.
    \end{equation}
    where $n'=\frac{n}{2^{l}}$. By the definition of $P^{(l)}$, we have
    \begin{equation}
        \nonumber
        P^{(l)} = A^{(l)}E^{(l)} = 2^{l}\cdot A^{(l)} \left(A^{(l)}\right)^{T}, \quad \left(P^{(l)}\right)^{T} = \left(A^{(l)}E^{(l)}\right)^{T} = 2^{l} \cdot \left(A^{(l)} \left(A^{(l)}\right)^{T}\right)^{T}= 2^{l}\cdot A^{(l)} \left(A^{(l)}\right)^{T},
    \end{equation}
    this proves that $P^{(l)}$ is a symmetric matrix. We consider
    \begin{equation}
        \nonumber
        \left(P^{(l)}\right)^{2} = \left(A^{(l)}E^{(l)}\right)\left(A^{(l)}E^{(l)}\right) = A^{(l)}\left(E^{(l)}A^{(l)}\right)E^{(l)} = A^{(l)}I_{n'\times n'}E^{(l)} = A^{(l)}E^{(l)} = P^{(l)},
    \end{equation}
    this demonstrates that $P^{(l)}$ is an idempotent matrix. Suppose $\lambda$ is an eigenvalue of $P^{(l)}$ with $\mathbf{v}$ as its associated eigenvector, then we have
    \begin{equation}
        \nonumber
        \left(P^{(l)}\right)^2\mathbf{v} = P^{(l)}\mathbf{v} \longrightarrow \lambda^2\mathbf{v} = \lambda \mathbf{v} \longrightarrow \lambda (\lambda-1) = 0.
    \end{equation}
    Therefore, the eigenvalues of $P^{(l)}$ can only be $0$ or $1$ and $\rho(P^{(l)}) = 1$, this gives us $\|P^{(l)}\|_{2}=1$.

    Next, we start to analyse $I_{n\times n}-P^{(l)}$. In the previous proof, we have established that $P^{(l)}$ is a symmetric matrix. Hence, we can see that  $I_{n\times n}-P^{(l)}$ is also a symmetric matrix. Consider $\left(I_{n\times n}-P^{(l)}\right)^2$, we have
    \begin{equation}
        \nonumber
        \left(I_{n\times n}-P^{(l)}\right)^2 = \left(I_{n\times n}\right)^2 - 2 I_{n\times n}P^{(l)} + \left(P^{(l)} \right)^{2} = I_{n\times n} - P^{(l)}.
    \end{equation}
    This gives us $I_{n\times n} - P^{(l)}$ is an idempotent matrix. Therefore, the eigenvalues of  $I_{n\times n} - P^{(l)}$ can only be $0$ or $1$ and $\rho(I_{n\times n} - P^{(l)})=1$. Thus, we can see that
    \begin{equation}
        \nonumber
        \left\|G_{t}\left(I_{n\times n} - P^{(l)}\right)\right\| \leq \|G_{t}\|\left\|I_{n\times n} - P^{(l)}\right\|_{2} \leq \|G_{t}\| \rightarrow \delta_{l} \leq 1
    \end{equation}
    The proof is completed.
\end{proof}

The following lemma bounds the norm of the difference between the \name update direction $M_{t}$ and the gradient $G_{t}$.

\begin{lemma}\label{lemma:delta_G_norm}
    Let Assumptions \ref{ass:lipschitz}-\ref{ass:bound_gradient} hold, and $\eta_{t}=\eta_0 /\sqrt{t}$. The difference $\Delta_{t} = M_{t} - \nabla f(W_{t})$ can be decomposed as $\Delta_t = \hat{\Delta}_t + \Xi_t$, where $\mathbb{E}[\Xi_t] = 0$ and $\Xi_t$ is the stochastic noise part. For the deterministic part $\hat{\Delta}_t$, there exist constants $b_{1}, b_{2}, b_{3},b_{4},b_{5},b_{6}$ such that
    \begin{equation}
    \nonumber
    \mathbb{E}\left[\|\hat{\Delta}_{t}\|^{2}\right] \leq b_{1} \frac{1}{t} + b_{2} \beta_{1}^{t} + b_{3} \beta_{1}^{2t} + b_{4} \frac{\sigma^2}{t} + b_{5} \sigma^2 \beta_{1}^{t} + b_{6} \sigma^2 \beta_{1}^{2t},
    \end{equation}
\end{lemma}

\begin{proof}
    From Eq.~\eqref{eq:m_v_update_res}, we have
    \begin{equation}
        \nonumber
        \tilde{M}_{t} = \beta_1 \cdot \tilde{M}_{t-1} + (1-\beta_1) \cdot \tilde{G}_{t} = (1-\beta_1) \sum_{j=0}^{t-1} \beta_{1}^{j} \tilde{G}_{t-j}.
    \end{equation}
    Therefore, by the definition of $R_{t}$ and the fact $G_{t} = \nabla f(W_{t}) + \xi_{t}$, 
    \begin{equation}
        \nonumber
        \begin{aligned}
                    \Delta_{t} &= M_{t} - \nabla f(W_{t})  = \tilde{M}_{t} E^{(l)} + R_{t} - G_{t} + \xi_{t}\\&= \tilde{M}_{t} E^{(l)} + \left( G_{t} - G_{t}A^{(l)}E^{(l)}\right) - G_{t} + \xi_{t}= \tilde{M}_{t} E^{(l)} - G_{t}A^{(l)}E^{(l)} + \xi_{t},
        \end{aligned}
    \end{equation}
    and 
    \begin{align}
        \Delta_{t} &= (1-\beta_1) \sum_{j=0}^{t-1} \beta_{1}^{j} \tilde{G}_{t-j}E^{(l)} - \tilde{G_{t}}E^{(l)} + \xi_{t}\nonumber \\&= (1-\beta_1)\sum_{j=1}^{t-1}  \beta_{1}^{j} \tilde{G}_{t-j}E^{(l)} +(1-\beta_{1})\tilde{G}_{t}E^{(l)} - \tilde{G}_{t}E^{(l)} + \xi_{t}\nonumber \\
        &= (1-\beta_1)\sum_{j=1}^{t-1}  \beta_{1}^{j} \tilde{G}_{t-j}E^{(l)} - \beta_{1}\tilde{G}_{t}E^{(l)} + \xi_{t}\nonumber\\&= (1-\beta_1)\sum_{j=1}^{t-1}  \beta_{1}^{j} {G}_{t-j}P^{(l)} - \beta_{1}{G}_{t}P^{(l)}+ \xi_{t}. \label{eq:lemma_delta_1}
    \end{align}
    By applying the geometric series summation formula, we obtain
    \begin{equation}
        \nonumber
        \beta_{1}G_{t}P^{(l)} = (1-\beta_{1}) \sum_{j=1}^{t-1}\beta_{1}^{j}G_{t}P^{(l)} + \beta_{1}^{t}G_{t}P^{(l)}.
    \end{equation}
    Substituting the above expression into Eq.~\eqref{eq:lemma_delta_1}, we have
    \begin{equation}
        \nonumber
        \Delta_{t} = (1-\beta_{1}) \sum_{j=1}^{t-1}  \beta_{1}^{j} \left({G}_{t-j} - G_{t}\right)P^{(l)} - \beta_{1}^{t}G_{t}P^{(l)}+ \xi_{t}.
    \end{equation}
    Given $G_{t} = \nabla f(W_{t}) + \xi_{t}$, this further yields
    \begin{equation}
        \label{eq:lemma_delta_3_term}
        \Delta_{t} = {\underbrace{(1-\beta_{1}) \sum_{j=1}^{t-1}  \beta_{1}^{j}\left( \nabla f(W_{t-j}) - \nabla f(W_{t})\right)P^{(l)}}_{H_{t}} - \beta_{1}^{t}G_{t}P^{(l)}} + \underbrace{(1-\beta_{1}) \sum_{j=1}^{t-1}  \beta_{1}^{j}\left(\xi_{t-j} -\xi_{t} \right)P^{(l)}+ \xi_{t}}_{\Xi_t}.
        \end{equation}
    This gives us
    \begin{equation}
        \nonumber \Delta_{t} = \hat{\Delta}_{t} + \Xi_{t},
    \end{equation}
    where
    \begin{equation}\label{eq:hat_delta}
        \hat{\Delta}_{t}:= {{(1-\beta_{1}) \sum_{j=1}^{t-1}  \beta_{1}^{j}\left( \nabla f(W_{t-j}) - \nabla f(W_{t})\right)P^{(l)}} - \beta_{1}^{t}G_{t}P^{(l)}}
    \end{equation}
    By the Assumption~\ref{ass:unbiased}, we have
    \begin{equation}\nonumber
        \mathbb{E}\left[{(1-\beta_{1}) \sum_{j=1}^{t-1}  \beta_{1}^{j}\left(\xi_{t-j} -\xi_{t} \right)P^{(l)}} + \xi_{t}\right] = \mathbb{E}\left[\Xi_{t}  \right] = 0.
    \end{equation}
    Next, we consider the upperbound of $\|H_{t}\|$, 
    
        \begin{align}
            \|H_{t}\| &= \left\|(1-\beta_{1}) \sum_{j=1}^{t-1}  \beta_{1}^{j}\left( \nabla f(W_{t-j}) - \nabla f(W_{t})\right)P^{(l)}\right\|=(1-\beta_{1}) \sum_{j=1}^{t-1}  \beta_{1}^{j}\left\| \nabla f(W_{t-j}) - \nabla f(W_{t})P^{(l)} \right\| \nonumber \\
            &\leq (1-\beta_{1}) \sum_{j=1}^{t-1}  \beta_{1}^{j}\left\| \nabla f(W_{t-j}) - \nabla f(W_{t})\right\|\|P^{(l)}\|_2 \leq (1-\beta_{1}) \sum_{j=1}^{t-1}  \beta_{1}^{j}L \left\| W_{t-j} - W_{t}\right\|, \label{eq:lemma_delta_t_update_to_t-j}
        \end{align}
   
    where the last inequality we use the $L$-Lipschitz property in Assumption~\ref{ass:lipschitz} and Lemma~\ref{lemma:l2_norm_P}. By the recursion in Eq.~\eqref{eq:adam_updates}, we have
    \begin{equation}
        \label{eq:lemma_delta_update_t_to_j}
        \left\| W_{t-j} - W_{t}\right\| = \left\|\sum_{u=t-j}^{t-1} W_{u+1} - W_{u}\right\|\leq \sum_{u=t-j}^{t-1} \left\|W_{u+1}- W_{u}\right\| = \sum_{u=t-j}^{t-1} \eta_{u}\left\| \frac{M_{u}}{\sqrt{V_{u}}+\epsilon}\right\|
    \end{equation}
    
    By the recursion of $M_{t},V_{t}$ in Eq.~\eqref{eq:m_v_update_res}, and using the fact that $\mathbb{E}[\|G_t\|^2] = \|\nabla f(W_t)\|^2 + \mathbb{E}[\|\xi_t\|^2] \leq C^2 + \sigma^2$ from Assumption \ref{ass:unbiased} and \ref{ass:bound_gradient}, we can bound the second moment of $M_u$ as follows

        \begin{align}
        \mathbb{E}[\|M_{u}\|^2] &= \mathbb{E}\left[\left\|(1-\beta_1) \sum_{j=0}^{u-1} \beta_{1}^{j} {G}_{u-j}P^{(l)} + G_{u}\left(I_{n\times n} - P^{(l)}\right)\right\|^2\right] \nonumber \\&\overset{(a)}{\leq} 2(1-\beta_1)^2 \mathbb{E}\left[\left\|\sum_{j=0}^{u-1} \beta_1^j G_{u-j} P^{(l)}\right\|^2\right] + 2\mathbb{E}\left[|G_u(I_{n\times n}-P^{(l)})|^2\right] \nonumber \\&\overset{(b)}{\leq} 2(1-\beta_1) \sum_{j=0}^{u-1} \beta_1^j \mathbb{E}[\|G_{u-j}\|^2] \|P^{(l)}\|_2^2 + 2\delta_l^2 \mathbb{E}[\|G_u\|^2] \nonumber \\& \overset{(c)}{\leq} 2(C^2 + \sigma^2) + 2\delta_l^2 (C^2 + \sigma^2) = 2(1 + \delta_l^2)(C^2 + \sigma^2), \label{eq:m_norm_sq_bound}
        \end{align}

    where $(a)$ uses $(a+b)^2 \leq 2a^2 + 2b^2$, $(b)$ applies Jensen's inequality to the first term and Lemma~\ref{lemma:l2_norm_P} to the second, and $(c)$ follows from the bounded second moment of $G_t$. We denote $C_M = \sqrt{2(1+\delta_l^2)(C^2+\sigma^2)}$ as the upper bound for $\sqrt{\mathbb{E}[\|M_u\|^2]}$. 
    
    For the second moment term $V_u$, since $V_u$ is a weighted sum of squared stochastic gradients, we have
    \begin{align}
    \mathbb{E}[\|{V_{u}}\|^2] &= \mathbb{E}\left[\left\|(1-\beta_2) \sum_{j=0}^{u-1} \beta_{2}^{j} \left({G}_{u-j}P^{(l)}\right)^{2} + \left(G_{u}\left(I_{n\times n} - P^{(l)}\right)\right)^{2}\right\|\right] \nonumber \\&\leq (1-\beta_2) \sum_{j=0}^{u-1} \beta_{2}^{j} \mathbb{E}[\|G_{u-j}\|^2] \|P^{(l)}\|_2^2 + \mathbb{E}[\|G_u\|^2] \|I_{n\times n}-P^{(l)}\|_2^2 \nonumber 
    \\&\leq (C^2 + \sigma^2) + \delta_l^2 (C^2 + \sigma^2) = (1+\delta_l^2)(C^2 + \sigma^2). \label{eq:v_max_bound}
    \end{align}
    
    It remains straightforward that for the denominator term
        \begin{equation}
        \label{eq:v_frac_norm_revised}
        \frac{1}{\sqrt{V_{t}}+\epsilon} \leq \frac{1}{\epsilon}.
        \end{equation}

    Substituting Eq.~\eqref{eq:m_norm_sq_bound} and Eq.~\eqref{eq:v_frac_norm_revised} into the bound for $\left\| W_{t-j} - W_{t}\right\|$, we obtain the expected distance
    \begin{equation}\nonumber
    \mathbb{E}\left[ \left| W_{t-j} - W_{t}\right| \right] \leq \sum_{u=t-j}^{t-1} \eta_{u} \mathbb{E}\left[\left\|\frac{M_{u}}{\sqrt{V_{u}}+\epsilon}\right\|\right] \leq \frac{C_M}{\epsilon} \sum_{u=t-j}^{t-1} \eta_{u}.
    \end{equation}
        
    Substituting the above result into Eq.~\eqref{eq:lemma_delta_t_update_to_t-j}, and taking the expectation, we have
    \begin{equation}
    \nonumber
    \begin{aligned}
    \mathbb{E}[\|H_{t}\|] &\leq (1-\beta_{1})\frac{(1+\delta_{l})2\sqrt{C^2+\sigma^2} L}{\epsilon} \sum_{j=1}^{t-1} \beta_{1}^{j} \sum_{u=t-j}^{t-1} \eta_{u} \\
    &= (1-\beta_{1})\frac{(1+\delta_{l})2\sqrt{C^2+\sigma^2} L}{\epsilon} \sum_{u=1}^{t-1}\eta_{u}\sum_{j=t-u}^{t-1}\beta_{1}^{j}\\
    &= (1-\beta_{1})\frac{2(1+\delta_{l})\sqrt{C^2+\sigma^2} L}{\epsilon} \sum_{u=1}^{t-1}\eta_{u}\beta_{1}^{t-u}\frac{1-\beta_{1}^{u}}{1-\beta_{1}} \\
    &=\frac{2(1+\delta_{l})\sqrt{C^2+\sigma^2} L}{\epsilon} \sum_{u=1}^{t-1}\eta_{u}\beta_{1}^{t-u}.
    \end{aligned}
    \end{equation}
    Considering the truncation $k=\lfloor \frac{t}{2} \rfloor$, we have
    \begin{equation}
    \label{eq:lemma_delta_truc_2_term_new}
    \mathbb{E}[\|H_{t}\|] \leq \frac{(1+\delta_{l})2\sqrt{C^2+\sigma^2} L}{\epsilon} \sum_{u=1}^{k-1}\eta_{u}\beta_{1}^{t-u} +\frac{(1+\delta_{l})2\sqrt{C^2+\sigma^2} L}{\epsilon} \sum_{u=k}^{t-1}\eta_{u}\beta_{1}^{t-u}.
    \end{equation}
    For the first term,
    \begin{equation}
    \nonumber
    \frac{(1+\delta_{l})2\sqrt{C^2+\sigma^2} L}{\epsilon} \sum_{u=1}^{k-1}\eta_{u}\beta_{1}^{t-u} \leq \frac{(1+\delta_{l})2\sqrt{C^2+\sigma^2} L\eta_{0}\beta_{1}^{\frac{t}{2}}}{(1-\beta_{1})\epsilon}.
    \end{equation}
    For the second term in the inequality of Eq.~\eqref{eq:lemma_delta_truc_2_term_new},
    \begin{equation}
    \nonumber
    \begin{aligned}
    \frac{(1+\delta_{l})2\sqrt{C^2+\sigma^2} L}{\epsilon} \sum_{u=k}^{t-1}\eta_{u}\beta_{1}^{t-u} &\leq \frac{(1+\delta_{l})2\sqrt{C^2+\sigma^2} L}{\epsilon} \eta_{k}\sum_{u=k}^{t-1}\beta_{1}^{t-u} \\
    &\leq \frac{(1+\delta_{l})2\sqrt{C^2+\sigma^2} L}{\epsilon} \frac{1}{\sqrt{\frac{t}{2}}}\frac{\beta_{1}}{1-\beta_{1}} \\
    &= \frac{(1+\delta_{l})2\sqrt{2}\sqrt{C^2+\sigma^2} L\beta_{1}}{(1-\beta_{1})\epsilon\sqrt{t}}.
    \end{aligned}
    \end{equation}
    We then have
    \begin{equation}
    \label{eq:lemma_delta_bound_term_1_new}
    \mathbb{E}[\|H_{t}\|] \leq \frac{(1+\delta_{l})2\sqrt{C^2+\sigma^2} L\eta_{0}}{(1-\beta_{1})\epsilon} \beta_{1}^{t/2}+ \frac{(1+\delta_{l})2\sqrt{2}\sqrt{C^2+\sigma^2} L\beta_{1}}{(1-\beta_{1})\epsilon} \frac{1}{\sqrt{t}}.
    \end{equation}
    For the initialization bias term $\beta_{1}^{t}G_{t}P^{(l)}$ in Eq.~\eqref{eq:lemma_delta_3_term}, we have
    \begin{equation}
    \label{eq:lemma_delta_bound_term_3_new}
    \mathbb{E}\left[ \left\|\beta_{1}^{t}G_{t}P^{(l)}\right\|^2 \right] \leq \beta_{1}^{2t} \mathbb{E}[\|G_{t}\|^2] \|P^{(l)}\|_{2}^2 \leq \beta_{1}^{2t}(C^2 + \sigma^2).
    \end{equation}
    Substituting Eq.~\eqref{eq:lemma_delta_bound_term_1_new} and Eq.~\eqref{eq:lemma_delta_bound_term_3_new} into the definition of $\hat{\Delta}_{t}$, we obtain
    \begin{equation}
    \nonumber
    \begin{aligned}
    \mathbb{E}\left[\|\hat{\Delta}{t}\|^{2}\right] &= \mathbb{E}\left[\left\|H_{t} - \beta_{1}^{t}G_{t}P^{(l)}\right\|^{2}\right] \\
    &\leq 2\mathbb{E}\left[ \|H_{t}\|^{2} \right] + 2\mathbb{E}\left[ \left\|\beta_{1}^{t}G_{t}P^{(l)}\right\|^{2} \right] \\
    &\leq \frac{32(1+\delta_{l})^{2}(C^2+\sigma^2)L^{2}\beta_{1}^{2}}{(1-\beta_{1})^{2}\epsilon^{2}}\frac{1}{t} + \frac{16(1+\delta_{l})^{2}(C^2+\sigma^2)L^{2}\eta_{0}^{2}}{(1-\beta_{1})^{2}\epsilon^{2}}\beta_{1}^{t} + 2(C^2+\sigma^2)\beta_{1}^{2t}.
    \end{aligned}
    \end{equation}
    By rearranging the terms to explicitly separate the impact of the stochastic noise variance $\sigma^2$, we have
    \begin{equation}
    \nonumber
    \mathbb{E}\left[\|\hat{\Delta}_{t}\|^{2}\right] \leq b_{1} \frac{1}{t} + b_{2} \beta_{1}^{t} + b_{3} \beta_{1}^{2t} + b_{4} \frac{\sigma^2}{t} + b_{5} \sigma^2 \beta_{1}^{t} + 2 \sigma^2 \beta_{1}^{2t},
    \end{equation}
    where the constants are defined as follows:
    \begin{equation}
    \nonumber
    \begin{aligned}
    b_{1} &= \frac{32(1+\delta_{l})^{2}C^2 L^{2}\beta_{1}^{2}}{(1-\beta_{1})^{2}\epsilon^{2}}, b_{2} = \frac{16(1+\delta_{l})^{2}C^2 L^{2}\eta_{0}^{2}}{(1-\beta_{1})^{2}\epsilon^{2}}, b_{3} = 2C^2, \\& b_{4} = \frac{32(1+\delta_{l})^{2}L^{2}\beta_{1}^{2}}{(1-\beta_{1})^{2}\epsilon^{2}}, \
      b_{5} = \frac{16(1+\delta_{l})^{2}L^{2}\eta_{0}^{2}}{(1-\beta_{1})^{2}\epsilon^{2}}, b_{6}=2.
    \end{aligned}
    \end{equation}
    The proof is completed.
\end{proof}

\begin{lemma}\label{lemma:one_step_L_expansion}
    Let Assumptions \ref{ass:lipschitz}-\ref{ass:bound_gradient} hold, and $\eta_{t}=\eta_0 /\sqrt{t}$. Then, there exist constants $d_{1},d_{2},d_{3}$ depend on $L,C,\epsilon$, such that
    \begin{equation}
        \nonumber
        d_{1} \eta_{t}\mathbb{E}\left[\left\|\nabla f(W_{t})\right\|\right]^2 \leq \mathbb{E}\left[ f(W_{t})\right] - \mathbb{E}\left[ f(W_{t+1})\right] + d_{2} \eta_{t} \mathbb{E}\left[\|\Delta_{t}\|^{2}\right] + d_{3}\eta_{t}^{2}.
    \end{equation}
\end{lemma}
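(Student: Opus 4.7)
The plan is to apply $L$-smoothness of $f$ along the step $W_{t+1}-W_t = -\eta_t D_t$, with $D_t := M_t/(\sqrt{V_t}+\epsilon)$, and then lower bound the resulting inner product $\langle \nabla f(W_t), D_t\rangle$ so that a factor of $\|\nabla f(W_t)\|$ survives on the left-hand side.

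First I would invoke Assumption~\ref{ass:lipschitz} to obtain
\[
f(W_{t+1}) \leq f(W_t) - \eta_t \langle \nabla f(W_t), D_t\rangle + \frac{L\eta_t^2}{2}\|D_t\|^2.
\]
The final quadratic term is controlled uniformly in $t$ by reusing the bounds derived inside the proof of Lemma~\ref{lemma:delta_G_norm}: the estimate $\|M_t\|\leq (1+\delta_l)C$ together with $1/(\sqrt{V_t}+\epsilon) \leq 1/\epsilon$ yields $\|D_t\|^2 \leq (1+\delta_l)^2 C^2/\epsilon^2$, which supplies the $b_7 \eta_t^2$ contribution with $b_7$ of the form $L(1+\delta_l)^2 C^2/(2\epsilon^2)$.

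The heart of the argument is lower bounding $\langle \nabla f(W_t), D_t\rangle$. I would decompose $M_t = \nabla f(W_t) + \xi_t + \Delta_t$ using $\Delta_t = M_t - G_t$ and Assumption~\ref{ass:unbiased}, then split the inner product into three pieces. The signal piece $\langle \nabla f(W_t), \nabla f(W_t)/(\sqrt{V_t}+\epsilon)\rangle$ is bounded below coordinate-wise by $\|\nabla f(W_t)\|^2/(C+\epsilon)$, using the element-wise bound $(\sqrt{V_t})_{ij} \leq C$ that follows from $\|G_t\|\leq C$. The noise piece is handled by Cauchy--Schwarz, $|\langle \nabla f(W_t), \xi_t/(\sqrt{V_t}+\epsilon)\rangle| \leq \|\nabla f(W_t)\|\|\xi_t\|/\epsilon$, and then absorbed in expectation via $\mathbb{E}[\|\xi_t\|\mid W_t] \leq \sigma$ (from Jensen applied to Assumption~\ref{ass:unbiased}). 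The residual piece is also handled by Cauchy--Schwarz followed by Young's inequality $\|\nabla f(W_t)\|\|\Delta_t\|/\epsilon \leq \gamma \|\nabla f(W_t)\|^2/\epsilon + \|\Delta_t\|^2/(4\gamma\epsilon)$ for a small parameter $\gamma$, so that the $\|\Delta_t\|^2$ part is absorbed into $b_6 \eta_t \mathbb{E}[\|\Delta_t\|^2]$ while the $\gamma \|\nabla f(W_t)\|^2$ part is reabsorbed into the signal term.

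Combining the bounds, taking expectations, and invoking the consequence $\|\nabla f(W_t)\| \leq C$ of Assumption~\ref{ass:bound_gradient} (via $\|\nabla f(W_t)\| = \|\mathbb{E}[G_t\mid W_t]\| \leq C$) to match the resulting quadratic lower bound with the stated first-power left-hand side yields the claim with $b_5, b_6, b_7$ determined by $L, C, \epsilon$, and $\delta_l$. The main obstacle is precisely this last matching step: the smoothness argument naturally produces a signal term proportional to $\|\nabla f(W_t)\|^2/(C+\epsilon)$, so converting it cleanly into $b_5 \eta_t\mathbb{E}[\|\nabla f(W_t)\|]$ requires carefully choosing the Young parameter $\gamma$ and exploiting $\|\nabla f(W_t)\|\leq C$ to trade the quadratic factor against the linear one, while also ensuring the coefficient $b_6$ in front of $\mathbb{E}[\|\Delta_t\|^2]$ remains a constant independent of $t$ and $T$ so that the subsequent summation in Theorem~\ref{theo:convergence} produces the claimed $\mathcal{O}((\log T + \delta_l)/\sqrt{T}) + \mathcal{O}(\sigma^2)$ rate.
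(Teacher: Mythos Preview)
Your smoothness-descent skeleton and the Young-inequality treatment of the $\Delta_t$ piece match the paper, but two points diverge.

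First, the paper does \emph{not} perform the three-way split $M_t=\nabla f(W_t)+\xi_t+\Delta_t$; it writes $M_t=G_t+\Delta_t$ and keeps the stochastic gradient intact. The signal term is then
\[
\mathbb{E}\!\left[\left\langle \nabla f(W_t),\ \frac{G_t}{\sqrt{V_t}+\epsilon}\right\rangle\right]\ \geq\ \frac{\|\nabla f(W_t)\|^2}{2C+\epsilon},
\]
obtained directly from $\mathbb{E}[G_t\mid W_t]=\nabla f(W_t)$ together with the entrywise bound $\sqrt{V_t}\leq 2C$. Because $G_t$ is never split, no $\sigma$-dependent term is created. In your route, Cauchy--Schwarz on the noise piece yields a contribution of order $\eta_t\,\sigma\,\|\nabla f(W_t)\|/\epsilon$, and this cannot be placed into either $b_6\eta_t\mathbb{E}[\|\Delta_t\|^2]$ or $b_7\eta_t^2$; since the lemma asserts that $b_5,b_6,b_7$ depend only on $L,C,\epsilon$ (not on $\sigma$), ``absorbing'' it is not possible as the statement stands. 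Keeping $G_t$ whole is precisely what buys the paper a $\sigma$-free descent inequality.

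Second, the left-hand side in the lemma is a typo: the paper's proof actually establishes the bound with $\mathbb{E}[\|\nabla f(W_t)\|^2]$, and it is this squared quantity that is summed in the proof of Theorem~\ref{theo:convergence}. Your plan to downgrade $\|\nabla f(W_t)\|^2$ to $\|\nabla f(W_t)\|$ via $\|\nabla f(W_t)\|\leq C$ runs in the wrong direction (that bound gives $\|\nabla f\|^2\leq C\|\nabla f\|$, not $\geq$), so the ``last matching step'' you flag as the main obstacle is in fact unachievable as written. Target the squared version and the obstacle vanishes.
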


\begin{proof}
    By the $L-$smoothness, we have
    \begin{equation}
        \label{eq:lemma_lipchitz_step_bound}
        f(W_{t+1}) \leq f(W_{t}) + \langle\nabla f(W_{t}), W_{t+1}-W_{t} \rangle + \frac{L}{2}\|W_{t+1}-W_{t}\|^{2}.
    \end{equation}
    From the recursion in Eq.~\eqref{eq:adam_updates} and the definition of $\Delta_{t}$, we have
    \begin{equation}
        \nonumber
        W_{t+1}-W_{t} = -\eta_{t} \frac{M_{t}}{\sqrt{V_{t}}+\epsilon} = -\eta_{t} \frac{\nabla f(W_{t})+(M_{t}-\nabla f(W_{t}))}{\sqrt{V_{t}}+\epsilon} = -\eta_{t} \frac{\nabla f(W_{t}) + \Delta_{t}}{\sqrt{V_{t}}+\epsilon}.
    \end{equation}
    Substituting into Eq.~\eqref{eq:lemma_lipchitz_step_bound} and taking expectation, we obtain
    \begin{equation}
        \nonumber
        \begin{aligned}
            \mathbb{E}\left[f(W_{t+1})\right] &\leq \mathbb{E}\left[f(W_{t})\right] - \eta_{t}\mathbb{E}\left[\left\langle \nabla f(W_{t}), \frac{\nabla f(W_{t})}{\sqrt{V_{t}}+\epsilon}\right\rangle\right] \\&-\eta_{t} \mathbb{E}\left[\left\langle\nabla f(W_{t}),\frac{\Delta_{t}}{\sqrt{V_{t}}+\epsilon}\right\rangle\right] + \frac{L\eta_{t}^{2}}{2}\mathbb{E}\left[\left\|\frac{M_{t}}{\sqrt{V_{t}}+\epsilon}\right\|^{2}\right]
            \\&=\mathbb{E}\left[f(W_{t})\right] - \eta_{t}\mathbb{E}\left[\left\langle \nabla f(W_{t}), \frac{\nabla f(W_{t})}{\sqrt{V_{t}}+\epsilon}\right\rangle\right] \\&-\eta_{t} \mathbb{E}\left[\left\langle\nabla f(W_{t}),\frac{\hat{\Delta}_{t}}{\sqrt{V_{t}}+\epsilon}\right\rangle\right] - \mathbb{E}\left[\left\langle\nabla f(W_{t}),\frac{\hat{\Xi}_{t}}{\sqrt{V_{t}}+\epsilon}\right\rangle\right]+ \frac{L\eta_{t}^{2}}{2}\mathbb{E}\left[\left\|\frac{M_{t}}{\sqrt{V_{t}}+\epsilon}\right\|^{2}\right].
        \end{aligned}
    \end{equation}
    By Lemma~\ref{lemma:delta_G_norm} we have 
    \begin{equation}
        \nonumber
        \mathbb{E}\left[\left\langle\nabla f(W_{t}),\frac{\hat{\Xi}_{t}}{\sqrt{V_{t}}+\epsilon}\right\rangle\right] = 0.
    \end{equation}
    Thus, we have
    \begin{equation}
        \nonumber
        \begin{aligned}
            \mathbb{E}\left[f(W_{t+1})\right] &\leq \mathbb{E}\left[f(W_{t})\right] - \eta_{t}\mathbb{E}\left[\left\langle \nabla f(W_{t}), \frac{\nabla f(W_{t})}{\sqrt{V_{t}}+\epsilon}\right\rangle\right] \\&-\eta_{t} \mathbb{E}\left[\left\langle\nabla f(W_{t}),\frac{\hat{\Delta}_{t}}{\sqrt{V_{t}}+\epsilon}\right\rangle\right] + \frac{L\eta_{t}^{2}}{2}\mathbb{E}\left[\left\|\frac{M_{t}}{\sqrt{V_{t}}+\epsilon}\right\|^{2}\right] 
        \end{aligned}
    \end{equation}
    By Eq.~\eqref{eq:v_max_bound} and the Assumption \ref{ass:unbiased}, we have 
    \begin{equation}
        \nonumber
        \eta_{t}\mathbb{E}\left[ \left\langle\nabla f(W_{t}), \frac{\nabla f(W_{t})}{\sqrt{V_{t}}+\epsilon}\right\rangle\right]  \geq \eta_{t} \frac{\|\nabla f(W_{t})\|^{2}}{C_{M}+\epsilon}.
    \end{equation}
    Considering the term $\eta_{t} \mathbb{E}\left[\left\langle\nabla f(W_{t}),\frac{\hat{\Delta}_{t}}{\sqrt{V_{t}}+\epsilon}\right\rangle\right]$. From Young's inequality, we have for any $r>0$, 
    \begin{equation}
    \nonumber
        \left\langle\nabla f(W_{t}),\frac{\hat{\Delta}_{t}}{\sqrt{V_{t}}+\epsilon}\right\rangle \leq \frac{r}{2} \left\|\nabla f(W_{t})\right\|^{2} + \frac{1}{2r}\left\|\frac{\hat{\Delta}_{t}}{\sqrt{V_{t}}+\epsilon}\right\|^{2}.
    \end{equation}
    Let $c_{1}=\frac{1}{2C_M+\epsilon},r=\frac{c_{1}}{2}$, we have
    \begin{equation}
        \nonumber
        \left\langle\nabla f(W_{t}),\frac{\hat{\Delta}_{t}}{\sqrt{V_{t}}+\epsilon}\right\rangle \leq \frac{c_{1}}{4}\left\|\nabla f(W_{t})\right\|^{2} + \frac{1}{c_{1}}\left\|\frac{\hat{\Delta}_{t}}{\sqrt{V_{t}}+\epsilon}\right\|^{2}.
    \end{equation}
    Thus,
    \begin{equation}
        \nonumber
        \begin{aligned}
            \mathbb{E}\left[\left\langle \nabla f(W_{t}), \frac{M_{t}}{\sqrt{V_{t}}+\epsilon}\right\rangle\right] &= \mathbb{E}\left[\left\langle \nabla f(W_{t}), \frac{\nabla f(W_{t})}{\sqrt{V_{t}}+\epsilon}\right\rangle\right] + \mathbb{E}\left[\left\langle \nabla f(W_{t}), \frac{\hat{\Delta}_{t}}{\sqrt{V_{t}}+\epsilon}\right\rangle\right] \\
            &{\geq} \mathbb{E}\left[\left\langle \nabla f(W_{t}), \frac{\nabla f(W_{t})}{\sqrt{V_{t}}+\epsilon}\right\rangle\right] - \left|\mathbb{E}\left[\left\langle \nabla f(W_{t}), \frac{\hat{\Delta}_{t}}{\sqrt{V_{t}}+\epsilon}\right\rangle\right]\right| \\ 
            & \geq c_{1}\|\nabla f(W_{t})\|^{2} - \frac{c_{1}}{4}\left\|\nabla f(W_{t})\right\|^{2} - \frac{1}{c_{1}}\mathbb{E}\left[\left\|\frac{\hat{\Delta}_{t}}{\sqrt{V_{t}}+\epsilon}\right\|^{2}\right] \\
            &= \frac{3c_{1}}{4}\left\|\nabla f(W_{t})\right\|^{2} - \frac{1}{c_{1}}\mathbb{E}\left[\left\|\frac{\hat{\Delta}_{t}}{\sqrt{V_{t}}+\epsilon}\right\|^{2}\right],
        \end{aligned}
    \end{equation}
    Therefore, we can see that
    \begin{equation}
        \nonumber
        \begin{aligned}
            \mathbb{E}\left[f(W_{t+1})\right] &\leq \mathbb{E}\left[f(W_{t})\right] - \frac{3c_{1}\eta_{t}}{4}\mathbb{E}\left[\left\|\nabla f(W_{t})\right\|^{2}\right] + \frac{\eta_{t}}{c_{1}}\mathbb{E}\left[\left\|\frac{\hat{\Delta}_{t}}{\sqrt{V_{t}}+\epsilon}\right\|^{2}\right] + \frac{L\eta_{t}^{2}}{2}\mathbb{E}\left[\left\|\frac{M_{t}}{\sqrt{V_{t}}+\epsilon}\right\|^{2}\right] \\
            & \leq \mathbb{E}\left[f(W_{t})\right] - \frac{3c_{1}\eta_{t}}{4}\mathbb{E}\left[\left\|\nabla f(W_{t})\right\|^{2}\right] + \frac{\eta_{t}}{c_{1}\epsilon^{2}}\mathbb{E}\left[\left\|\hat{\Delta}_{t}\right\|^{2}\right] + \frac{L\eta_{t}^{2}}{2\epsilon^{2}}\mathbb{E}\left[\left\|{M_{t}}\right\|^{2}\right] \\
            &\leq \mathbb{E}\left[f(W_{t})\right] - \frac{3c_{1}\eta_{t}}{4}\mathbb{E}\left[\left\|\nabla f(W_{t})\right\|^{2}\right] + \frac{\eta_{t}}{c_{1}\epsilon^{2}}\mathbb{E}\left[\left\|\hat{\Delta}_{t}\right\|^{2}\right] + \frac{L\eta_{t}^{2}}{2}c_{2}^{2},
        \end{aligned}
    \end{equation}
    where $c_{2}$ denotes $\frac{2C_M}{\epsilon}$. By reformulating the inequality above, we obtain
    \begin{equation}
        \nonumber
        \frac{3c_{1}\eta_{t}}{4}\mathbb{E}\left[\left\|\nabla f(W_{t})\right\|^{2}\right] \leq \mathbb{E}\left[f(W_{t})\right] - \mathbb{E}\left[f(W_{t+1})\right] + \frac{\eta_{t}}{c_{1}\epsilon^{2}}\mathbb{E}\left[\left\|\hat{\Delta}_{t}\right\|^{2}\right] + \frac{L\eta_{t}^{2}}{2}c_{2}^{2}.
    \end{equation}
    This gives us
    \begin{equation}
        \nonumber
        d_{1} \eta_{t}\mathbb{E}\left[\left\|\nabla f(W_{t})\right\|^{2}\right] \leq \mathbb{E}\left[ f(W_{t})\right] - \mathbb{E}\left[ f(W_{t+1})\right] + d_{2} \eta_{t} \mathbb{E}\left[\|\hat{\Delta}_{t}\|^{2}\right] + d_{3}\eta_{t}^{2},
    \end{equation}
    where
    \begin{equation}
        \nonumber
        d_{1} = \frac{3}{4(2C_M+\epsilon)},\ d_{2} = \frac{2C_M+\epsilon}{\epsilon^{2}},\ d_{3} = \frac{2LC_M^{2}}{\epsilon^{2}}.
    \end{equation}
    The proof is completed.
\end{proof}

\subsection{Proof of Theorem~\ref{theo:convergence}}
   \begin{proof}From Lemma~\ref{lemma:delta_G_norm}, we have
   \begin{equation}\nonumber
       \mathbb{E}\left[\|\hat{\Delta}_{t}\|^{2}\right] \leq b_{1} \frac{1}{t} + b_{2} \beta_{1}^{t} + b_{3} \beta_{1}^{2t} + b_{4} \frac{\sigma^2}{t} + b_{5} \sigma^2 \beta_{1}^{t} + b_{6} \sigma^2 \beta_{1}^{2t}.
   \end{equation}
   By summing the above formula from $t=1$ to $T$ with weight $\eta_t$, we obtain
   \begin{equation}\nonumber
   \begin{aligned}
       \sum_{t=1}^{T}\eta_{t} \mathbb{E}\left[\|\hat{\Delta}_{t}\|^{2}\right] &\leq \ b_{1}\sum_{t=1}^{T} \frac{\eta_{t}}{t} + b_{2}\sum_{t=1}^{T}\eta_{t}\beta_{1}^{t} + b_{3}\sum_{t=1}^{T}\eta_{t}\beta_{1}^{2t} \\ &+ b_{4}\sigma^2 \sum_{t=1}^{T} \frac{\eta_{t}}{t} + b_{5}\sigma^2 \sum_{t=1}^{T}\eta_{t}\beta_{1}^{t} + b_{6}\sigma^2 \sum_{t=1}^{T}\eta_{t}\beta_{1}^{2t}.
   \end{aligned}
   \end{equation}
   With the fact that $\eta_{t}=\frac{\eta_{0}}{\sqrt{t}}$, it is straightforward that
   \begin{equation}\nonumber
   \begin{aligned}
       &\sum_{t=1}^{T}\frac{\eta_{t}}{t} = \eta_{0}\sum_{t=1}^{T}t^{-3/2} = c_{3} < +\infty, \\&\sum_{t=1}^{T}{\eta_{t}}\beta_{1}^{t} = \eta_{0} \sum_{t=1}^{T} \frac{\beta_{1}^{t}}{\sqrt{t}} = c_{4} < +\infty, \\&\sum_{t=1}^{T}{\eta_{t}}\beta_{1}^{2t} = \eta_0 \sum_{t=1}^{T} \frac{\beta_{1}^{2t}}{\sqrt{t}} = c_{5} < +\infty, \\& \sum_{t=1}^{T}{\eta_{t}}^{2} = \eta_{0}^{2}\sum_{t=1}^{T} \frac{1}{t} \leq \eta_{0}^{2}(1+\log T).
   \end{aligned}
   \end{equation}
    Therefore, we have
        \begin{equation}\nonumber
        \sum_{t=1}^{T}\eta_{t} \mathbb{E}\left[\|\hat{\Delta}{t}\|^{2}\right] \leq (b_{1} + b_4 \sigma^2) c_{3} + (b_{2} + b_5 \sigma^2) c_{4} + (b_{3} + b_6 \sigma^2) c_{5}.
        \end{equation}
    From Lemma~\ref{lemma:one_step_L_expansion}, we have
    \begin{equation}
    \nonumber
    \begin{aligned}
    d_{1} \sum_{t=1}^{T}\eta_{t}\mathbb{E}\left[\left\|\nabla f(W_{t})\right\|^{2}\right] &\leq \mathbb{E}\left[ f(W_{1})\right] - \mathbb{E}\left[ f(W_{T+1})\right] + d_{2} \sum_{t=1}^{T}\eta_{t} \mathbb{E}\left[\|\hat{\Delta}{t}\|^{2}\right] + d_{3}\sum_{t=1}^{T}\eta_{t}^{2} \
    \\& \leq \mathbb{E}\left[ f(W_{1})\right] - f^{*} + d_{2}\left[ (b_{1} + b_{4}\sigma^{2})c_{3} + (b_{2} + b_{5}\sigma^{2})c_{4} + (b_{3} + b_{6}\sigma^{2})c_{5} \right] \\
    & + d_{3} \eta_{0}^{2}(1+\log T).
    \end{aligned}
    \end{equation}
    Dividing both sides of the inequality by $d_{1}\sum_{t=1}^{T}\eta_{t}$, we have
    \begin{equation}\nonumber
    \begin{aligned}
        &\frac{d_{1}\sum_{t=1}^{T}\eta_{t}\mathbb{E}\left[\|\nabla f(W_{t})\|^{2}\right]}{d_{1}\sum_{t=1}^{T}\eta_{t}} \\& \leq \frac{\mathbb{E}\left[ f(W_{1})\right] - f^{*} + d_{2}\left[ (b_{1} + b_{4}\sigma^{2})c_{3} + (b_{2} + b_{5}\sigma^{2})c_{4} + (b_{3} + b_{6}\sigma^{2})c_{5} \right] + d_{3} \eta_{0}^{2}(1+\log T)}{d_{1}\sum_{t=1}^{T}\eta_{t}}.
    \end{aligned}
    \end{equation}
    Using the fact that $\sum_{t=1}^{T}\eta_{t} \geq 2\eta_{0}(\sqrt{T+1}-1)$, and taking the minimum over $t$, we obtain:
    \begin{equation}
    \nonumber
    \begin{aligned}
    \min_{1\leq t\leq T}\mathbb{E}\left[\|\nabla f(W_{t})\|^{2}\right] \leq \ & \frac{\mathbb{E}\left[ f(W_{1})\right] - f^{} + d_2(b_1 c_3 + b_2 c_4 + b_3 c_5) + d_3 \eta_0^2(1 + \log T)}{2 d_1 \eta_0 (\sqrt{T+1}-1)} \\&+ \frac{d_2 \sigma^2 (b_4 c_3 + b_5 c_4 + b_6 c_5)}{2 d_1 \eta_0 (\sqrt{T+1}-1)}.\end{aligned}\end{equation}By substituting the definitions of $d_3$ and $b_i$, and noting that $d_3$ and $b_{1,\dots,6}$ contain the scaling factor $(1+\delta_l^2)$, we observe that all terms in the numerator are either constants or logarithmic in $T$. Thus, we obtain
    \begin{equation}\nonumber
    \min_{1\leq t\leq T}\mathbb{E}\left[\|\nabla f(W_{t})\|^{2}\right] = \mathcal{O}\left( \frac{\log T + \delta_l^2}{\sqrt{T}} \right) + \mathcal{O}\left( \frac{\sigma^2 \log T}{\sqrt{T}} \right).
    \end{equation}
    
    The proof is completed.
\end{proof}

\section{Experimental Details}\label{sec:experimental_details}

\subsection{Hyperparameters}

In this section, we detail the hyperparameters used to reproduce our experimental results. We adopt the hyperparameters of $(\beta_{1}=0.9, \beta_{2}=0.95,\epsilon=1\mathrm{e}{-8})$ across all the tasks for Adam, as these hyperparameter setting is widely used in LLM training \cite{Touvron2023LLaMAOA}. For pre-training LLaMA models, we fine-tune both Adam~\cite{Kingma2014AdamAM} and Muon~\cite{liu2025muon}, selecting the learning rate ($lr$) that achieves the lowest PPL from the set $\{1.0\mathrm{e}{-4},\ 5.0\mathrm{e}{-4},\ 1.0\mathrm{e}{-3},\ 2.5\mathrm{e}{-3},\ 5.0\mathrm{e}{-3},\ 1.0\mathrm{e}{-2}\}$. For GaLore~\cite{zhao2024galore} and APOLLO~\cite{zhu2024apollosgdlikememoryadamwlevel_apollo}, we tune the learning rate within $\{1.0\mathrm{e}{-3},\ 2.5\mathrm{e}{-3},\ 5.0\mathrm{e}{-3},\ 1.0\mathrm{e}{-2}\}$. Given that our experimental configuration is similar to that of prior studies, we fine-tune their scaling factors, within $\{0.25, 0.5, 0.75, 1.0\}$ for GaLore, APOLLO, \name, \namec, and adopt the recommended $\alpha=128$ for APOLLO-Mini. For our method, both \name and \namec use a scale factor of $\alpha = 0.25$. All experiments use BF16 precision to reduce memory consumption and are parallelized using Distributed Data Parallel (DDP) across multiple GPUs with gradient synchronization using PyTorch's \cite{paszke2017pytorch} \textit{torch.distributed} framework.

Following the experimental setups of previous works~\cite{zhao2024galore, zhu2024apollosgdlikememoryadamwlevel_apollo}, we use a batch size of 512 and a sequence length of 256 by default. \name is applied to both the MLP and attention modules~\cite{Vaswani2017AttentionIA}. The learning rate is linearly warmed up over the first 10\% of iterations, followed by a cosine decay scheduler for the remainder of training.

\begin{table}[!th]
    \centering
    \setlength{\tabcolsep}{8pt}
    \renewcommand{\arraystretch}{1.2}
    \caption{Hyperparameters ($lr,\alpha$) for pre-training LLaMA models.}
    \resizebox{\linewidth}{!}{\begin{tabular}{l|cc|cc|cc|cc|cc|cc}
    \toprule
    Models & \multicolumn{2}{c|}{\textbf{60M}} & \multicolumn{2}{c|}{\textbf{130M}} & \multicolumn{2}{c|}{\textbf{350M}} & \multicolumn{2}{c|}{\textbf{1B}} & \multicolumn{2}{c|}{\textbf{3B}} & \multicolumn{2}{c}{\textbf{7B}}\\
    \midrule
    Hyperparameters & $lr$ & $\alpha$ & $lr$ & $\alpha$ & $lr$ & $\alpha$ & $lr$ & $\alpha$ & $lr$ & $\alpha$ & $lr$ & $\alpha$\\
    \midrule
    Full-Adam (8-bit) & 5.0e-3 & - & 1.0e-3 & - &1.0e-3 & -& 5.0e-4 &- & 5.0e-4 & - & 5.0e-4 & -\\
    Muon & 5.0e-3 & - & 2.5e-3 & - &1.0e-3 & -& 1.0e-3 &- & 1.0e-3 & - & 5.0e-4 \\
    Adam-mini & 5.0e-3 & - & 1.0e-3 & - & 5.0e-4 & - & 2.5e-4 & - & -& - &- & -\\
    LDAdamW & 5.0e-3 & - & 1.0e-3 & - & 1.0e-3 & - & 5.0e-4 & - \\
    GaLore & 1.0e-2 & 0.25 & 1.0e-2 & 0.25 & 1.0e-2 & 0.25 & 1.0e-2 & 0.25 & 5.0e-3 & 0.25 & 1.0e-2 & 0.25\\
    APOLLO & 1.0e-2 & 1.0 & 1.0e-2 & 1.0 & 1.0e-2 & 1.0 & 1.0e-2 & 1.0 & 5.0e-3 & 1.0 & 1.0e-2 & 1.0\\
    GWT & 1.0e-2 & 0.25 & 1.0e-2 & 0.25 & 1.0e-2 & 0.25 & 1.0e-2 & 0.25 & 5.0e-3 & 0.25 & - & - \\
    \name & 1.0e-2 & 0.25 & 1.0e-2 & 0.25 & 1.0e-2 & 0.25 & 1.0e-2 & 0.25 & 5.0e-3 & 0.25 & 5.0e-3 & 0.25\\
    APOLLO-Mini & 1.0e-2 & 128 & 1.0e-2 & 128 & 1.0e-2 & 128 & 1.0e-2 & 128 &- & -& -& -\\
    \namec & 1.0e-2 & 0.25 & 1.0e-2 & 0.25 & 1.0e-2 & 0.25 & 1.0e-2 & 0.25 & - & - & - & -\\
    \bottomrule
    \end{tabular}}
    \label{tab:my_label}
\end{table}

Additionally, we present a concise summary of the architectural hyperparameters for the LLaMA (Large Language Model Meta AI)~\cite{Touvron2023LLaMAOA}, Qwen~\cite{yang2024qwen2.5}, and RoBERTa-Large (Robustly Optimized BERT Approach)~\cite{liu2019roberta} models used in the main text. These details are provided in Table~\ref{tab:llama_parameter}.

\begin{table*}[!ht]
    \centering
    \setlength{\tabcolsep}{5pt}
    \renewcommand{\arraystretch}{1.1}
    \caption{Architecture hyperparameters of LLaMA for pre-training. Batch size and training data amount are specified in tokens.}
    \label{tab:llama_parameter}
    \begin{tabular}{l|ccccccc}
    \toprule
    Model & Params & Hidden & Intermediate & Heads & Layers & Iteration & Training tokens \\
    \midrule
    \multirow{5}{*}{LLaMA} & 60M & 512 & 1376 & 8 & 8 & 10K & 1.3B \\
    & 130M & 768 & 2048 & 12 & 12 & 20K & 2.6B \\
    & 350M & 1024 & 2736 & 16 & 24 & 60K & 7.8B \\
    & 1B & 2048 & 5461 & 24 & 32 & 100K & 13.1B \\
    & 3B & 2560 & 6848 & 32& 32 & 120K &15.7B \\
    & 7B & 4096 & 11008 & 32 & 32 & 150K & 19.7B \\
    \midrule
    \multirow{3}{*}{Qwen} & 60M & 576 & 1536 & 8 & 12 & 10K & 1.3B \\
    & 130M & 768 & 2816 & 12 & 14 & 20K & 2.6B \\
    & 350M & 1024& 3328 & 16 & 26 & 60K & 7.8B \\
    \midrule
    {RoBERTa}& 355M& 1024 & 4096 & 16 & 24 & - & - \\
    \bottomrule
    \end{tabular}
\end{table*}

For fine-tuning the RoBERTa-large model~\cite{liu2019roberta} on the GLUE benchmark~\cite{Wang2018GLUEAM}, we conduct a hyperparameter sweep over the learning rate for each method in the range $\{1\mathrm{e}{-5},\ 2.5\mathrm{e}{-5},\ 5\mathrm{e}{-5},\ 7.5\mathrm{e}{-5},\ 1\mathrm{e}{-4},\ 1.5\mathrm{e}{-4},\ 2\mathrm{e}{-4},\ 4\mathrm{e}{-4}\}$, and report the best performance for each task.

For the MMLU fine-tuning task, we follow the learning rate search strategy proposed in~\citet{zhu2024apollosgdlikememoryadamwlevel_apollo}. Specifically, we evaluate each method by sweeping the learning rate over the range $\{1\mathrm{e}{-5},\ 2.5\mathrm{e}{-5},\ 5\mathrm{e}{-5},\ 1\mathrm{e}{-4},\ 1.5\mathrm{e}{-4},\ 2\mathrm{e}{-4}\}$, and report the highest average score achieved. A detailed summary of the hyperparameter settings used for fine-tuning \name on GLUE is provided in Table~\ref{tab:fine-tuning_GLUE_hyperparameters}.

\begin{table*}[!th]
    \centering
    \setlength{\tabcolsep}{6pt}
    \renewcommand{\arraystretch}{1.2}
    \caption{Hyperparameters of fine-tuning RoBERTa-base model on GLUE for \name.}
    \label{tab:fine-tuning_GLUE_hyperparameters}
    \begin{tabular}{l|cccccccc}
    \toprule
         \textbf{Hyperparameters} & \textbf{CoLA} & \textbf{STS-B} & \textbf{MRPC} & \textbf{RTE} & \textbf{SST2} & \textbf{MNLI} & \textbf{QNLI} & \textbf{QQP} \\
        \midrule
         Batch Size & 32 & 16 & 16 & 16 & 16 & 16 & 16 & 16 \\
         Epochs & \multicolumn{8}{c}{3} \\
         $lr$ Scheduler & \multicolumn{8}{c}{Cosine} \\
         Warmup Steps & \multicolumn{8}{c}{10\%} \\
         Where & \multicolumn{8}{c}{All} \\
         Level $l$ & \multicolumn{8}{c}{8} \\
         Scale $\alpha$ & \multicolumn{8}{c}{0.25} \\
         Max Seq. Len. & \multicolumn{8}{c}{256} \\
         $lr$ & 2.0e-4 & 1.5e-4 & 1.5e-4 & 1.0e-4 & 5.0e-5 & 2.5e-5 & 2.5e-5 & 2.5e-5  \\
         \bottomrule
    \end{tabular}
\end{table*}

\begin{table*}[!th]
    \centering
    \setlength{\tabcolsep}{10pt}
    \renewcommand{\arraystretch}{1.2}
    \caption{Hyperparameters of fine-tuning different models on the MMLU benchmark for \name.}
    \label{tab:mmul_used_lr}
    \begin{tabular}{l|ccc}
    \toprule
    \textbf{Hyperparameters} & Gemma3-1B & LLaMA3.2-3B & Qwen2.5-7B   \\
    \midrule
    Scale $\alpha$ & \multicolumn{3}{c}{0.25} \\
    $lr$ Scheduler & \multicolumn{3}{c}{Cosine}\\
    Warmup Steps & \multicolumn{3}{c}{10\%} \\
    Epochs & \multicolumn{3}{c}{3}\\
    Batch Size & \multicolumn{3}{c}{16}\\
    Where  & \multicolumn{3}{c}{All}\\
    N-shots & \multicolumn{3}{c}{5} \\
    Cut-off Len. & \multicolumn{3}{c}{2048} \\
    Level $l$ & 7 & 8 & 8 \\
    $lr$ & 2.0e-4 & 2.5e-5 &  1.0e-5 \\
   
    \midrule
    \end{tabular}
\end{table*}

\subsection{Memory Estimation} \label{sec:memory_throughput}
For memory estimation, we follow the general approach proposed in GaLore \cite{zhao2024galore}. Specifically, we isolate memory overhead attributable to model parameters and optimizer states, while excluding other factors such as batch size, PyTorch's memory caching and fragmentation behavior \cite{paszke2017pytorch}, and runtime training configurations.

As a representative example, the LLaMA-60M model contains approximately 58 million parameters. Using BF16 precision (2 bytes per parameter), this yields a model memory footprint of approximately 0.11 GB. The Adam optimizer maintains two auxiliary states—the first-order moment $M$ and second-order moment $V$—each the same size as the model parameters, resulting in an additional 0.23 GB. The total memory usage with Adam is thus approximately 0.34 GB. For our \name method, the computation formula depends on the number of parameters that use \name and those that do not. Specifically, for LLaMA-60M, there are 25.3M parameters that use \name to save memory, and the remaining 32.77M parameters are updated using Adam. In the case of using \name with level $l=2$, the optimizer states that the parameters are calculated as follows:
\begin{itemize}
    \item The optimizer state for the \name parameters:
    $$25.3\ \text{M}\times2\ \text{Bytes}\times2/4=25.3\ \text{MBytes}.$$
    \item  The optimizer state for the Adam parameters:
    $$32.77\ \text{M}\times2\ \text{Bytes}\times2=131.08\ \text{MBytes}.$$
    \item The memory occupied by the model parameters:
    $$116.14\ \text{MBytes}.$$
\end{itemize}
Therefore, the estimated training memory overhead with \name-2 is
$$
25.3\ \text{MBytes}+131.08\ \text{MBytes}+116.14\ \text{MBytes}=272.52\ \text{MBytes}\approx0.27\ \text{GBytes}
$$
For GaLore and APOLLO, we assume the model weight matrix has shape $m \times n$, where $m < n$. In this case, the optimizer state that includes the projection matrices uses
$$
2\times n\times r+m \times r.
$$
If $m > n$, the optimizer state size becomes:
$$
2\times m\times r+n \times r.
$$
The total estimated memory usage can be obtained by evaluating the model training code during runtime. Due to the additional projection matrices maintained by the optimizer, the actual memory overhead of GaLore and APOLLO remains higher than that of \name-2, even when using a rank as low as $1/4$ of the model size. We present the estimated memory consumption in Table~\ref{tab:momory_estimate}.

\begin{table*}[!ht]
    \centering
    \setlength{\tabcolsep}{12pt}
    \renewcommand{\arraystretch}{1.15}
    \caption{Estimated model/optimizer states memory comsumption for pre-training 60M-1.3B models.}
    \label{tab:momory_estimate}
    \begin{tabular}{l|ccccccccc}
    \toprule
        Methods & 60M & 130M &350M & 1B  \\
        \midrule
        Full-Adam &  0.11G/0.23G & 0.25G/0.51G & 0.68G/1.37G & 2.60G/5.20G  \\
        Muon & 0.11G/0.19G & 0.25G/0.38G & 0.68G/0.92G & 2.60G/3.61G\\
        \midrule
        GaLore-1/4 & 0.11G/0.17G & 0.25G/0.32G & 0.68G/0.70G & 2.60G/2.16G \\
        APOLLO-1/4 & 0.11G/0.17G & 0.25G/0.32G & 0.68G/0.70G & 2.60G/2.16G \\
        \cellcolor{blue4}\name-2 & \cellcolor{blue4}0.11G/0.16G  & \cellcolor{blue4}0.25G/0.29G & \cellcolor{blue4}0.68G/0.62G & \cellcolor{blue4}2.60G/1.85G \\
        \midrule
        GaLore-1/8 & 0.11G/0.15G & 0.25G/0.27G & 0.68G/0.55G & 2.60G/1.55G \\
        APOLLO-1/8 & 0.11G/0.15G & 0.25G/0.27G & 0.68G/0.55G & 2.60G/1.55G \\
        \cellcolor{blue4}\name-3 & \cellcolor{blue4}0.11G/0.14G & \cellcolor{blue4}0.25G/0.25G & \cellcolor{blue4}0.68G/0.46G & \cellcolor{blue4}2.60G/1.37G \\
        \midrule
        APOLLO-Mini & 0.11G/0.13G & 0.25G/0.21G & 0.68G/0.32G & 2.60G/0.60G \\
        GWT-Mini & 0.11G/0.13G & 0.25G/0.21G & 0.68G/0.32G & 2.60G/0.60G \\
        \cellcolor{blue2}\namec & \cellcolor{blue2}0.11G/0.13G & \cellcolor{blue2}0.25G/0.21G & \cellcolor{blue2}0.68G/0.32G & \cellcolor{blue2}2.60G/0.60G \\
         \bottomrule
    \end{tabular}
\end{table*}

\subsection{Experiment Enviroments}
All pre-training experiments were conducted using 4 to 32 NVIDIA RTX 3090 GPUs and 4 NVIDIA H100 GPUs with PyTorch version 2.3.0. Fine-tuning experiments were performed on a single NVIDIA A100 40GB GPU within the LLaMA-Factory framework \cite{zheng2024llamafactory}, using PyTorch version 2.6.0. All experiments use a random seed of 42 for data shuffling.

\section{Addtional Experiment Results}
\begin{figure*}[!ht]
    \centering
    \begin{subfigure}[b]{0.49\linewidth}
        \centering
        \includegraphics[width=\linewidth, height=0.7\textwidth]{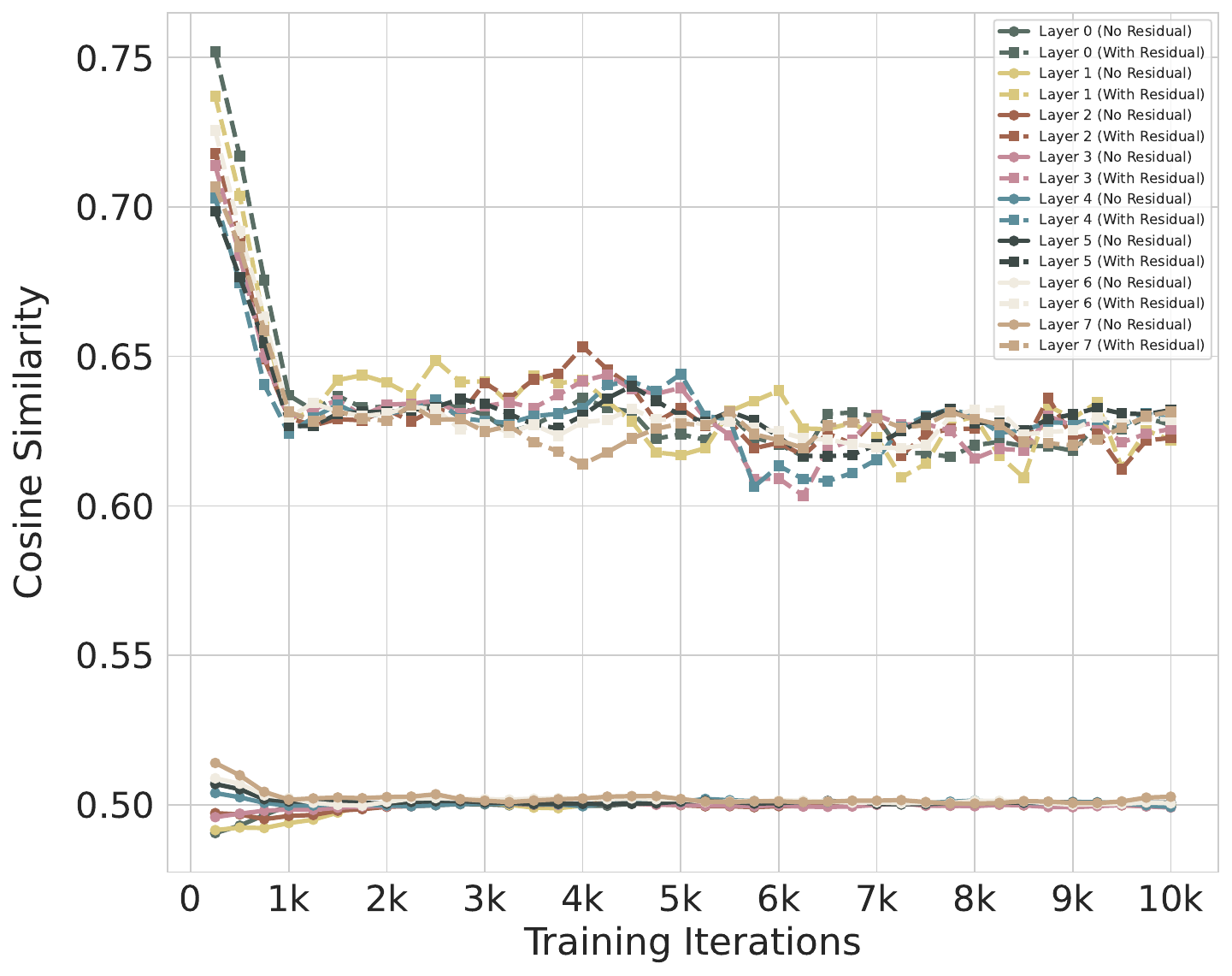}
        \caption{\name $l=2$}
    \end{subfigure}
    \hfill
    \begin{subfigure}[b]{0.49\linewidth}
        \centering
        \includegraphics[width=\linewidth, height=0.7\textwidth]{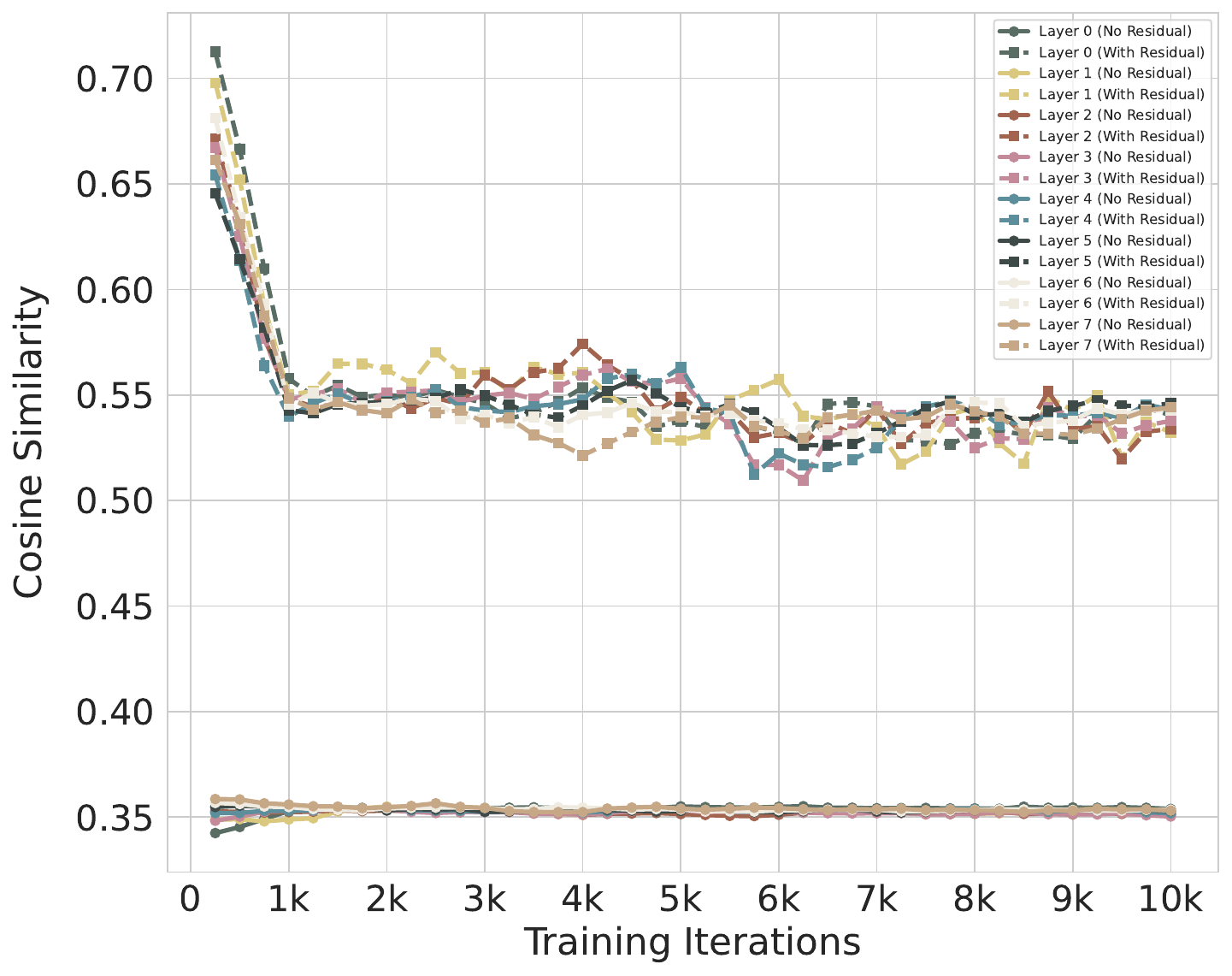}
        \caption{\name $l=3$}
    \end{subfigure}
    \caption{\textbf{Cosine Similarities between the Update Matrices of \name with or without Residual and Adam.} We report the average similarity across all modules within each layer. As observed, the update matrices including the residual term exhibit a higher cosine similarity with Adam’s updates compared to those without the residual. Specifically, for the setting $l=3$, \name updates maintain a cosine similarity greater than $0.5$ with standard Adam, despite retaining only $1/8$ of the original Adam optimizer state.}
    \label{fig:cosine_similarity}
\end{figure*}

\begin{figure*}[!th]
    \centering
    \begin{subfigure}[b]{0.48\linewidth}
        \centering
        \includegraphics[width=\linewidth, height=0.7\textwidth]{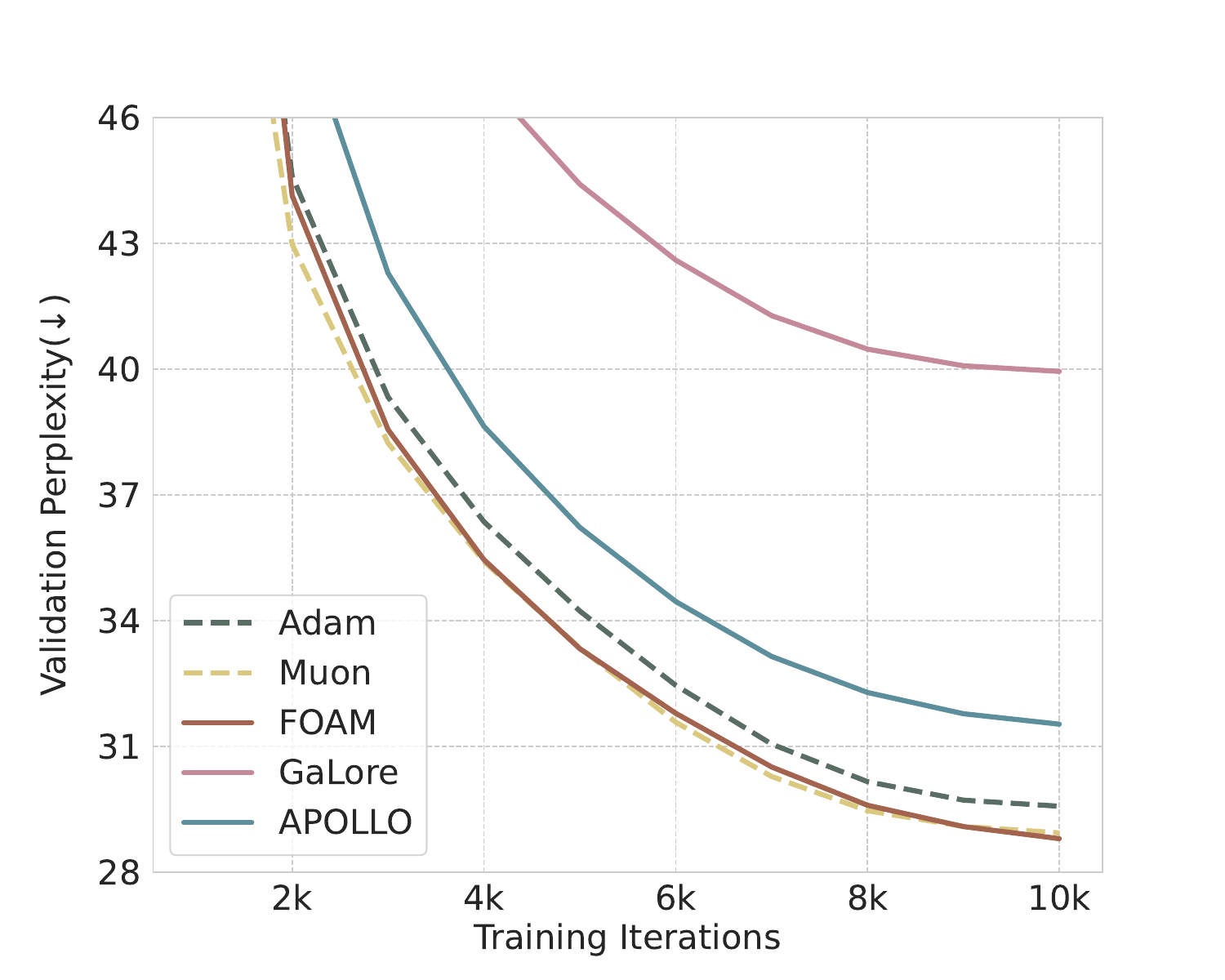}
        \caption{Pre-training LLaMA-60M}
    \end{subfigure}
    \hfill
    \begin{subfigure}[b]{0.48\linewidth}
        \centering
        \includegraphics[width=\linewidth, height=0.7\textwidth]{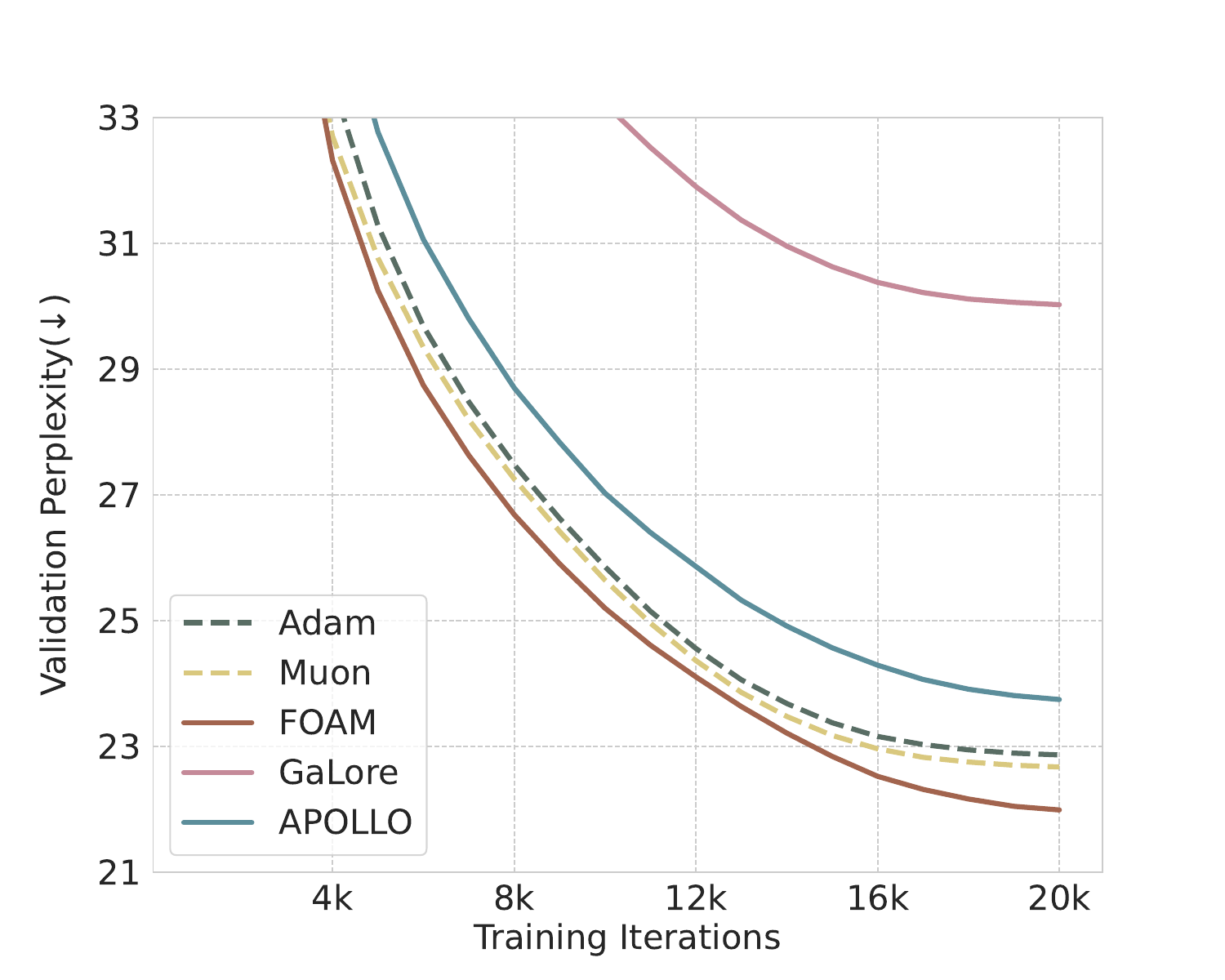}
        \caption{Pre-training LLaMA-130M}
    \end{subfigure}
    \caption{PPL learning curves of pre-training LLaMA-60M and 130M on C4}
    \label{fig:ppl_curve_60m_130m}
\end{figure*}

\subsection{GPT-2 and DeBERTa Experiments}
In this section, we test the performance of the \name we proposed on GPT-2 \cite{radford2019gpt2}, and DeBERTa \cite{he2021debertadecodingenhancedbertdisentangled} models. For each model, we adjust the learning rate within the range \{5e-4, 1e-3, 2.5e-3, 5e-3, 1e-2\}, keeping the memory-efficient scaling factors unchanged, and train for a total of 20k iterations, covering 2.6B tokens. The experimental results are shown in Figure~\ref{fig:loss_deberta_gpt}. \name continues to achieve leading performance on these models, which demonstrates the scalability of \name across other model architectures.

\begin{figure*}[!th]
    \centering
    \begin{subfigure}[b]{0.48\linewidth}
        \centering
        \includegraphics[width=\linewidth, height=0.7\textwidth]{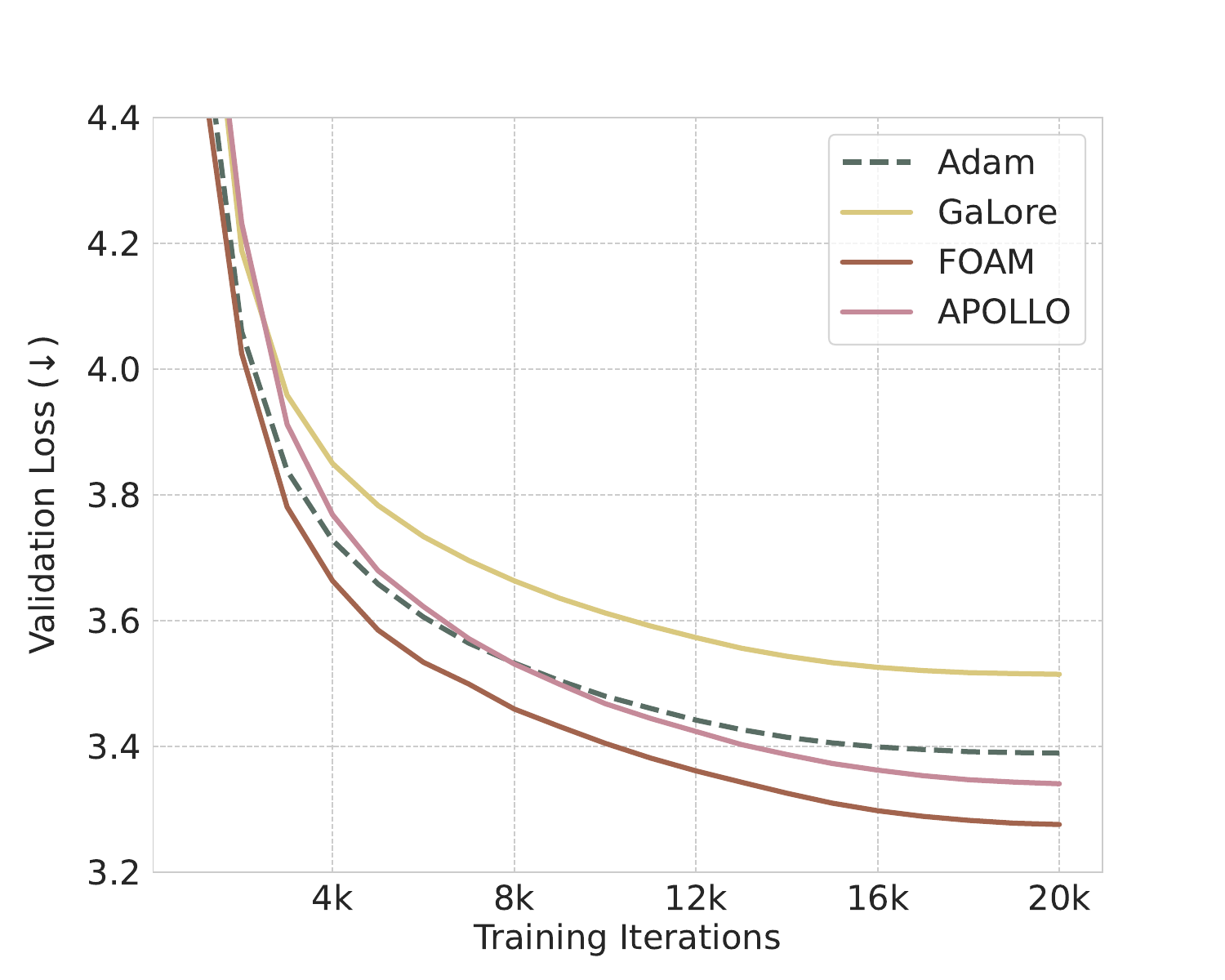}
        \caption{GPT-2}
    \end{subfigure}
    \hfill
    \begin{subfigure}[b]{0.48\linewidth}
        \centering
        \includegraphics[width=\linewidth, height=0.7\textwidth]{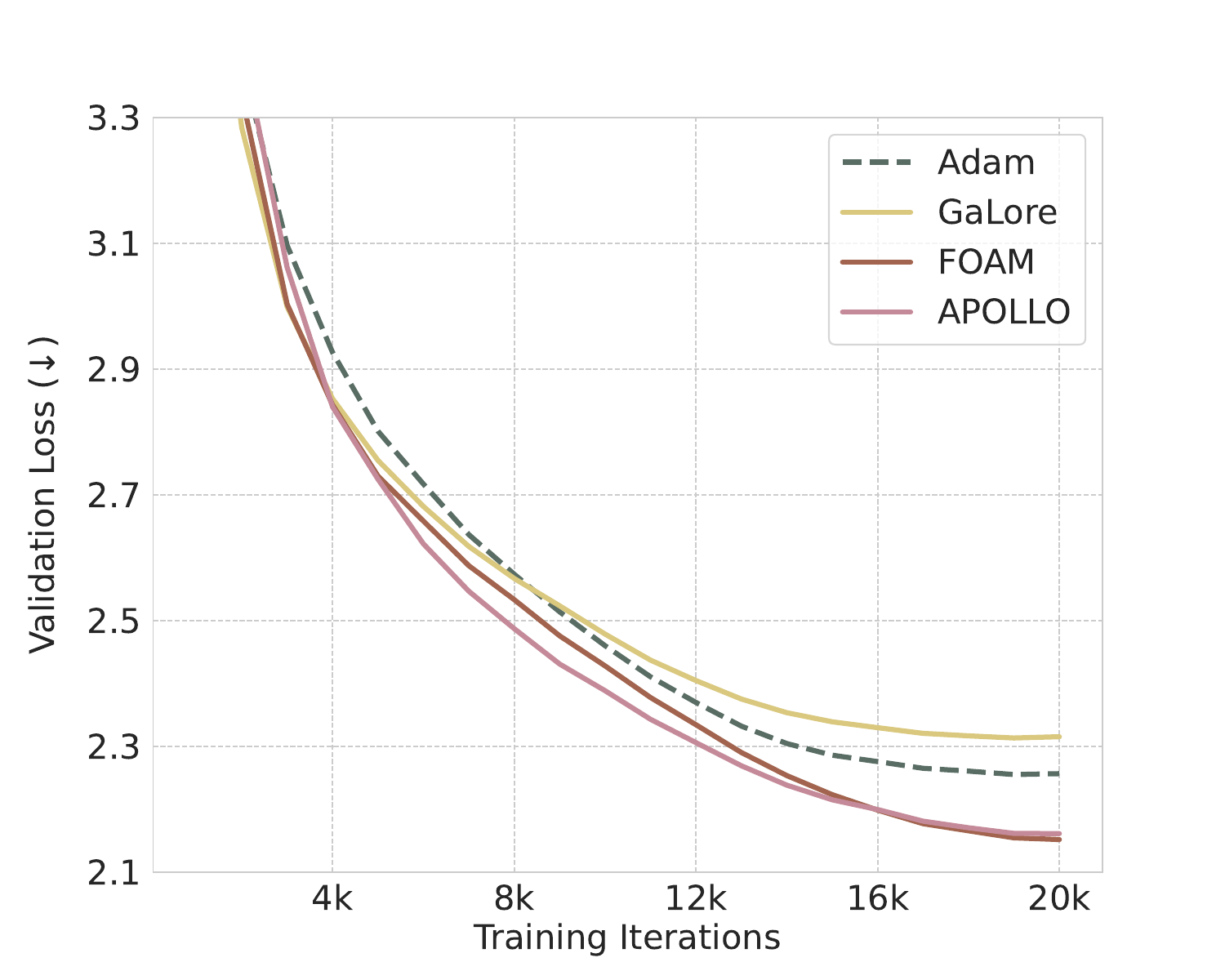}
        \caption{DeBERTa}
    \end{subfigure}
    \caption{\textbf{Pre-training GPT-2 and DeBERTa models on C4.} \textit{(a)}: Pre-training GPT-2-base model. \textit{(b)}: Pre-training DeBERTa-base model. \name continues to achieve leading performance on these models.}
    \label{fig:loss_deberta_gpt}
\end{figure*}


        
        
        
\subsection{Adam with Module-wise Learning Rate}
Notably, in this paper, \name surpasses Adam on validation metrics across many tasks. Similarly, Fira \cite{Chen2024FiraCW} and APOLLO \cite{zhu2024apollosgdlikememoryadamwlevel_apollo} also outperform Adam, even though these memory-efficient methods follow the same hyperparameter tuning strategy as Adam. In this section, we attempt to provide a possible explanation for this phenomenon, namely that it stems from the module-wise optimizer configuration used in current memory-efficient optimizers.

Concretely, most current memory-efficient optimizers \cite{zhao2024galore,jordan2024muon,Chen2024FiraCW,zhu2024apollosgdlikememoryadamwlevel_apollo} use a hybrid optimizer setup—employing vanilla Adam for modules like Embeddings and LayerNorm, while applying compressed-state optimization to Attention and MLP modules, with a scaling factor $\alpha$ used to adjust the learning rates across modules. For modules such as Embeddings and LayerNorm, the learning rate $lr$ is applied, while Attention and MLP modules use $lr \times \alpha$, effectively creating a module-dependent learning-rate scheme. Prior work \cite{wang2025sharpness} has shown that Adam with module-wise learning rates converges faster than standard Adam.

Accordingly, to thoroughly assess the \name optimizer, we begin with an ablation study on the scaling factor $\alpha$; the experimental results are presented in Figure~\ref{fig:ablation_alpha}. The results indicate that \name is in fact not sensitive to this hyperparameter. In all of our pre-training and fine-tuning experiments, FOAM consistently uses a scaling factor $\alpha=0.25$, whereas GaLore requires task-specific tuning. This further demonstrates FOAM’s robustness to this parameter and the general applicability of this setting.

Furthermore, we compare \name with an Adam variant that employs the same block-wise learning-rate scheme, with results presented in Figure~\ref{fig:adam_alpha}. The block-wise learning-rate configuration substantially improves Adam’s performance. Under this setting, \name achieves results comparable to Adam, further demonstrating its effectiveness: with identical hyperparameters, \name performs on par with a well-tuned Adam while requiring less memory and achieving higher training throughput.

\begin{figure*}[!th]
    \centering
    \begin{subfigure}[b]{0.48\linewidth}
        \centering
        \includegraphics[width=\linewidth, height=0.7\textwidth]{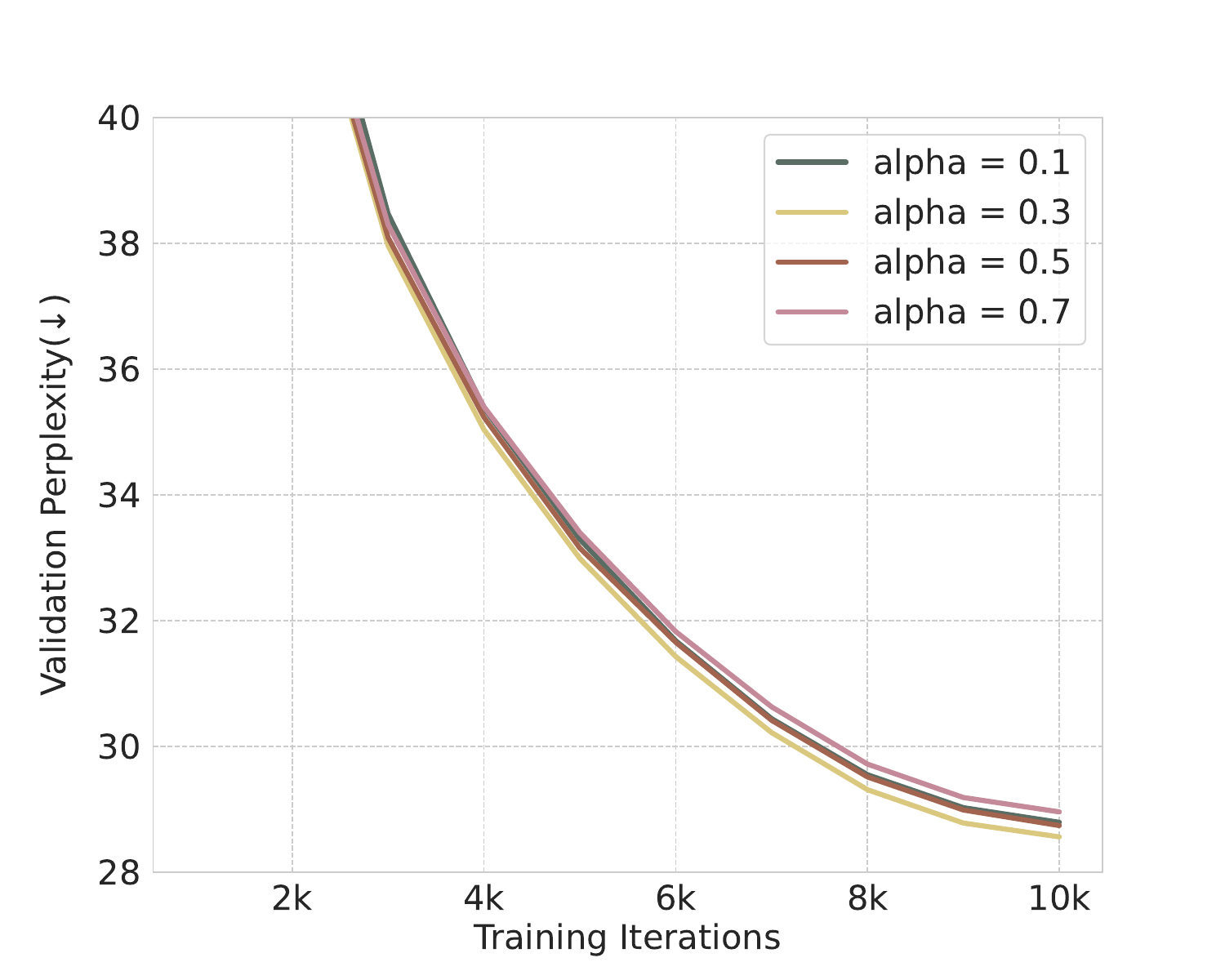}
        \caption{60M model pre-training}
    \end{subfigure}
    \hfill
    \begin{subfigure}[b]{0.48\linewidth}
        \centering
        \includegraphics[width=\linewidth, height=0.7\textwidth]{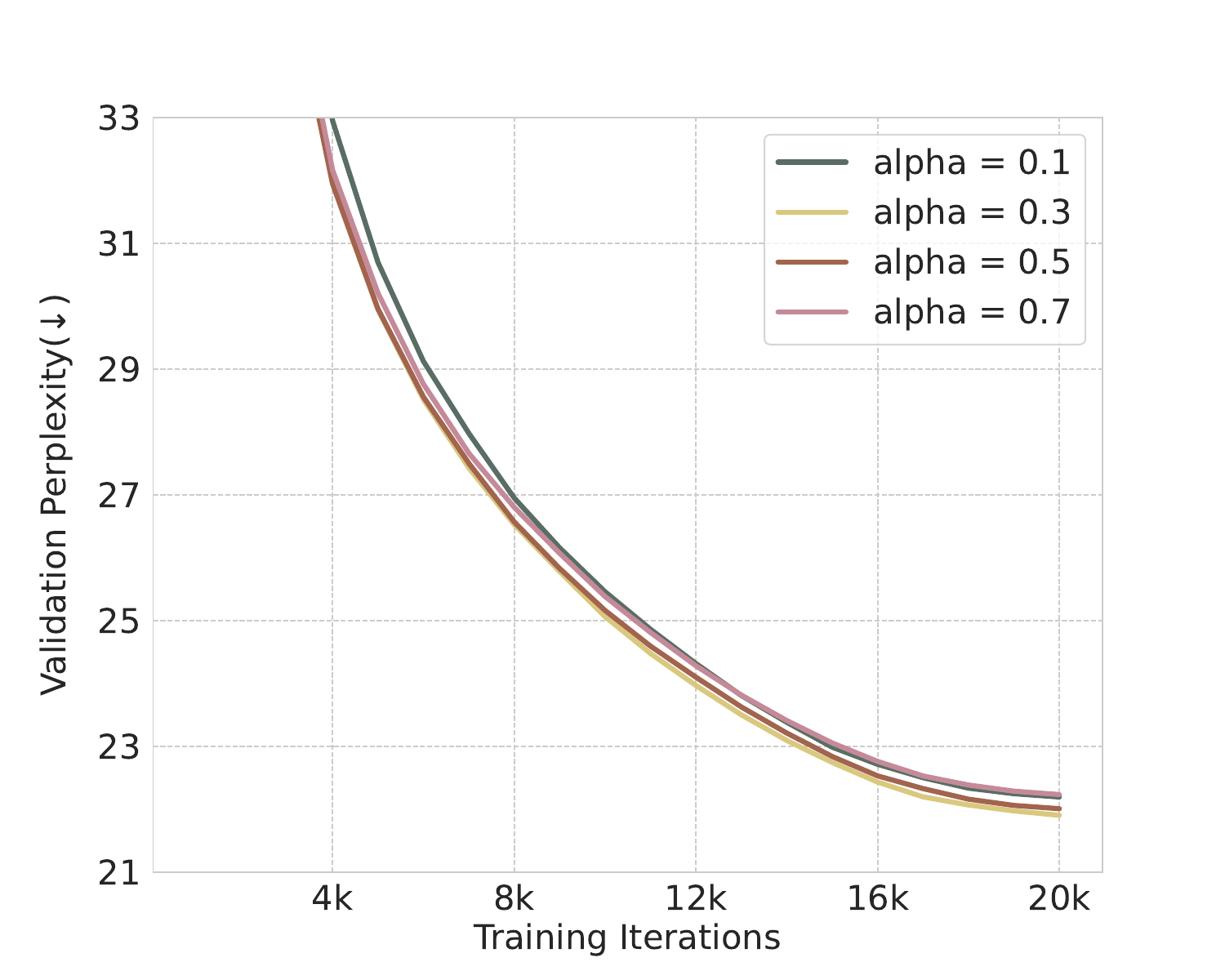}
        \caption{130M model pre-training}
    \end{subfigure}
    \caption{\textbf{Study the effects of $\alpha$ in Algorithm \ref{algo:fold_adam_algo}.} As observed, \name is not sensitive to the choice of $\alpha$ in our tests.}
    \label{fig:ablation_alpha}
\end{figure*}

\begin{figure*}[!th]
    \centering
    \begin{subfigure}[b]{0.48\linewidth}
        \centering
        \includegraphics[width=\linewidth, height=0.7\textwidth]{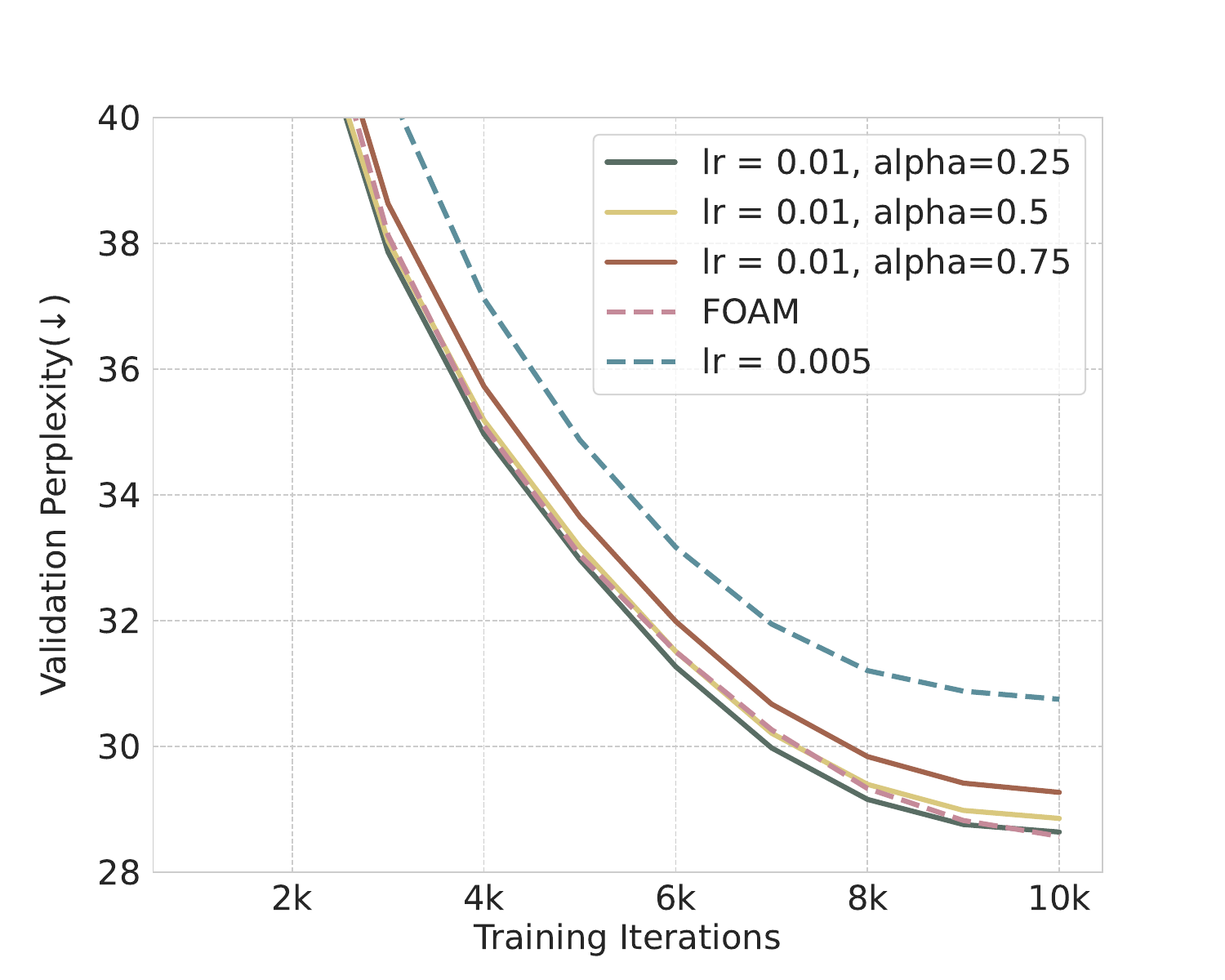}
        \caption{60M model pre-training}
    \end{subfigure}
    \hfill
    \begin{subfigure}[b]{0.48\linewidth}
        \centering
        \includegraphics[width=\linewidth, height=0.7\textwidth]{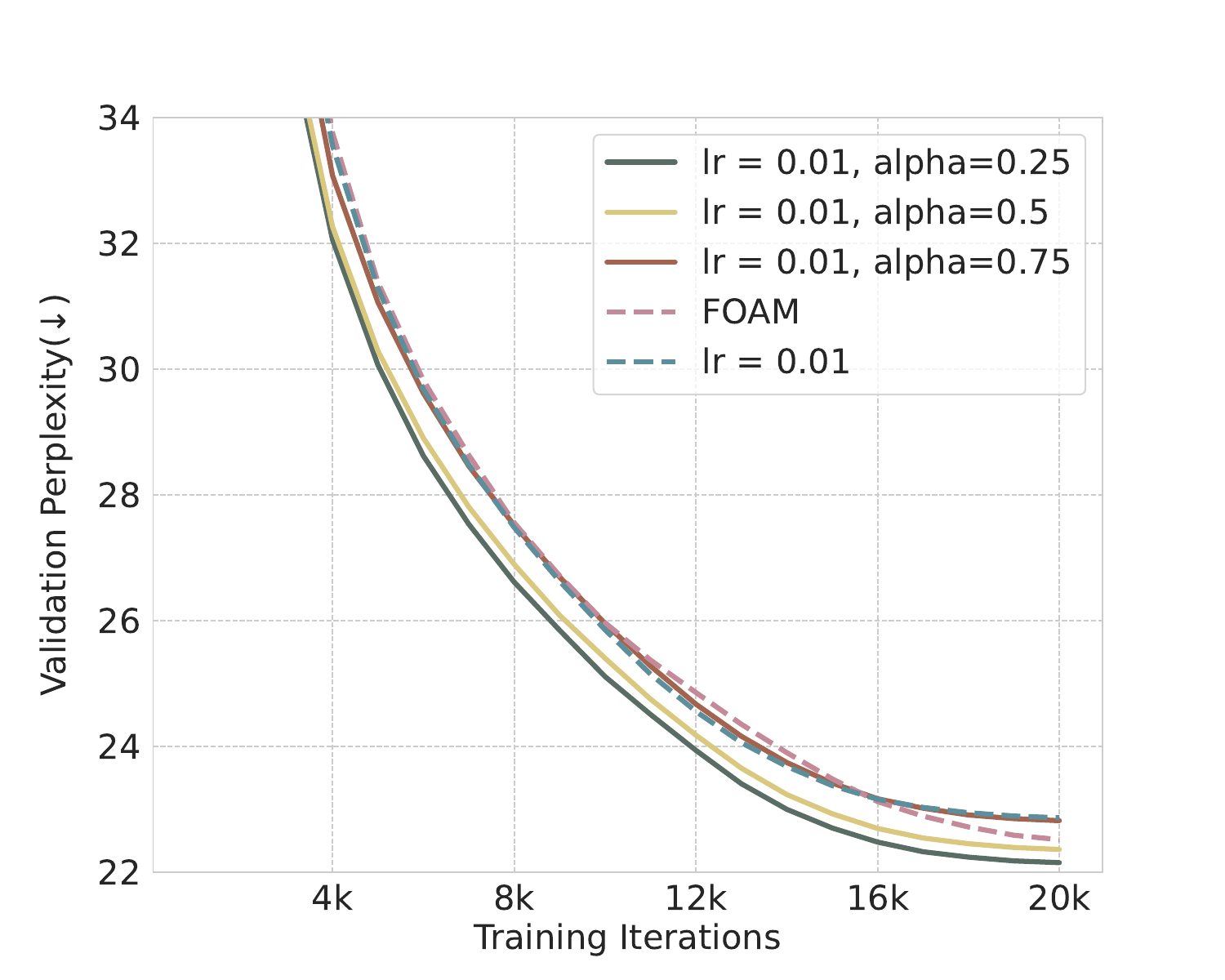}
        \caption{130M model pre-training}
    \end{subfigure}
    \caption{\textbf{Comparing FOAM with Adam employing block-wise learning rates.} Block-wise Adam converges more rapidly and yields a lower final validation loss than the uniform-rate variant. \name’s performance remains on par with that of block-wise Adam.}
    \label{fig:adam_alpha}
\end{figure*}

\section{Future Works}
Several directions remain open for further exploration:
\begin{itemize}
    \item Due to limitations in computational resources, the maximum model size we used to validate the effectiveness of \name was 7B, while experiments on larger models often necessitate more than 100 high-memory GPUs. Thus, the full potential of FOAM for training ultra-large-scale models on significantly larger GPU clusters remains to be elucidated.
    \item The potential of extending \name to encompass additional model architectures, including diffusion models \cite{song2021scorebased} and Vision Transformers (ViT) \cite{dosovitskiy2020vit}. The optimization of such models frequently necessitates the handling of numerous gradients exceeding two dimensions. The adaptation of \name to effectively handle these high-dimensional gradients necessitates further architectural design and is thus left for future work.
    \item Investigating how the \name compression strategy influences the overhead of inter-GPU communication. Notably, \name only performs compression on neighboring elements, thereby eliminating the need for pre-storing the complete gradient. Consequently, \name demonstrates natural compatibility with gradient segmentation methods like ZeRO \cite{rajbhandari2020zero}. Since the minimization of inter-GPU communication overhead falls outside the core scope of this manuscript, we reserve this analysis for subsequent research. 
\end{itemize}

\begin{figure*}[!ht]
    \centering
    \begin{subfigure}[b]{0.33\linewidth}
        \centering
        \includegraphics[width=\linewidth, height=0.65\textwidth]{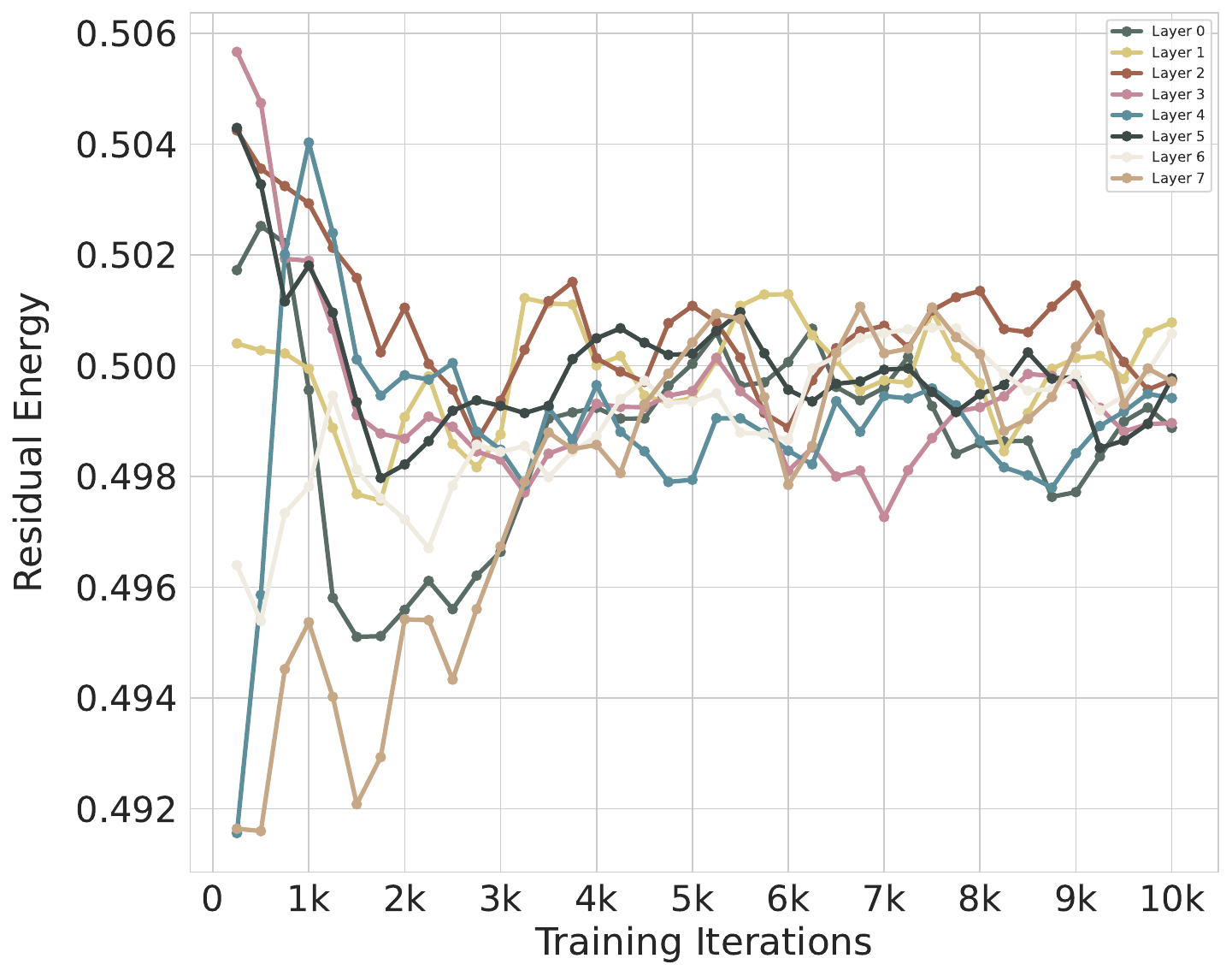}
        \caption{\name $l=1$}
    \end{subfigure}
    \hfill
    \begin{subfigure}[b]{0.33\linewidth}
        \centering
        \includegraphics[width=\linewidth, height=0.65\textwidth]{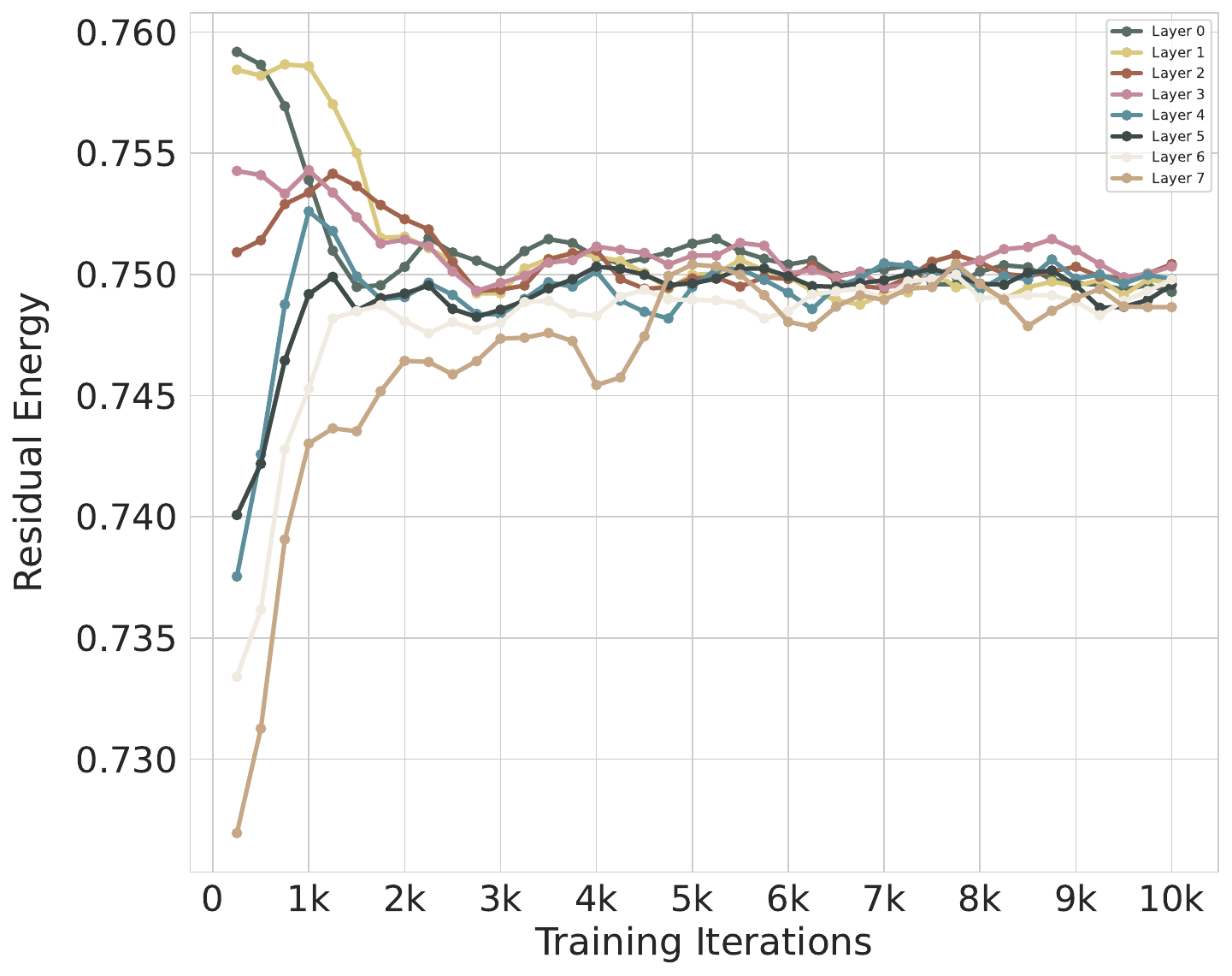}
        \caption{\name $l=2$}
    \end{subfigure}
    \hfill
    \begin{subfigure}[b]{0.33\linewidth}
        \centering
        \includegraphics[width=\linewidth, height=0.65\textwidth]{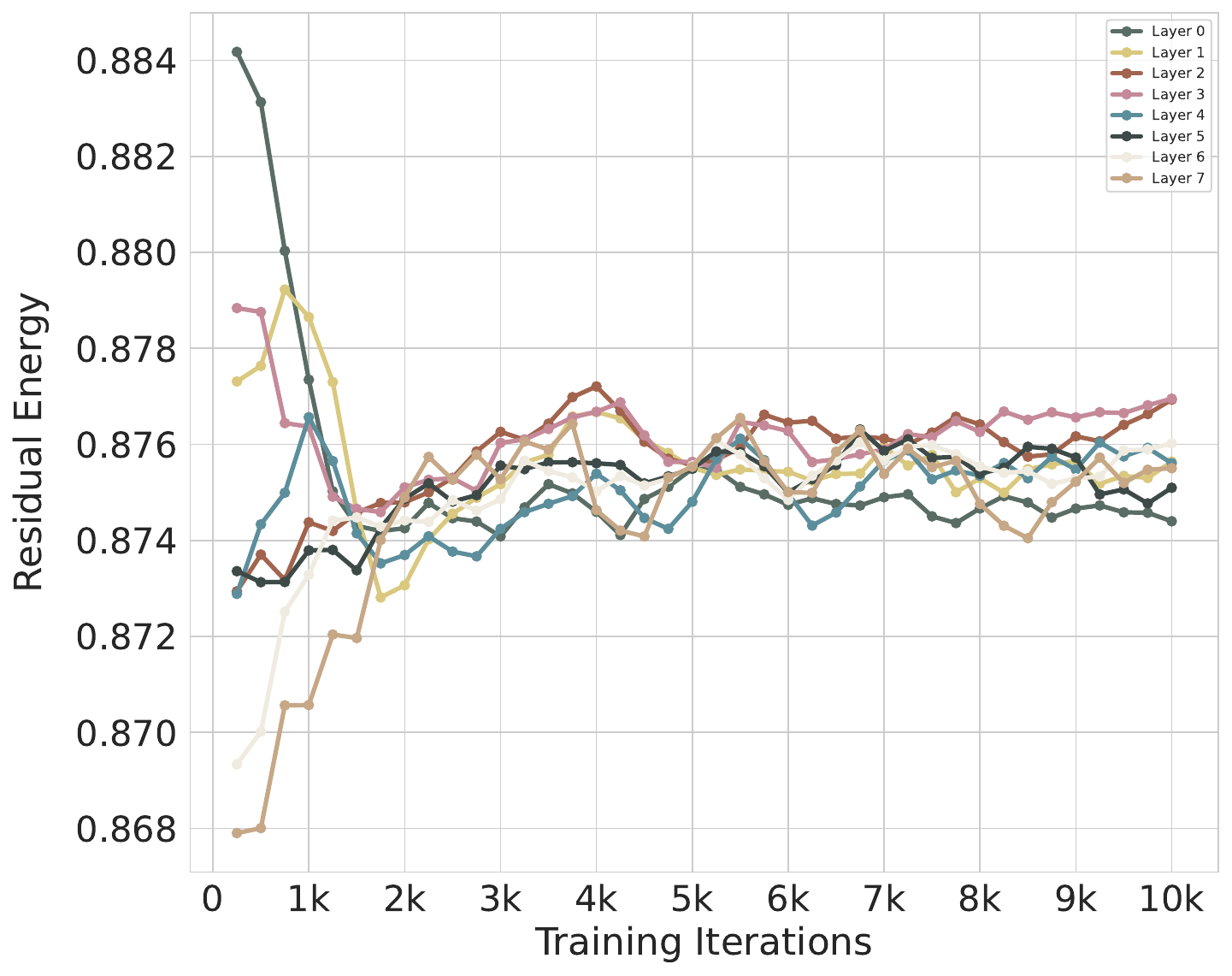}
        \caption{\name $l=3$}
    \end{subfigure}
    \caption{\textbf{The variation of residual energy ratio throughout training with different fold level $l$}. We report the average energy ratio across all modules within each layer.It can be observed that the energy ratio of the residual increases as $l$ grows, with the values concentrating around $1- \frac{1}{2^l}$. This implies that the residual $R_{t}$ captures most of the energy from the original gradient, highlighting the necessity of injecting residuals.} 
    \label{fig:residual_energy}
\end{figure*}

\begin{table*}[!ht]
    \centering
    \setlength{\tabcolsep}{15pt}
    \renewcommand{\arraystretch}{1.1}
    \caption{Recorded memory overhead for different methods on LLaMA-1.3B with a batch size of 32. We train the LLaMA-1.3B model on 32 NVIDIA RTX 3090 (24GB) GPUs.}
    \label{tab:throught_1B}
    \begin{tabular}{l|c|l|c}
    \toprule
        Methods & Memory & Methods & Memory \\
        \midrule
        Full-Adam & 20.61G & GaLore-1/4 & 17.55G \\
        Muon & 18.30G & APOLLO-1/4 & 17.56G \\
        Adam-mini & 18.05G & \cellcolor{blue4}\name-2 & \cellcolor{blue4}17.25G \\
        \midrule
        GaLore-1/8 & 16.74G & GWT-Mini & 16.00G \\
        APOLLO-1/8 & 16.74G & APOLLO-Mini & 16.00G \\
        \cellcolor{blue4}\name-3 & \cellcolor{blue4}16.57G & \cellcolor{blue2}\name-Mini & \cellcolor{blue2}16.00G \\
         \bottomrule
    \end{tabular}
\end{table*}


\begin{figure*}[!th]
    \centering
    \begin{subfigure}[b]{0.48\linewidth}
        \centering
        \includegraphics[width=\linewidth, height=0.75\textwidth]{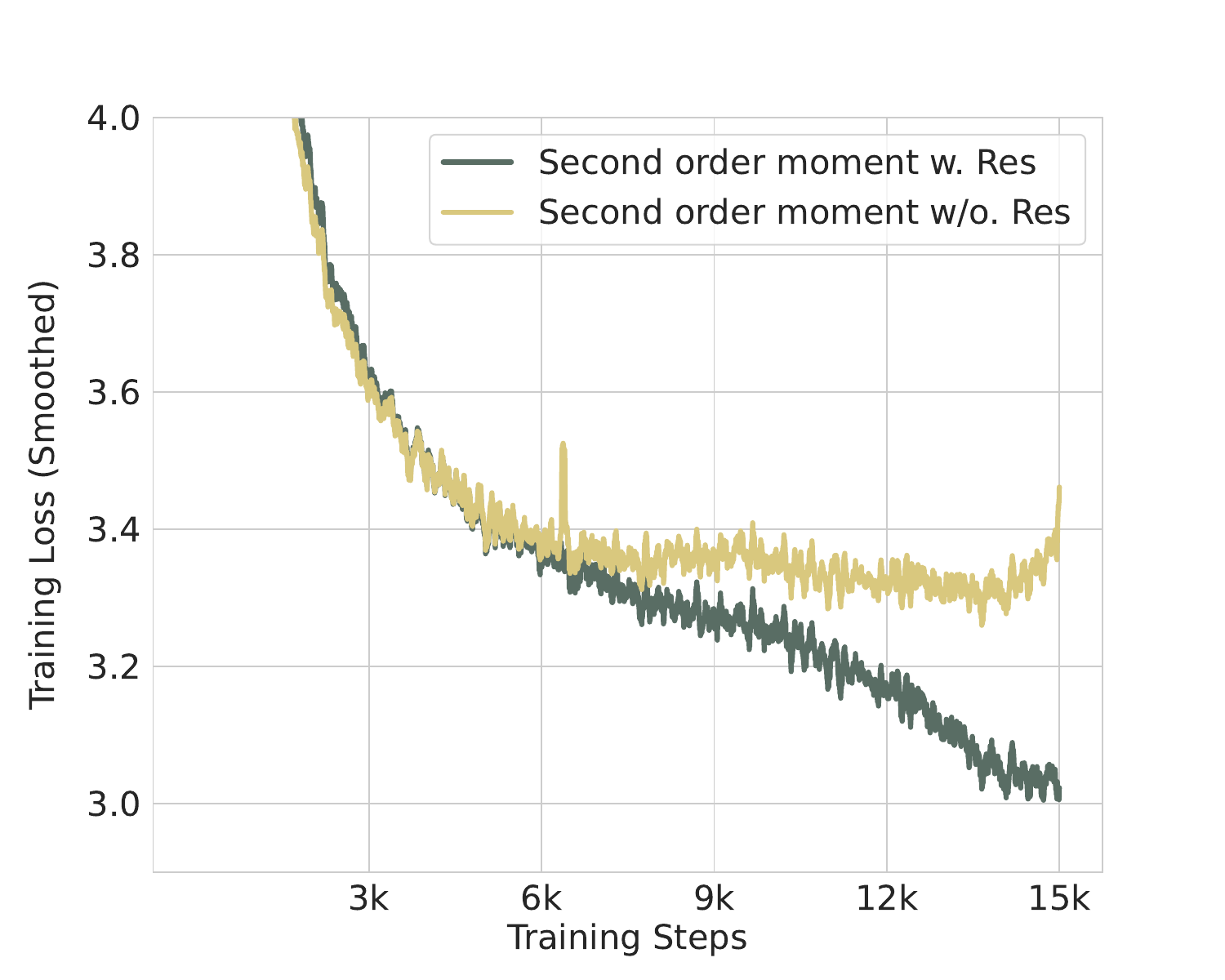}
        \caption{Training Loss on LLaMA-3B}
    \end{subfigure}
    \hfill
    \begin{subfigure}[b]{0.48\linewidth}
        \centering
        \includegraphics[width=\linewidth, height=0.75\textwidth]{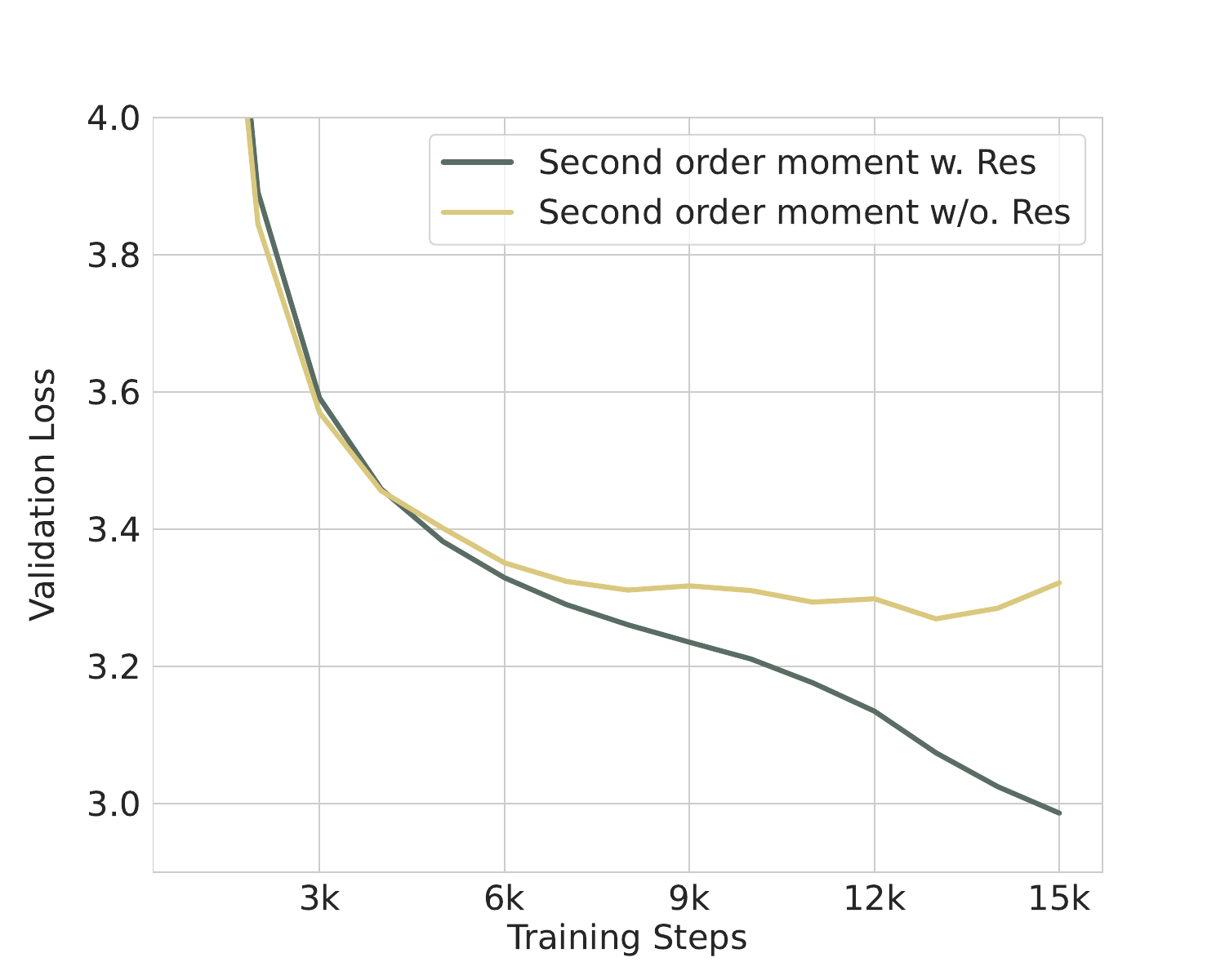}
        \caption{Validation Loss on LLaMA-3B}
    \end{subfigure}
    \caption{\textbf{Ablation study of $R_{t}^2$ on the second-order moment.} The results show that incorporating residuals into the second-order moment leads to a more stable decrease in the training curve. In contrast, without residuals, the training exhibits a faster initial decrease but experiences a rise in loss during later stages.}
    \label{fig:ablation_3b_res}
\end{figure*}

\section{Benchmark Details}
In this work, we evaluate our methods using several widely adopted benchmark datasets that cover both pre-training and downstream tasks.

\begin{itemize}
    \item \textbf{C4 (Colossal Clean Crawled Corpus):}
    
    C4 is a large-scale English-language text corpus derived from Common Crawl data. It has been widely used for language model pretraining due to its scale and linguistic diversity. We follow the preprocessing and filtering steps introduced by the T5 framework to remove boilerplate and low-quality content.

    \item \textbf{MMLU (Massive Multitask Language Understanding):}
    
    MMLU is a comprehensive benchmark covering 57 tasks across various domains, designed to evaluate the reasoning and world knowledge of language models in a zero-shot setting. These tasks span multiple categories, including: STEM (e.g., physics, chemistry, mathematics), Humanities (e.g., history, philosophy, art), Social sciences (e.g., economics, psychology, political science), Other professional and academic subjects (e.g., law, computer science, clinical knowledge)

    \item \textbf{GLUE (General Language Understanding Evaluation):}
    
    GLUE consists of nine NLU tasks that test a model’s general language understanding capabilities: CoLA (linguistic acceptability) \cite{Warstadt2018NeuralNAcola}, STS-B (semantic textual similarity) \cite{Cer2017SemEval2017T1sts-b}, MRPC (paraphrase detection) \cite{Dolan2005AutomaticallyCAmrpc}, RTE (recognizing textual entailment), SST-2 (sentiment analysis) \cite{Socher2013RecursiveDMsst-2}, MNLI (multi-genre natural language inference) \cite{Williams2017ABCmnli}, QNLI (question-answering NLI) \cite{Rajpurkar2018KnowWYqnli}, QQP (duplicate question detection). The broad coverage makes GLUE a standard benchmark for evaluating pre-training BERT models \cite{hu2021lora}.
\end{itemize}


\end{document}